\newcommand{\dee}{\mathrm{d}}
\def\balign#1\ealign{\begin{align}#1\end{align}}
\def\baligns#1\ealigns{\begin{align*}#1\end{align*}}
\def\balignat#1\ealign{\begin{alignat}#1\end{alignat}}
\def\balignats#1\ealigns{\begin{alignat*}#1\end{alignat*}}
\def\bitemize#1\eitemize{\begin{itemize}#1\end{itemize}}
\def\benumerate#1\eenumerate{\begin{enumerate}#1\end{enumerate}}
\newenvironment{talign*}
 {\csname align*\endcsname}
 {\endalign}
\newenvironment{talign}
 {\csname align\endcsname}
 {\endalign}
\def\balignst#1\ealignst{\begin{talign*}#1\end{talign*}}
\def\balignt#1\ealignt{\begin{talign}#1\end{talign}}
\let\originalleft\left
\let\originalright\right
\renewcommand{\left}{\mathopen{}\mathclose\bgroup\originalleft}
\renewcommand{\right}{\aftergroup\egroup\originalright}
\def\tinycitep*#1{{\tiny\citep*{#1}}}
\def\tinycitealt*#1{{\tiny\citealt*{#1}}}
\def\tinycite*#1{{\tiny\cite*{#1}}}
\def\smallcitep*#1{{\scriptsize\citep*{#1}}}
\def\smallcitealt*#1{{\scriptsize\citealt*{#1}}}
\def\smallcite*#1{{\scriptsize\cite*{#1}}}
\def\mbf#1{\mathbf{#1}}
\def\reals{\mathbb{R}} %
\def\<{\left\langle} %
\def\>{\right\rangle}
\newcommand{\boldone}{\mbf{1}} %
\newcommand{\ident}{\mbf{I}} %
\newcommand{\binner}[2]{\left\langle{#1},{#2}\right\rangle} %
\def\maxeig#1{\lambda_{\mathrm{max}}\left({#1}\right)}
\def\mineig#1{\lambda_{\mathrm{min}}\left({#1}\right)}
\def\polylog{\operatorname{polylog}}
\def\Earg#1{\E\left[{#1}\right]}
\def\P{\mbb{P}} %
\def\Parg#1{\P\left({#1}\right)}
\DeclareSymbolFont{rsfs}{U}{rsfs}{m}{n}
\DeclareSymbolFontAlphabet{\mathscrsfs}{rsfs}
\newcommand{\Unif}{\textnormal{Unif}}
\newcommand{\grad}{\nabla}
\newcommand{\deriv}[2]{\frac{\dee {#1}}{\dee {#2}}} %
\providecommand{\tr}{\mathop\mathrm{tr}}
\providecommand{\sign}{\mathop\mathrm{sign}}
\newtheorem{theorem}{Theorem}
\newtheorem{lemma}[theorem]{Lemma}
\newtheorem{corollary}[theorem]{Corollary}
\newtheorem{definition}[theorem]{Definition}
\renewenvironment{proof}{\noindent\textbf{Proof.}\hspace*{.3em}}{\qed \vspace{.1in}}
\newenvironment{proof-sketch}{\noindent\textbf{Proof Sketch}
  \hspace*{1em}}{\qed\bigskip\\}
\newenvironment{proof-idea}{\noindent\textbf{Proof Idea}
  \hspace*{1em}}{\qed\bigskip\\}
\newenvironment{proof-of-lemma}[1][{}]{\noindent\textbf{Proof of Lemma {#1}}
  \hspace*{1em}}{\qed\\}
\newenvironment{proof-of-theorem}[1][{}]{\noindent\textbf{Proof of Theorem {#1}}
  \hspace*{1em}}{\qed\\}
\newenvironment{proof-attempt}{\noindent\textbf{Proof Attempt}
  \hspace*{1em}}{\qed\bigskip\\}
\newenvironment{remark}{\noindent\textbf{Remark.}
  \hspace*{0em}}{\smallskip}%
\newtheorem{proposition}[theorem]{Proposition}
\newtheorem{assumption}{Assumption}
\newtheorem*{assumption*}{Assumptions}
\theoremstyle{definition}
\newcommand*\samethanks[1][\value{footnote}]{\footnotemark[#1]}
\def\bw{\boldsymbol{w}}
\def\bW{\boldsymbol{W}}
\def\bu{\boldsymbol{u}}
\def\btheta{\boldsymbol{\theta}}
\def\bubar{\overline{\bu}}
\def\bwbar{\overline{\bw}}
\def\bx{\boldsymbol{x}}
\def\bb{\boldsymbol{b}}
\def\ba{\boldsymbol{a}}
\def\bv{\boldsymbol{v}}
\def\bnu{\boldsymbol{\nu}}
\def\bz{\boldsymbol{z}}
\def\bzbar{\overline{\bz}}
\def\bDelta{\boldsymbol{\Delta}}
\def\bmu{\boldsymbol{\mu}}
\def\bSigma{\boldsymbol{\Sigma}}
\def\gnorm#1{\lVert {#1}\rVert_{\gamma}}
\def\gbinner#1#2{\langle {#1}, {#2}\rangle_{\gamma}}
\def\Eargs#1#2{\E_{#1}\left[#2\right]}
\def\abs#1{\left| {#1} \right|}
\def\NormRisk{\mathcal{R}}
\def\ZetaCoeff{\zeta_{\phi,g}}
\def\PsiCoeff{\psi_{\phi,g}}
\def\TildeZetaCoeff{\tilde{\zeta}_{\phi,g}}
\def\TildePsiCoeff{\tilde{\psi}_{\phi,g}}
\begin{document}

\title{Gradient-Based Feature Learning under Structured Data}

\author{
Alireza Mousavi-Hosseini\thanks{University of Toronto and Vector Institute for Artificial Intelligence. \texttt{\{mousavi,erdogdu\}@cs.toronto.edu}.}
\ \ \ \ \ \
Denny Wu\thanks{New York University and Flatiron Institute. \texttt{dennywu@nyu.edu}.}
\ \ \ \ \ \
Taiji Suzuki\thanks{University of Tokyo and RIKEN Center for Advanced Intelligence Project. \texttt{taiji@mist.i.u-tokyo.ac.jp}.}
\ \ \ \ \ \
Murat A.~Erdogdu\samethanks[1]
}

\maketitle

\vspace{-4mm}

\begin{abstract}
Recent works have demonstrated that the sample complexity of gradient-based learning of single index models, i.e.\ functions that depend on a 1-dimensional projection of the input data, is governed by their \emph{information exponent}. However, these results are only concerned with isotropic data, while in practice the input often contains additional structure which can implicitly guide the algorithm. In this work, we investigate the effect of a \textit{spiked covariance} structure and reveal several interesting phenomena. First, we show that in the anisotropic setting, the commonly used spherical gradient dynamics may fail to recover the true direction, even when the spike is perfectly aligned with the target direction. Next, 
we show that appropriate weight normalization that is reminiscent of \emph{batch normalization} can alleviate this issue. Further, by exploiting the alignment between the (spiked) input covariance and the target, we obtain improved sample complexity compared to the isotropic case. In particular, under the spiked model with a suitably large spike, the sample complexity of gradient-based training can be made independent of the information exponent while also outperforming lower bounds for 
rotationally invariant kernel methods.
\end{abstract}

\vspace{-2mm}

\section{Introduction}

A fundamental feature of neural networks is their \textit{adaptivity} to learn unknown statistical models. For instance, when the learning problem exhibits certain low-dimensional structure or sparsity, it is expected that neural networks optimized by gradient-based algorithms can efficiently adapt to such structure via feature/representation learning. A considerable amount of research has been dedicated to understanding this phenomenon under various assumptions and to demonstrate the superiority of neural networks over non-adaptive methods such as kernel models \cite{ghorbani2019limitations,wei2019regularization,ba2019generalization,li2020learning,abbe2022merged,ba2022high-dimensional,damian2022neural,telgarsky2023feature,mousavi2023neural}.

A particular relevant problem setting for feature learning is the estimation of single index models,  where the response
$y\in\reals$ depends on the input $\bx\in\reals^d$ via $y=g(\binner{\bu}{\bx}) + \epsilon$, where  $g:\reals\to\reals$ is the nonlinear link function
and $\bu$ is the unit target direction. Here, learning corresponds to recovering the unknowns $\bu$ and $g$, which requires the model to extract and adapt to the low-dimensional target direction. 
Recent works have shown that the sample complexity is determined by certain properties of the link function $g$. 
In particular, the complexity of gradient-based  optimization is captured by the \emph{information exponent} of $g$ introduced by \cite{benarous2021online}. Intuitively, a larger information exponent $s$ corresponds to a more complex $g$ (for gradient-based learning), and it has been proven that when the input is isotropic $\bx\sim \mathcal{N}(0,\ident_d)$, gradient flow can learn the single index model with 
$\tilde{\mathcal{O}}(d^{s})$ sample complexity~\cite{bietti2022learning}.

In practice, however, real data always exhibits certain structures such as low intrinsic dimensionality, and isotropic data
assumptions fail to capture this fact.
In statistics methodology, it is known that
the directions along which the input $\bx$ has high variance are often good predictors of the target $y$~\cite{hastie2009elements}; indeed, this is the main reason principal component analysis is used in pre-training~\cite{james2013introduction}.
A fundamental model that captures such a structure is the \emph{spiked matrix model} in which $\bx \sim \mathcal{N}(0,\ident_d + \kappa \btheta\btheta^\top)$ for some unit direction $\btheta\in\reals^d$ and $\kappa >0$~\cite{johnstone2001distribution}.
Along the direction $\btheta$, data has higher variability and predictive power. 
In single index models, 
such predictive power translates to a non-trivial alignment between the vectors $\bu$ and $\btheta$ ---
our focus is to investigate the effect of such alignment on the sample complexity of gradient-based training. 

\subsection{Contributions: learning single index models under spiked covariance}

\begin{wrapfigure}{r}{0.41\textwidth}  
\vspace{-4.mm} 
\centering  
\includegraphics[width=0.4\textwidth,trim={0 0 11cm 0},clip]{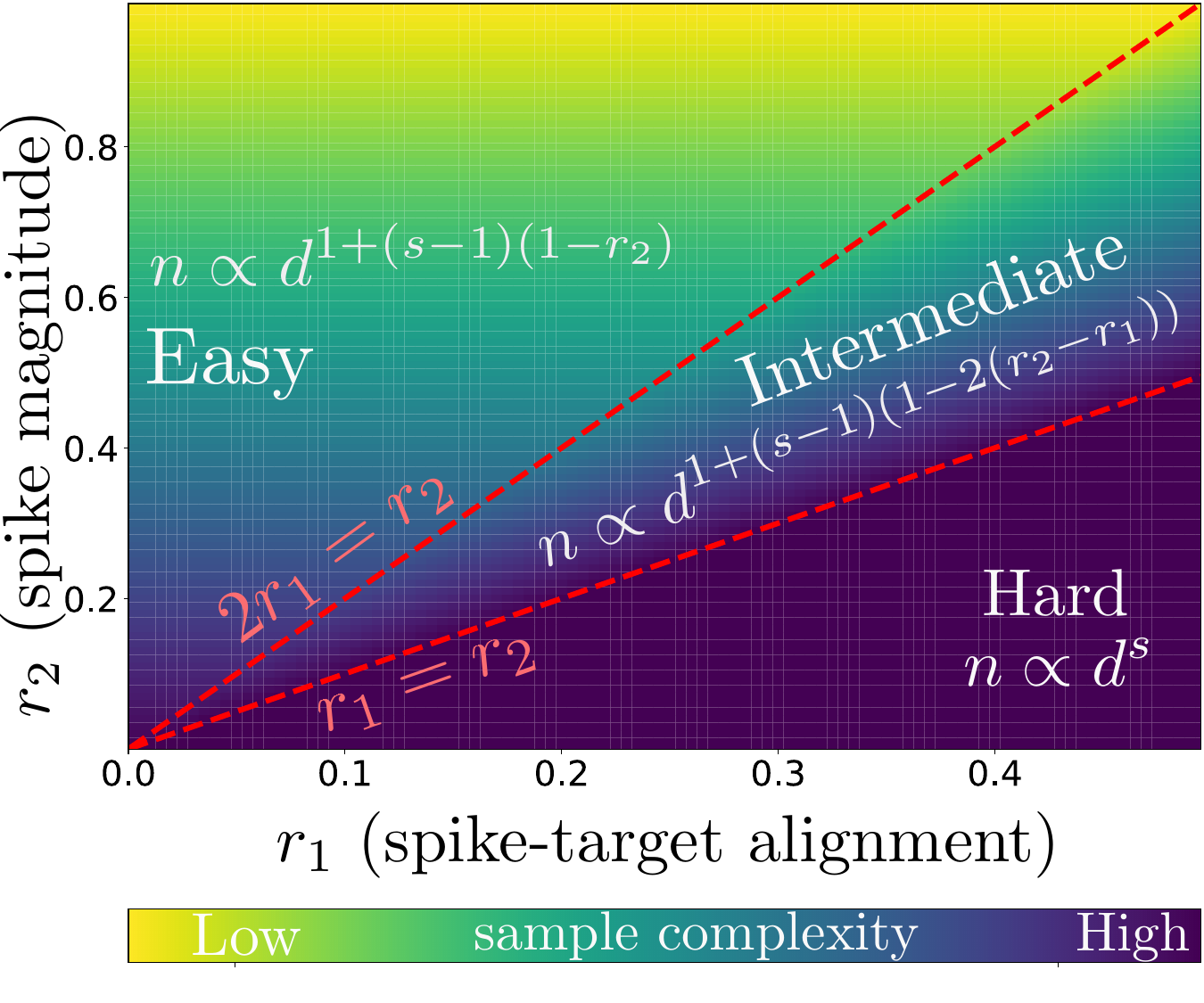}  
\vspace{-2.mm} 
\caption{\small Sample complexity to learn $\bu$ and $g$ under the spiked model. Smaller $r_1$ denotes a better spike-target alignment, while larger $r_2$ denotes a larger spike magnitude. The sample complexities are based on Corollary \ref{cor:spiked-prec}.}
\label{fig:phase_transition}
 \vspace{-10.5mm}   
\end{wrapfigure}
In this paper, we study the sample complexity of learning a single index model using a two-layer neural network and show that it is determined by an interplay between 
\begin{itemize}[leftmargin=5.5mm,topsep=0.5mm, itemsep=0.5mm]
\item \textbf{spike-target alignment}:
$\binner{\bu}{\btheta} \asymp d^{-r_1}$, $r_1 \in [0,1/2]$,
\item \textbf{spike magnitude}: $\kappa \asymp d^{r_2}$, for $r_2 \in [0,1]$. 
\end{itemize}
Our contributions can be summarized as follows.
\begin{itemize}[leftmargin=5.mm,topsep=0.5mm, itemsep=0.4mm]
    \item[1.] 
    We show that even in the case of perfect spike-target alignment ($r_1=0$),
    the spherical gradient flow commonly employed in recent literature (see e.g.\ \cite{benarous2021online,bietti2022learning})
    cannot recover the target direction for moderate spike magnitudes in the population limit.
    The failure of this covariance-agnostic procedure 
    under anisotropic structure insinuates the necessity of an appropriate covariance-aware normalization to effectively learn the single index model.

    \item[2.] We show that 
    a covariance-aware normalization that resembles \emph{batch normalization}
    resolves this issue.
    Indeed, the resulting gradient flow
    can successfully recover the target direction $\bu$ in this case,
    and depending on the amount of spike-target alignment, the sample complexity can significantly improve compared to the isotropic case.
    \end{itemize}

\vspace{-.05in}
\begin{itemize}[leftmargin=5mm,topsep=0.5mm, itemsep=0.8mm]
    \item[3.] Under the spiked covariance model, we prove a three-stage phase transition for the sample complexity depending on the quantities $r_1$ and $r_2$. For a suitable direction and magnitude of the spike, the sample complexity can be made $\tilde{\mathcal{O}}(d^{3 + \nu})$ for any $\nu > 0$ which is independent of 
    the information exponent $s$. This should be compared against the known complexity of $\tilde{\mathcal{O}}(d^s)$ under isotropic data.
    
    \item[4.] We finally show that preconditioning the training dynamics with the inverse covariance improves the sample complexity. This is particularly significant for the spiked covariance model where $\tilde{\mathcal{O}}(d^{3+\nu})$ sample complexity can be reduced to $\tilde{\mathcal{O}}(d^{1+\nu})$ for any $\nu>0$ which is almost linear in dimension. The three-stage phase transition also emerges, as illustrated in Figure~\ref{fig:phase_transition}: in the ``hard'' regime, the complexity remains $\tilde{\mathcal{O}}(d^{s})$ regardless of the magnitude and direction of the spike, while in the ``easy'' regime the complexity only depends on the spike magnitude $r_2$ and not its direction, hence it is independent of $r_1$. 
    The ``intermediate'' regime interpolates between these two; smaller $r_1$ and larger $r_2$ improve the sample complexity.
\end{itemize}

The rest of the paper is organized as follows.
We discuss the notation and the related work in the remainder
of this section. We provide preliminaries on the statistical model and the training procedure in Section~\ref{sec:prelim}, 
and provide a negative result on the covariance-agnostic gradient flow in Section~\ref{sec:spherical}. Our main sample complexity result on a single neuron is presented in Section~\ref{sec:finite_sample}. We provide our results on multi-neuron neural networks in
Section~\ref{sec:imp} and also discuss extensions such as preconditioning and its implications. We provide a summary of our proof techniques in Section~\ref{sec:overview} and conclude with an overview and future work in Section~\ref{sec:conc}.

\textbf{Notation.}
We use $\binner{\cdot}{\cdot}$ and $\norm{\cdot}$ to denote Euclidean inner product and norm. For matrices, $\norm{\cdot}$ denotes the usual operator norm, and $\maxeig{\cdot}$ and $\mineig{\cdot}$ denote the largest and smallest eigenvalues respectively. We reserve $\gamma$ for the standard Gaussian distribution on $\reals$, and let $\gnorm{\cdot}$ denote the $L^2(\gamma)$ norm. $\mathbb{S}^{d-1}$ is the unit $d$-dimensional sphere. For quantities $a$ and $b$, we will use $a \lesssim b$ to convey there exists a constant $C$ (a universal constant unless stated otherwise, in which case may depend on polylogarithmic factors of $d$) such that $a \leq Cb$, and $a \asymp b$ signifies that $a \lesssim b$ and $b \lesssim a$.

\subsection{Further related work}
\textbf{Non-Linear Feature Learning with Neural Networks.}
Recently, two popular scaling regimes of neural networks have emerged for theoretical studies. A large initialization variance leads to the \textit{lazy training} regime, where the weights do not move significantly, and the training dynamics is captured by the neural tangent kernel (NTK) \cite{jacot2018neural,chizat2019lazy}. However, there are many instances of function classes that are efficiently learnable by neural networks and not efficiently learnable by the NTK \cite{yehudai2019power, ghorbani2019limitations}. Under a smaller initialization scaling, gradient descent on infinite-width neural networks becomes equivalent to Wasserstein gradient flow on the space of measures, known as the mean-field limit \cite{chizat2018global, rotskoff2018neural, mei2019mean-field, nitanda2022convex, chizat2022mean}, which can learn certain low-dimensional target functions efficiently \cite{wei2019regularization,abbe2022merged,hajjar2022symmetries,arnaboldi2023high}. 

As for neural networks with smaller width, recent works have shown that a two-stage feature learning procedure can outperform the NTK when the data is sampled uniformly from the hypercube \cite{barak2022hidden} or isotropic Gaussian \cite{damian2022neural, ba2022high-dimensional, bietti2022learning, mousavi2023neural, abbe2023sgd}. 
However, these results only consider the isotropic case and do not take into account the additional structure that might be present in the covariance matrix of the input data. Two notable exceptions are \cite{ghorbani2020when,refinetti2021classifying}, where the authors analyzed a spiked covariance and Gaussian mixture data, respectively. Our setting is closer to \cite{ghorbani2020when}, however, they do not provide optimization guarantees through gradient-based training. 
Finally, in a companion work~\cite{ba2023learning}, we zoom into the setting where the spike and target direction are perfectly aligned ($r_1=0$), and prove learnability in the $n\asymp d$ regime for both kernel ridge regression and two-layer neural network. 

\textbf{Learning Single Index Models.}
The problem of estimating the relevant direction in a single index model is classical in statistics \cite{li1989regression}, with efficient dedicated algorithms (\cite{kakade2011efficient, chen2020learning} among others). However, these algorithms are non-standard and instead, we are concerned with standard iterative algorithms like training neural networks with gradient descent. Recently, \cite{dudeja2018learning} considered an iterative optimization procedure for learning such models with a polynomial sample complexity that is controlled by the smoothness of the link function. \cite{ba2022high-dimensional} considered the effect of taking a single gradient step on the ability of a two-layer neural network to learn a single index model, and \cite{bietti2022learning} considered training a special two-layer neural network architecture where all neurons share the same weight with gradient flow. Finally, \cite{mousavi2023neural} considered learning a monotone single index model using a two-layer neural network with online SGD. However, these works only consider the isotropic Gaussian input, and the effect of anisotropy in the covariance matrix when training a neural network to learn a single index model has remained unclear.

\textbf{Training a Single Neuron with Gradient Descent.}
When training the first layer, we consider a setting where there is only one effective neuron. A large body of works exists on training a single neuron using variants of gradient descent. In the realizable setting (i.e.\ identical link and activation), the typical assumptions on the activation correspond to information exponent 1 as the activations are required to be monotone or have similar properties, see e.g.\ \cite{soltanolkotabi2017learning, yehudai2020learning, diakonikolas2022learning}. In the agnostic setting, \cite{frei2020agnostic} considered initializing from the origin which is a saddle point for information exponent larger than 1. \cite{awasthi2022agnostic} also considered the agnostic learning of a ReLU activation, albeit their sample complexity is not explicit other than being polynomial in dimension.

\section{Preliminaries: Statistical Model and Training Procedure}
\label{sec:prelim}
For a $d$-dimensional input $\bx$ 
and a link function
$g \in L^2(\gamma)$, consider the single index model
\begin{equation}\label{eq:sim}
   y = g\left(\tfrac{{\Large\binner{\bu}{\bx}}}{\norm{\bSigma^{1/2}\bu}}\right) + \epsilon \ \ \text{ with }\ \ \bx \sim \mathcal{N}(0,\bSigma),
\end{equation}
where $\epsilon$ is a zero-mean noise with $\mathcal{O}(1)$ sub-Gaussian norm and $\bu \in \mathbb{S}^{d-1}$. Learning the model~\eqref{eq:sim} corresponds to approximately recovering the unknown link $g$ and the unknown direction $\bu$. Note that a normalization is needed to make this problem well-defined; without loss of generality, we write $\binner{\bu}{\bx}/\norm{\bSigma^{1/2}\bu}$ to ensure that
the input variance and the scaling of $g$ both remain independent of the conditioning of $\bSigma$.
For this learning task, we will use a two-layer neural network of the form
\begin{equation}\label{eq:nn}
    \hat{y}(\bx;\bW,\ba,\bb) \coloneqq \sum_{i=1}^{m}a_i\phi(\binner{\bw_i}{\bx} + b_i),
\end{equation}
 where $\bW = \{ \bw_i\}_{i=1}^m$ is the $m\times d$ matrix whose rows corresponds to first-layer weights $\bw_i$, $\ba=\{a_i\}_{i=1}^m$ denote the second-layer weights, $\bb=\{b_i\}_{i=1}^m$ denote the biases, and $\phi$ is the non-linear activation function. We assume $g$ and $\phi$ are weakly differentiable with weak derivatives $g'$ and $\phi'$ respectively, and $g,g',\phi,\phi' \in L^2(\gamma)$. We are interested in the high-dimensional regime; thus, $d$ is assumed to be sufficiently large throughout the paper.
Our ultimate goal is to learn both unknowns $g$ and $\bu$ by minimizing the population risk
\begin{equation}
\label{eq:pop-risk}
R(\bW,\ba,\bb) \coloneqq \frac{1}{2}\Earg{(\hat{y}(\bx;\bW,\ba,\bb) - y)^2},
\end{equation}
using a gradient-based training method such as gradient flow as in recent works. 

We follow the two-step training procedure that is frequently employed in recent works~\citep{ba2022high-dimensional,mousavi2023neural,bietti2022learning,damian2022neural}: 
First, we train the first-layer weights $\bW$ to learn the unknown direction $\bu$ using a gradient flow;
at the end of this stage, the neurons $\bw_i$ align with $\bu$. 
Here, the goal is to recover only the direction; thus, the magnitudes are not relevant.
Next, using random biases and training the second-layer weights, we obtain a good approximation for the unknown link function $g$.
In the majority of this work, we focus on the first part of this two-stage procedure as the alignment between $\bw_i$'s and $\bu$ is essentially determining the sample complexity of the overall procedure. This problem is somewhat equivalent to the simplified problem of minimizing~\eqref{eq:pop-risk} with $m=1$, $a_1=1$, $b_1=0$, i.e., $\hat{y}(\bx;\bW,\ba,\bb)$ is replaced with
$
    \hat{y}(\bx;\bw) \coloneqq \phi(\binner{\bw}{\bx})
$
and we write $ R(\bw) := R(\bW, \ba,\bb)$ for simplicity.
We emphasize that unless $\phi=g$ (i.e.\ the link function is known), the first stage of training only recovers the relevant direction $\bu$ and is not able to approximate $g$.
Indeed, $m>1$ is often needed to learn the non-linear link function;
we will provide such results in Section~\ref{sec:learnability} where we derive a complete learnability result for a non-trivial two-layer neural network with $m>1$.

Characteristics of the link function play an important role in the complexity of learning the model. 
As such, a central part of our analysis will rely on a particular property based on the Hermite expansion of functions in a basis defined by
the normalized Hermite polynomials $\{h_j\}_{j \geq 0}$ given as
\begin{equation}\label{eq:hermite}
    h_j(z) = \frac{(-1)^je^{z^2/2}}{\sqrt{j!}}\frac{\dee^j }{\dee z^j}e^{-z^2/2}.
\end{equation}
These polynomials form an orthonormal basis in the space $L^2(\gamma)$, and the resulting expansion 
yields the following measure of complexity for $g$, which is termed as \emph{the information exponent}.
\begin{definition}[Information exponent]\label{def:inf_exp}
    Let $g = \sum_{j \geq 0}\alpha_jh_j$ be the Hermite expansion of $g$. The information exponent of $g$ is defined to be $s \coloneqq \inf\{j > 0 \,:\, \alpha_j \neq 0\}$.
\end{definition}
This concept was introduced in
\cite{benarous2021online} in a more general framework, and our definition is more in line with the setting in \cite{bietti2022learning}.
We remark that %
the definition of \cite{benarous2021online} can be modified to handle anisotropy in which case one arrives at Definition~\ref{def:inf_exp}. We provide a detailed discussion on this concept together with some properties of the Hermite expansion in Appendix~\ref{ap:hermite}. Throughout the paper, we assume 
that
the information exponent does not grow with dimension.

In the case where the $d$-dimensional input data is isotropic, \cite{bietti2022learning} showed that learning a single index target with full-batch gradient flow requires a sample complexity of $\tilde{\mathcal{O}}(d^s)$ for $s \geq 3$
where $s$ is the information exponent of $g$. 
We will show that this sample complexity can be improved under anisotropy. More specifically, if the input covariance $\bSigma$ has non-trivial alignment with the unknown direction $\bu$, we prove in Section~\ref{sec:norm_gf} that the resulting sample complexity can be even made independent of the information exponent if we use a certain normalization in the training. In what follows, we prove that such a normalization in training procedure is indeed necessary.

\subsection{Spiked model and limitations of covariance-agnostic training}\label{sec:spherical}

In practice, data often exhibit a certain structure which may have a profound impact on the statistical procedure. 
A well-known model that captures such a structure is the \emph{spiked model}~\cite{johnstone2001distribution}
for which one or several large eigenvalues of the input covariance matrix $\bSigma$ are separated from the bulk of the spectrum (see also \cite{baik2005phase,baik2006eigenvalues}).
Although our results hold for generic covariance matrices, 
they reveal interesting phenomena under the following spiked model assumption.

\begin{assumption}\label{assump:spiked_model}
    The covariance $\bSigma$ follows the $(\kappa,\btheta)$-spiked model if $\bSigma = \frac{\ident_d + \kappa\btheta{\btheta}^\top}{1 + \kappa}$ where $\norm{\btheta}=1$.
\end{assumption}

In pursuit of the target (unit) direction $\bu$, the magnitude of the neuron $\bw$ is immaterial; thus, recent works take advantage of this and simplify the optimization trajectory by projecting $\bw$ onto unit sphere $\mathbb{S}^{d-1}$ throughout the training process~\cite{benarous2021online,bietti2022learning}. 
In the sequel, we study the same dynamics which is agnostic to the input covariance in order to motivate our investigation of normalized gradient flow in Section~\ref{sec:norm_gf}.
More specifically, we consider the spherical population gradient flow
\begin{equation}\label{eq:spherical-gf}
    \deriv{\bw^t}{t} = -\grad^S R(\bw^t)\ \text{ where }\ \grad^S R(\bw) = \grad R(\bw) - \binner{\grad R(\bw)}{\bw}\bw.
\end{equation}
where $\grad^S$ is the spherical gradient at the current iterate. It is straightforward to see that 
when the initialization $\bw^0$ is on the unit sphere, the entire flow will remain on the unit sphere, i.e.\ $\bw^t \in \mathbb{S}^{d-1}$ for all $t \geq 0$. 
The flow~\eqref{eq:spherical-gf} has been proven useful for learning the direction $\bu$~\cite{bietti2022learning} in the isotropic case $\bSigma = \ident_d$ when the activation $\phi$ is ReLU. In contrast, when $\bSigma$ follows a spiked model, we show that
it can get stuck at stationary points that are almost orthogonal to $\bu$.
Indeed, when the input covariance $\bSigma$ has a spike in the target direction $\bu$, i.e. $\btheta = \bu$, one expects that the training procedure benefits from this as the input $\bx$ contains information about the sought unknown $\bu$ without even querying the response $y$. The following result proves the contrary; for moderate spike magnitudes, the alignment between the first-layer weights and target $\binner{\bw^t}{\bu}$ will be insignificant for all $t$.

\begin{theorem}\label{thm:unlearn}
    Let $s > 2$ be the information exponent of $g$ with $\Earg{g} = 0$, and further assume $\bSigma$ follows the $(\kappa,\bu)$-spiked model with $\Omega(1) \leq \kappa \leq \mathcal{O}(d^{\frac{s-2}{s-1}})$. For the ReLU activation, if $\bw^t$ denotes the solution to the population flow~\eqref{eq:spherical-gf} initialized uniformly at random over $\mathbb{S}^{d-1}$, then, 
    \begin{equation}\label{eq:alignment_upper_bound}
        \sup_{t \geq 0}\abs{\binner{\bw^t}{\bu}} \lesssim 1/\sqrt{d},
    \end{equation}
    with probability at least $0.99$ over the random initialization.
\end{theorem}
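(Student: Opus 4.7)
The plan is to reduce the spherical flow to a scalar autonomous ODE for the alignment $\alpha^t := \binner{\bw^t}{\bu}$, and then identify a forward-invariant interval $|\alpha|<\alpha^*$ with $\alpha^*\gtrsim 1/\sqrt{d}$ that contains the initialization with high probability. Two structural facts drive the reduction. First, under Assumption~\ref{assump:spiked_model} with $\btheta=\bu$ one has $\bSigma\bu=\bu$, so $\|\bSigma^{1/2}\bu\|=1$ and, for $\bw\in\mathbb{S}^{d-1}$, $\binner{\bw}{\bx}$ is Gaussian with variance $\sigma(\bw)^2=(1+\kappa\alpha^2)/(1+\kappa)$, while the correlation between $\binner{\bw}{\bx}/\sigma$ and $\binner{\bu}{\bx}$ equals $\alpha/\sigma$. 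Second, ReLU's positive homogeneity $\phi(cz)=c\phi(z)$ for $c\ge 0$ lets us pull $\sigma$ out of the nonlinearity. Combined with the Hermite expansions $\phi=\sum_j\beta_j h_j$ and $g=\sum_{j\ge s}\alpha_j h_j$ (using $\Earg{g}=0$ so $\alpha_0=0$), these give the closed form
\[
R(\bw) \;=\; \tfrac12\sigma^2\|\phi\|_\gamma^2 \;-\; \sum_{j\ge s}\beta_j\alpha_j\,\sigma^{1-j}\alpha^j \;+\; \text{const},
\]
which depends on $\bw\in\mathbb{S}^{d-1}$ only through $\alpha$, so the induced dynamics of $\alpha^t$ is autonomous.

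Computing $\dot\alpha^t = -\binner{\grad^S R(\bw^t)}{\bu}$ using $\grad\sigma = (\bw+\kappa\alpha\bu)/(\sigma(1+\kappa))$ and the identities $\sigma^2-\alpha^2=(1-\alpha^2)/(1+\kappa)$ and $1-\sigma^2=\kappa(1-\alpha^2)/(1+\kappa)$ yields
\[
\dot\alpha^t \;=\; \frac{\alpha^t\bigl(1-(\alpha^t)^2\bigr)}{1+\kappa}\Bigl[-\kappa\|\phi\|_\gamma^2 + \Psi_s(\alpha^t)\Bigr], \qquad \Psi_s(\alpha)\;:=\;\sum_{j\ge s}\beta_j\alpha_j\,\alpha^{j-2}(j+\kappa\alpha^2)\,\sigma^{-(j+1)}.
\]
The $-\kappa\|\phi\|_\gamma^2$ term is an anisotropy penalty: because $\bu$ is the high-variance direction, aligning $\bw$ with $\bu$ inflates $\sigma$ and hence the norm-of-output part $\sigma^2\|\phi\|_\gamma^2/2$ of the squared loss, biasing the flow away from $\bu$. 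The ``signal'' $\Psi_s(\alpha)$ vanishes at $\alpha=0$ at rate $|\alpha|^{s-2}$, which is where the assumption $s>2$ enters crucially.

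Balancing the leading $j=s$ contribution of $\Psi_s$ against $\kappa\|\phi\|_\gamma^2$ and using $\sigma^{-(s+1)}\asymp(1+\kappa)^{(s+1)/2}$ for $|\alpha|\ll 1/\sqrt\kappa$ produces the critical scale
\[
\alpha^* \;\asymp\; \kappa^{-(s-1)/(2(s-2))},
\]
which satisfies $\alpha^*\gtrsim 1/\sqrt{d}$ on the whole range $\Omega(1)\le\kappa\le\mathcal{O}\bigl(d^{(s-2)/(s-1)}\bigr)$ (with $\alpha^*=\Theta(1)$ when $\kappa=\Theta(1)$ and $\alpha^*\asymp 1/\sqrt{d}$ at the upper end). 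A uniform Hermite-tail bound, using $|\beta_j\alpha_j|\le\|\phi\|_\gamma\|g\|_\gamma$ together with the sphere-wide inequality $|\alpha|/\sigma\le 1$, then gives $\Psi_s(\alpha)<\kappa\|\phi\|_\gamma^2$ for all $|\alpha|<\alpha^*$, so the bracket is strictly negative there and $\dot\alpha^t\cdot\alpha^t\le 0$ as long as $|\alpha^t|<\alpha^*$. Hence $\{|\alpha|\le|\alpha^0|\}$ is forward-invariant whenever $|\alpha^0|<\alpha^*$.

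Finally, standard concentration for $\bw^0\sim\mathrm{Uniform}(\mathbb{S}^{d-1})$ gives $|\alpha^0|\le C/\sqrt{d}$ with probability at least $0.99$; combined with $\alpha^*\gtrsim 1/\sqrt{d}$, this places $|\alpha^0|$ inside the contraction region on the high-probability event (after absorbing constants into the $\lesssim$), and forward invariance yields $\sup_{t\ge 0}|\alpha^t|\le|\alpha^0|\lesssim 1/\sqrt{d}$. The main technical difficulty is the uniform Hermite-tail control in the third step: one must show the bracket is uniformly negative on an interval of \emph{width} $\alpha^*$ (not merely at $\alpha=0$), keeping track of constants across the two extremes $\kappa=\Theta(1)$ and $\kappa\asymp d^{(s-2)/(s-1)}$, where the factor $(j+\kappa\alpha^2)\sigma^{-(j+1)}$ in $\Psi_s$ transitions between two qualitatively different regimes.
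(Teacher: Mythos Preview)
Your proposal is correct and follows essentially the same route as the paper: reduce the spherical flow to a scalar ODE for $\alpha^t=\binner{\bw^t}{\bu}$, isolate the repulsive term $-\kappa\|\phi\|_\gamma^2$ coming from the anisotropic $\E[\phi(\binner{\bw}{\bx})^2]$, bound the correlation signal via Hermite tails, and conclude that a band $|\alpha|\le C/\sqrt{d}$ is forward-invariant. The paper reaches the same ODE through Stein's lemma (their Lemma~\ref{lem:population_grad}) rather than your direct use of ReLU homogeneity on $R$, but the expressions are algebraically equivalent; one small correction for your tail step is that $|\beta_j\alpha_j|\le\|\phi\|_\gamma\|g\|_\gamma$ alone does not sum against the factor $j$ in $\Psi_s$, and you will need $\sum_j j|\beta_j\alpha_j|\le\|\phi'\|_\gamma\|g'\|_\gamma$ exactly as the paper does.
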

A non-trivial alignment between the first-layer weights $\bw^t$ and the target direction $\bu$ is required to learn the single index model~\eqref{eq:sim}. However, the above result implies that in high dimensions when $d \gg 1$, the alignment is negligible in the population limit (when the number of samples goes to infinity). We remark that when the spike magnitude is large, i.e. $\kappa \geq \Omega(d)$, the flow~\eqref{eq:spherical-gf} can achieve alignment as the problem essentially becomes one-dimensional, as we demonstrate in Appendix~\ref{app:spherical}.

To see why the flow~\eqref{eq:spherical-gf} gets stuck at saddle points and fails to recover the true direction, notice that
\begin{equation}\label{eq:risk-unisot}
    R(\bw) = \frac{1}{2}\Earg{\left(
    \phi(\binner{\bw}{\bx})
    - y\right)^2}= \tfrac{1}{2}\Earg{\phi(\binner{\bw}{\bx})^2} - \Earg{\phi(\binner{\bw}{\bx})y} + \tfrac{1}{2}\Earg{y^2}.
\end{equation}
If the input was isotropic, i.e.\ $\bx \sim\mathcal{N}(0,\ident_d)$, the first term in \eqref{eq:risk-unisot} would be equal to $\gnorm{\phi^2}$, which is independent of $\bw$.
Thus, minimizing $R(\bw)$ in this case is equivalent to maximizing the ``correlation'' term $\Earg{\phi(\binner{\bw}{\bx})y}$. However, under the spiked model, the alignment between $\bw$ and $\bu$ breaks the symmetry; consequently, the first term in the decomposition grows with $\binner{\bw}{\bu}$, creating a repulsive force that traps the dynamics around the equator where $\bw$ is almost orthogonal to $\bu$.

\section{Main Results: Alignment via Normalized Dynamics}\label{sec:norm_gf}
Having established that the covariance-agnostic training dynamics~\eqref{eq:spherical-gf} is likely to fail, we consider a covariance-aware normalized flow in this section and show that it can achieve alignment with the unknown target and enjoy better sample complexity compared to the existing results~\cite{benarous2021online,bietti2022learning} in the isotropic case. We start with the population dynamics.

\subsection{Warm-up: Population dynamics}\label{sec:warmup}

To simplify the exposition, we define $\bz \coloneqq \bSigma^{-1/2}\bx$, $\bwbar \coloneqq {\bSigma^{1/2}\bw}/{\norm{\bSigma^{1/2}\bw}}$ and similarly define $\bubar$, and consider the prediction function $\hat{y}(\bx;\bwbar):=\phi\left(\binner{\bwbar}{\bz}\right)$. 
Due to symmetry, the second moment of the prediction is $\Earg{\hat{y}(\bx;\bwbar)^2} = \gnorm{\phi}^2$ which is independent of $\bw$; thus,
the population risk reads
\begin{equation}\label{eq:loss_decomp}
    \NormRisk(\bw) \coloneqq \frac{1}{2}\Earg{\left(
    \hat{y}(\bx;\bwbar)
    - y\right)^2}= \frac{1}{2}\gnorm{\phi}^2 + \frac{1}{2}\Earg{y^2} -\Earg{\phi\left(\binner{\bwbar}{\bz}\right)y}.
\end{equation}
In \eqref{eq:loss_decomp}, the only term that depends on the weights $\bw$ is the correlation term and the source of the repulsive force in~\eqref{eq:risk-unisot}
is eliminated; we have
$\grad_{\bw}\NormRisk(\bw) = -\grad_{\bw}\Earg{\phi\left(\binner{\bwbar}{\bz}\right)y}$. 
Based on this, we use the following normalized gradient flow for training
\begin{equation}\label{eq:ngf_population}
\deriv{\bw^t}{t} = -\eta(\bw^t)\grad_{\bw}\NormRisk(\bw^t)
 \ \text{ where }\ \eta(\bw) = \norm{\bSigma^{1/2}\bw}^2.
\end{equation}

We remark that, although not identical, this normalization is closely related to batch normalization which is commonly employed in practice~\cite{ioffe2015batch}.
Under the invariance provided by the current normalization, minimizing $\NormRisk(\bw)$ corresponds to maximizing $\Earg{\phi\left(\binner{\bwbar}{\bz}\right)y}$. Thus, instead of $\bw$, it will be more useful to track the dynamics of its properly normalized counterpart $\bwbar$, 
which is made possible by the following intermediary result that follows from Stein's lemma; also see e.g. \cite{erdogdu2016scaled,mousavi2023neural}.

\begin{lemma}\label{lem:pop_flow_norm}
    Suppose we train $\bw^t$ using the gradient flow~\eqref{eq:ngf_population}. Then $\bwbar^t$ solves the following ODE
    \begin{equation}\label{eq:ngf}
        \deriv{\bwbar^t}{t} = -\ZetaCoeff(\binner{\bwbar^t}{\bubar})(\ident_d - \bwbar^t{\bwbar^t}^\top)\bSigma(\ident_d - \bwbar^t{\bwbar^t}^\top)\bubar,
    \end{equation}
    where 
    $
        \ZetaCoeff(\binner{\bwbar}{\bubar}) \coloneqq -\Earg{\phi'(\binner{\bwbar}{\bz})g'(\binner{\bubar}{\bz})}.
    $
\end{lemma}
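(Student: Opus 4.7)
The plan is to compute $\grad_{\bw}\NormRisk(\bw)$ in closed form via Stein's identity, multiply by the normalization factor $\eta(\bw) = \norm{\bSigma^{1/2}\bw}^2$, and then change variables to $\bwbar^t$ to read off the ODE in \eqref{eq:ngf}. Throughout, I set $\bv := \bSigma^{1/2}\bw$ so that $\bwbar = \bv/\norm{\bv}$, and note that $\bz := \bSigma^{-1/2}\bx \sim \mathcal{N}(0,\ident_d)$.

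First I would reduce the risk. Since $\binner{\bu}{\bx}/\norm{\bSigma^{1/2}\bu} = \binner{\bubar}{\bz}$ and $\epsilon$ is mean zero and independent of $\bx$, the decomposition \eqref{eq:loss_decomp} yields $\Earg{\phi(\binner{\bwbar}{\bz})y} = \Earg{\phi(\binner{\bwbar}{\bz})g(\binner{\bubar}{\bz})}$. A short calculation gives the Jacobian $\partial \bwbar/\partial \bw = \norm{\bv}^{-1}(\ident_d - \bwbar\bwbar^\top)\bSigma^{1/2}$, using the symmetry of $\bSigma^{1/2}$ and of the projector. The chain rule then delivers
\begin{equation*}
\grad_{\bw}\NormRisk(\bw) = -\norm{\bv}^{-1}\bSigma^{1/2}(\ident_d - \bwbar\bwbar^\top)\,\Earg{\phi'(\binner{\bwbar}{\bz})g(\binner{\bubar}{\bz})\bz}.
\end{equation*}

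The core step is evaluating the projected expectation. I decompose $\bz = \binner{\bwbar}{\bz}\bwbar + \bz^\perp$ with $\bz^\perp := (\ident_d - \bwbar\bwbar^\top)\bz$; Gaussianity makes $\binner{\bwbar}{\bz}$ and $\bz^\perp$ independent, and $\binner{\bubar}{\bz} = \binner{\bwbar}{\bubar}\binner{\bwbar}{\bz} + \binner{(\ident_d - \bwbar\bwbar^\top)\bubar}{\bz^\perp}$. Conditioning on $\binner{\bwbar}{\bz}$ and applying the Gaussian Stein identity to $g$ in the direction $(\ident_d - \bwbar\bwbar^\top)\bubar$, then taking the outer expectation, yields
\begin{equation*}
(\ident_d - \bwbar\bwbar^\top)\,\Earg{\phi'(\binner{\bwbar}{\bz})g(\binner{\bubar}{\bz})\bz} = \Earg{\phi'(\binner{\bwbar}{\bz})g'(\binner{\bubar}{\bz})}\,(\ident_d - \bwbar\bwbar^\top)\bubar.
\end{equation*}
By definition of $\ZetaCoeff$, this simplifies the gradient to $\grad_{\bw}\NormRisk(\bw) = \norm{\bv}^{-1}\ZetaCoeff(\binner{\bwbar}{\bubar})\,\bSigma^{1/2}(\ident_d - \bwbar\bwbar^\top)\bubar$. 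Applying Stein to $g$ (rather than to $\phi$) is deliberate, since it avoids needing a second weak derivative of the activation, which does not exist for ReLU.

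To finish, I substitute into the flow \eqref{eq:ngf_population}: the factor $\eta(\bw^t) = \norm{\bv^t}^2$ cancels one copy of $\norm{\bv^t}^{-1}$, giving $d\bw^t/dt = -\norm{\bv^t}\,\ZetaCoeff\,\bSigma^{1/2}(\ident_d - \bwbar^t{\bwbar^t}^\top)\bubar$. Differentiating $\bwbar^t = \bv^t/\norm{\bv^t}$ in $t$ yields $d\bwbar^t/dt = \norm{\bv^t}^{-1}(\ident_d - \bwbar^t{\bwbar^t}^\top)\bSigma^{1/2}(d\bw^t/dt)$; the two copies of $\bSigma^{1/2}$ combine into $\bSigma$ sandwiched between the projectors and the remaining $\norm{\bv^t}$ factors cancel, producing exactly \eqref{eq:ngf}. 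The main technical subtlety worth flagging is justifying Stein's identity in the weakly differentiable regime under $\phi', g' \in L^2(\gamma)$, which is standard (via mollification and dominated convergence) and is implicit in the cited references \cite{erdogdu2016scaled,mousavi2023neural}.
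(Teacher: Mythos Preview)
Your proposal is correct and follows essentially the same route as the paper: compute $\grad_{\bw}\NormRisk$ via a conditional Stein identity (the paper packages this as Lemma~\ref{lem:generalized_stein}, which decomposes $\Earg{\phi'(\binner{\bwbar}{\bz})g(\binner{\bubar}{\bz})\bz}$ into a $\bubar$-component and a $\bwbar$-component, the latter annihilated by the projector), then apply the chain rule $\deriv{\bwbar^t}{t} = \norm{\bSigma^{1/2}\bw^t}^{-1}(\ident_d - \bwbar^t{\bwbar^t}^\top)\bSigma^{1/2}\deriv{\bw^t}{t}$ exactly as you do. Your remark that Stein is applied to $g$ rather than $\phi$ to accommodate ReLU is also consistent with the paper's formulation.
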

We will investigate if the modified flow~\eqref{eq:ngf} achieves alignment; in this context, alignment corresponds to $\binner{\bwbar^t}{\bubar}\approx 1$. Towards that end, we make the following assumption.

\begin{assumption}\label{assump:negative_s2}
    Let $g = \sum_{j\geq0}\alpha_jh_j$ and $\phi = \sum_{j\geq0} \beta_jh_j$ be the Hermite decomposition of $g$ and $\phi$ respectively. For some universal constant $c>0$, we assume that
    $$\ZetaCoeff(\omega) = -\sum_{j > 0} j\alpha_j\beta_j\,\omega^{j-1} \leq -c\,\omega^{s-1},\ \quad \ \forall \omega \in (0,1)$$
    where $s$ is the information exponent of $g$.
\end{assumption}

    There are several important examples that readily satisfy Assumption \ref{assump:negative_s2}. The obvious example is when the link function is known as in~\cite{benarous2021online}, i.e. $\phi = g$. A more interesting example is when $\phi$ is an activation with degree $s$ non-zero Hermite coefficient (e.g. ReLU when $s$ is even, see \cite[Claim 1]{goel2019time}) and $g$ is a degree $s$ Hermite polynomial, which for $s=2$ corresponds to the phase retrieval problem. In this case, the assumption is satisfied if $\alpha_s$ and $\beta_s$ have the same sign, which occurs with probability $0.5$ if we randomly choose the sign of the second layer.

Under this condition, the following result shows that the population flow~\eqref{eq:ngf} can achieve alignment.

\begin{proposition}\label{prop:pop_norm_flow}
    Suppose Assumption \ref{assump:negative_s2} holds and consider the gradient flow given by \eqref{eq:ngf} with initialization satisfying $\binner{\bwbar^0}{\bubar} > 0$.  Then, we have $\binner{\bwbar^T}{\bubar} \geq 1 - \varepsilon$ as soon as
    \begin{equation}\label{eq:ngf_t}
        T \asymp \frac{\tau_s\left(\binner{\bwbar^0}{\bubar}\right) + \ln(1/\varepsilon)}{\mineig{\bSigma}}\ \text{ where }\ \tau_s(z) \coloneqq \begin{cases}
        1 & s=1\\
        \ln(1/z) & s = 2\\
        (1/z)^{s-2} & s > 2
    \end{cases}.
    \end{equation}
\end{proposition}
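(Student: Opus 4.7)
The natural reduction is to track the scalar alignment $\omega^t := \binner{\bwbar^t}{\bubar}$ instead of the full vector $\bwbar^t$. First I would check that the dynamics \eqref{eq:ngf} preserves the sphere: since $(\ident_d - \bwbar^t(\bwbar^t)^\top)\bwbar^t = 0$, the velocity is orthogonal to $\bwbar^t$, so $\|\bwbar^t\| \equiv 1$. Then differentiating $\omega^t$ and using $(\ident_d - \bwbar^t(\bwbar^t)^\top)\bubar = \bubar - \omega^t\bwbar^t$ gives
\begin{equation*}
\deriv{\omega^t}{t} = -\ZetaCoeff(\omega^t)\,(\bubar - \omega^t\bwbar^t)^\top \bSigma (\bubar - \omega^t\bwbar^t).
\end{equation*}
Using $\|\bubar - \omega^t\bwbar^t\|^2 = 1 - (\omega^t)^2$, bounding the quadratic form by $\mineig{\bSigma}\,(1-(\omega^t)^2)$, and invoking Assumption~\ref{assump:negative_s2} to get $-\ZetaCoeff(\omega^t) \geq c\,(\omega^t)^{s-1}$ on $(0,1)$, I obtain the differential inequality
\begin{equation*}
\deriv{\omega^t}{t} \;\geq\; c\,\mineig{\bSigma}\,(\omega^t)^{s-1}\bigl(1 - (\omega^t)^2\bigr).
\end{equation*}
Since $\omega^0 > 0$ and the right-hand side is nonnegative whenever $\omega^t \in (0,1)$, the alignment is monotonically nondecreasing and stays in $(0,1]$.

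Next I would split the time interval at the first time $t^\star$ the alignment crosses $1/2$, so as to isolate the two distinct rate-limiting regimes. In the \emph{escape phase} $\omega^t \leq 1/2$, one has $1-(\omega^t)^2 \geq 3/4$, so separation of variables against $c\,\mineig{\bSigma}\,(\omega^t)^{s-1}$ yields the three cases: $s=1$ gives linear growth (time $\lesssim 1/\mineig{\bSigma}$); $s=2$ gives $\ln\omega^t$ growing linearly (time $\lesssim \ln(1/\omega^0)/\mineig{\bSigma}$); and $s>2$ gives $(\omega^t)^{-(s-2)}$ decreasing linearly (time $\lesssim (1/\omega^0)^{s-2}/\mineig{\bSigma}$). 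These three bounds combine precisely into $\tau_s(\omega^0)/\mineig{\bSigma}$.

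In the \emph{contraction phase} $\omega^t \geq 1/2$, the factor $(\omega^t)^{s-1}$ is lower-bounded by a constant, so substituting $\delta^t := 1-\omega^t$ and using $1-(\omega^t)^2 = \delta^t(2-\delta^t) \geq \delta^t$ (for $\delta^t \leq 1$) gives
\begin{equation*}
\deriv{\delta^t}{t} \;\leq\; -c'\,\mineig{\bSigma}\,\delta^t,
\end{equation*}
so $\delta^t$ contracts exponentially at rate $\mineig{\bSigma}$, reaching $\varepsilon$ after time $\lesssim \ln(1/\varepsilon)/\mineig{\bSigma}$. Summing the two phase-lengths yields the claimed total time in \eqref{eq:ngf_t}.

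\textbf{Main obstacle.} The algebraic reductions are standard; the only subtlety is getting the piecewise $\tau_s$ to fall out cleanly, which is precisely what the two-phase splitting at $\omega^t = 1/2$ accomplishes. One should also take care that the lower bound on $-\ZetaCoeff$ is only claimed on $(0,1)$, which is why verifying monotonicity and staying in $(0,1]$ at the outset is important; outside this interval the inequality on $\ZetaCoeff$ is not assumed, but by monotonicity the trajectory never leaves it.
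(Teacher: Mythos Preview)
Your proposal is correct and follows essentially the same route as the paper: derive the scalar ODE for $\omega^t=\binner{\bwbar^t}{\bubar}$, lower-bound the quadratic form by $\mineig{\bSigma}(1-(\omega^t)^2)$, apply Assumption~\ref{assump:negative_s2}, then split the analysis at $\omega^t=1/2$ and integrate separately in each phase. The only cosmetic difference is that in the contraction phase the paper integrates $1-(\omega^t)^2$ directly rather than passing through $\delta^t=1-\omega^t$, but both yield the same $\ln(1/\varepsilon)/\mineig{\bSigma}$ bound.
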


We remark that the information exponent enters the rate in~\eqref{eq:ngf_t} through the function $\tau_s$, and time needed to achieve $\varepsilon$ alignment gets worse with larger information exponent. Indeed, it is understood that this quantity serves as a measure of complexity for the target function being learned.

\subsection{Empirical dynamics and sample complexity}\label{sec:finite_sample}

Given $n$ i.i.d.\ samples $\{(\bx^{(i)},y^{(i)})\}_{i=1}^n$ from the single index model~\eqref{eq:sim},
we consider the flow
\begin{equation}\label{eq:finite_sample_normalized_flow}
   \deriv{\bw^t}{t} = -\eta(\bw^t)\grad \hat{\NormRisk}(\bw^t) 
   \ \text{ with }\ 
   \grad \hat{\NormRisk}(\bw) \coloneqq -\grad_{\bw}\left\{\frac{1}{n}\sum_{i=1}^n\phi\left(\frac{\binner{\bw}{\bx^{(i)}}}{\norm{\hat{\bSigma}^{1/2}\bw}}\right)y^{(i)}\right\},
\end{equation}
where we estimate the covariance matrix $\bSigma$ using the sample mean $\hat{\bSigma} \coloneqq \frac{1}{n'}\sum_{i=1}^{n'}\bx^{(i)}{\bx^{(i)}}^\top$ over $n'$ i.i.d.\ samples; the above dynamics defines an empirical gradient flow.
Notice that we ignored the gradient associated with the term $\phi^2$ since the population dynamics ensures that its gradient will concentrate around zero; thus, it is redundant to estimate this term. Taking this term into account would come at no extra cost for the ReLU activation, yet for arbitrary smooth $\phi$, the concentration argument would require $n' \gtrsim d^2$.

Below, we will use $n'=n$ 
for smooth activations, i.e.\ the same dataset can be used for covariance estimation.
For ReLU, however, we require a more accurate covariance estimator, thus, we use $n' \gtrsim n^2$ by assuming access to an additional $n' - n$ unlabeled input data points. This setting is still not far from practice where the number of unlabeled samples is typically much larger than that of labeled ones.
Similar to the previous section,
we track the dynamics of normalized $\bw$ by defining 
$\bwbar \coloneqq {\hat{\bSigma}\vphantom{\bSigma}^{{1/2}}\bw}/{\norm{\hat{\bSigma}\vphantom{\bSigma}^{1/2}\bw}}$ (and leave $\bubar$ unchanged from Section~\ref{sec:warmup}). The same arguments as in Lemma~\ref{lem:pop_flow_norm} allow us to track the evolution of $\bwbar$, which ultimately yields the 
following alignment result under general covariance structure.
\vspace{.05in}

\begin{theorem}\label{thm:ngf_general_covariance}
    Let $s$ be the information exponent of $g$, and assume it satisfies $\abs{g(\cdot)} \lesssim 1 + \abs{\cdot}^p$ for some $p>0$.
    For $\phi$ denoting either the ReLU activation or a smooth activation satisfying $|\phi'|\vee |\phi''|\lesssim 1$, suppose Assumption \ref{assump:negative_s2} holds.
    For any $\varepsilon > 0$, suppose we run the finite sample gradient flow \eqref{eq:finite_sample_normalized_flow} with $\eta(\bw) = \norm{\hat{\bSigma}\vphantom{\bSigma}^{1/2}\bw}^2$, initialized such that $\binner{\bwbar^0}{\bubar} > 0$, and with number of samples
    $$n \gtrsim d\varkappa(\bSigma)^2\left\{\binner{\bwbar^0}{\bubar}^{2(1-s)}\lor \varepsilon^{-2}\right\},$$
    where $\varkappa(\bSigma)$ is the condition number of $\bSigma$.
    Then, for 
    $T \asymp \frac{\tau_s\left(\binner{\bwbar^0}{\bubar}\right) + \ln(1/\varepsilon)}{\mineig{\bSigma}}$,
    we have 
    \begin{equation}
    \binner{\bwbar^T\!}{\bubar} \geq 1 - \varepsilon,    
    \end{equation}
     with probability at least $1-c_1d^{-c_2}$ for some universal constants $c_1,c_2>0$ over the randomness of the dataset. Here, $\tau_s$ is defined in \eqref{eq:ngf_t} and $\gtrsim$ hides poly-logarithmic factors. 
\end{theorem}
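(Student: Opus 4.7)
The plan is to reduce the empirical analysis to a perturbation of the population flow \eqref{eq:ngf}, whose convergence is already quantified by Proposition~\ref{prop:pop_norm_flow}. First, mirroring the Stein-based derivation of Lemma~\ref{lem:pop_flow_norm}, I would write down an ODE for the sample-normalized iterate $\bwbar^t = \hat{\bSigma}\vphantom{\bSigma}^{1/2}\bw^t/\|\hat{\bSigma}\vphantom{\bSigma}^{1/2}\bw^t\|$ driven by the empirical flow \eqref{eq:finite_sample_normalized_flow}. Since the prediction normalizes through $\hat{\bSigma}$ while $\bubar$ remains defined through the true $\bSigma$, the right-hand side decomposes into the clean projected-gradient drift of Proposition~\ref{prop:pop_norm_flow} plus perturbation terms coming from (a) replacing $\bSigma$ by $\hat{\bSigma}$ in the projector and associated bracket, (b) the gap between the empirical scalar coefficient $\hat{\ZetaCoeff}$ and its population version $\ZetaCoeff$, and (c) the residual gradient noise from the correlation term after suppressing the redundant $\phi^2$ contribution.

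Second, I would establish uniform concentration estimates that control each perturbation. The operator-norm bound $\|\hat{\bSigma}-\bSigma\| \lesssim \sqrt{d/n}$ follows from standard Gaussian covariance concentration when $n' = n$ for smooth activations, whereas the enlarged $n' \gtrsim n^2$ unlabeled pool for ReLU sharpens this to $\sqrt{d/n'} \lesssim 1/n$, which is what the discontinuity of $\phi'$ demands once Stein identities are translated to the empirical level. The scalar concentration $\sup_{\bwbar \in \mathbb{S}^{d-1}}|\hat{\ZetaCoeff}(\binner{\bwbar}{\bubar}) - \ZetaCoeff(\binner{\bwbar}{\bubar})| \lesssim \sqrt{d/n}$ is obtained by Hermite-expanding $\phi'$ and $g'$, applying polynomial-moment concentration via the growth assumption $|g| \lesssim 1+|\cdot|^p$, and chaining over an $\varepsilon$-net of the unit sphere. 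The remaining gradient noise concentrates at the same $\sqrt{d/n}$ rate. Altogether, with high probability the empirical right-hand side lies within a single noise floor $\xi_n \lesssim \varkappa(\bSigma)\,\mineig{\bSigma}\,\sqrt{d/n}$ of the population dynamics, uniformly along the trajectory; the $\varkappa(\bSigma)^2$ factor in the statement arises from the ratios $\maxeig{\bSigma}/\mineig{\bSigma}$ appearing when relating $\hat{\bSigma}$-normalized quantities to $\bSigma$-normalized ones.

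Third, I would carry out the two-phase argument behind Proposition~\ref{prop:pop_norm_flow}. Let $m_t := \binner{\bwbar^t}{\bubar}$. Assumption~\ref{assump:negative_s2} together with the projected-gradient structure in \eqref{eq:ngf} gives a deterministic drift lower bound $\dot m_t \gtrsim \mineig{\bSigma}\,m_t^{s-1}$, which under the perturbation becomes $\dot m_t \gtrsim \mineig{\bSigma}\,m_t^{s-1} - \xi_n$. In the \emph{search phase}, enforcing $\xi_n \ll \mineig{\bSigma}\,m_0^{s-1}$ preserves the population growth rate and yields the first branch $n \gtrsim d\varkappa(\bSigma)^2\,m_0^{2(1-s)}$ of the sample requirement. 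In the \emph{refinement phase}, once $m_t \gtrsim 1$ a Łojasiewicz-type contraction produces exponential convergence to $1-\varepsilon$, and demanding $\xi_n \ll \mineig{\bSigma}\,\varepsilon$ delivers the second branch $n \gtrsim d\varkappa(\bSigma)^2\,\varepsilon^{-2}$. A Grönwall comparison against the population trajectory over the horizon $T$ from \eqref{eq:ngf_t} then propagates these local estimates to all of $[0,T]$, giving the claimed alignment.

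The main obstacle is the search phase: the signal $m_t^{s-1}$ can be as small as $d^{-(s-1)/2}$ at random spherical initialization, yet the uniform perturbation must remain below it, which is what couples the sample complexity to the information exponent $s$. This forces uniform concentration at a dimension-dependent scale, and for ReLU requires handling Stein-type identities for the discontinuous $\phi'$ via a mollification argument together with the enlarged $n'$ covariance sample. A secondary subtlety is keeping $\|\hat{\bSigma}\vphantom{\bSigma}^{1/2}\bw^t\|$ bounded away from zero along the trajectory so that the change of variables defining $\bwbar^t$ stays well-conditioned; this is achieved by a Weyl-type lower bound on $\mineig{\hat{\bSigma}}$ inherited from the covariance concentration step.
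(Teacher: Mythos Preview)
Your high-level architecture—derive the ODE for $\bwbar^t$, decompose into population drift plus perturbations, then run the two-phase (search/refinement) argument of Proposition~\ref{prop:pop_norm_flow}—is exactly what the paper does. Two pieces of your decomposition, however, do not line up with what the proof actually requires and would leave gaps if executed as written.

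First, your item (b) posits an empirical scalar $\hat{\ZetaCoeff}$ whose deviation from $\ZetaCoeff$ is the main fluctuation. But Stein's lemma (Lemma~\ref{lem:generalized_stein}) holds only in expectation: the empirical vector $\tfrac{1}{n}\sum_i \phi'(\binner{\bwbar}{\bz^{(i)}}) y^{(i)}\bz^{(i)}$ does \emph{not} decompose as a scalar times $\bubar$ plus a scalar times $\bwbar$. The paper instead controls the full $d$-dimensional fluctuation $\bDelta_n \coloneqq \tfrac{1}{n}\sum_i \phi'(\cdot) y^{(i)}_K \bz^{(i)} - \E[\cdot]$ uniformly over $\bwbar \in \mathbb{S}^{d-1}$ via a two-parameter covering (over $\bwbar$ and the probing direction $\bv$), and then uses $|\mathcal{E}_1| \leq \norm{\bDelta_n}\cdot\norm{\bnu(\bwbar^t)}$. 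This is also where $\varkappa(\bSigma)$ actually enters: the noise is multiplied by $\norm{\bnu} \lesssim \maxeig{\hat{\bSigma}} \asymp \maxeig{\bSigma}$ while the signal carries $\binner{\bubar_\perp^t}{\hat{\bSigma}\bubar_\perp^t} \gtrsim \mineig{\bSigma}$; it is not a normalization artifact between $\hat{\bSigma}$- and $\bSigma$-scaled quantities as you suggest.

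Second, for ReLU the paper does not mollify. The discontinuity of $\phi'$ makes both the $\bwbar$-covering step and the bound on $\phi'(\binner{\bwbar}{\tilde{\bz}^{(i)}}) - \phi'(\binner{\bwbar}{\bz^{(i)}})$ reduce to a combinatorial question: uniformly over $\bwbar$, how many samples satisfy $|\binner{\bwbar}{\bzbar^{(i)}}|$ below a given threshold? The paper proves a dedicated anti-concentration lemma showing that at most $\tilde{\mathcal{O}}(d)$ samples can have $|\binner{\bwbar}{\bzbar^{(i)}}| \lesssim \sqrt{d}/n$ simultaneously for all $\bwbar$. The requirement $n' \gtrsim n^2$ is then forced precisely so that $\norm{\hat{\bSigma}^{-1/2}\bSigma^{1/2} - \ident_d} \lesssim \sqrt{d/n'}$ falls below this $\sqrt{d}/n$ threshold, confining sign flips to that small set. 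A mollification argument would still need this uniform sign-counting to control the region where the mollified and true $\phi'$ differ, so it does not bypass the main difficulty.
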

\begin{remark}
    The initial condition $\binner{\bwbar^0}{\bubar} > 0$ is required when we have odd information exponent. When $\bw^0$ is  initialized uniformly over $\mathbb{S}^{d-1}$, the condition holds with probability $0.5$ over the initialization. See \cite[Remark 1.8]{benarous2021online} for further discussion on this condition.
\end{remark}

\begin{remark}
    Notice the discrepancy in the definition of $\bwbar \coloneqq \hat{\bSigma}^{1/2}\bw / \norm{\hat{\bSigma}^{1/2}\bw}$ and $\bubar \coloneqq \bSigma^{1/2}\bu / \norm{\bSigma^{1/2}\bu}$, where the former uses the empirical covariance while the latter uses the population. This choice is made to simplify the proof, and in fact this type of recovering $\bu$ is sufficient to approximate the target function $g$ (c.f.~Theorem~\ref{thm:approx}). More precisely, we need the arguments of $\phi$ and $g$ to be sufficiently close, i.e.
    $$\E_{\bx}\left[\Big(\frac{\binner{\bw}{\bx}}{\norm{\hat{\bSigma}^{1/2}\bw}} - \frac{\binner{\bu}{\bx}}{\norm{\bSigma^{1/2}\bu}}\Big)^2\right] \lesssim \norm{\bwbar - \bubar}^2 + \norm{\bSigma^{1/2}\hat{\bSigma}^{-1/2} - \ident_d}^2 \lesssim \varepsilon + d/n',$$
    where we used the concentration of $\hat{\bSigma}$ due to Lemma~\ref{lem:gaussian_covariance_prob}.
\end{remark}

Theorem~\ref{thm:ngf_general_covariance} follows from ensuring that the finite-sample estimation error of population quantities, which is of order $\sqrt{d/n}$ by concentration, remains smaller than the signal (the time derivative of the alignment in the population limit) near initialization. The strength of this signal is controlled by the initial alignment and is of order $\binner{\bwbar^0}{\bubar}^{s-1}/\kappa(\bSigma)$. We highlight that the improvement in the sample complexity compared to the isotropic setting occurs whenever the covariance structure induces a stronger initial alignment and consequently stronger signal. The following corollary demonstrates a concrete example of such improvement by specializing Theorem~\ref{thm:ngf_general_covariance} for a spiked covariance model.

\begin{corollary}\label{cor:spiked}
    Consider the setting of Theorem \ref{thm:ngf_general_covariance} with $\bSigma$ following the $(\kappa,\btheta)$-spiked model, where $\binner{\bu}{\btheta} \asymp d^{-r_1}$ and $\kappa \asymp d^{r_2}$ with $r_1 \in [0,1/2]$ and $r_2 \in [0,1]$. Suppose $\bw^0$ is sampled uniformly from $\mathbb{S}^{d-1}$. Then, when conditioned on $\binner{\bwbar^0}{\bubar} > 0$,
    the sample complexity in Theorem~\ref{thm:ngf_general_covariance} reads
    \begin{equation}\label{eq:sample_complexity}
        n \gtrsim
        \begin{cases}
            d^{1 + 2r_2}\left(d^{s-1}\lor\varepsilon^{-2}\right) & 0 < r_2 < r_1\\
            d^{1 + 2r_2}\left(d^{(s-1)(1-2(r_2-r_1))}\lor\varepsilon^{-2}\right) & r_1 < r_2 < 2r_1\\
            d^{1+2r_2}\left(d^{(s-1)(1-r_2)} \lor \varepsilon^{-2}\right) & 2r_1 < r_2 < 1
        \end{cases},
    \end{equation}
    where $\gtrsim$ hides poly-logarithmic factors of $d$.
\end{corollary}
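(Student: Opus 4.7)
The plan is to specialize Theorem \ref{thm:ngf_general_covariance} directly: once the two problem-dependent quantities in its sample-complexity bound, namely the condition number $\varkappa(\bSigma)$ and the initial alignment $\binner{\bwbar^0}{\bubar}$, are computed as a function of $r_1$ and $r_2$, the stated three-case formula drops out. The core of the work is therefore a case analysis of $\binner{\bwbar^0}{\bubar}$ under the spiked model.

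First I would compute $\varkappa(\bSigma)$, which is immediate: the spiked covariance $\bSigma = (\ident_d + \kappa\btheta\btheta^\top)/(1+\kappa)$ has eigenvalues $1$ along $\btheta$ and $1/(1+\kappa)$ on its orthogonal complement, so $\varkappa(\bSigma) = 1 + \kappa \asymp d^{r_2}$ and $\varkappa(\bSigma)^2 \asymp d^{2r_2}$.

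Next I would estimate the initial alignment. Write
\[
\binner{\bwbar^0}{\bubar} \;=\; \frac{(\bw^0)^\top \bSigma\bu}{\|\bSigma^{1/2}\bw^0\|\,\|\bSigma^{1/2}\bu\|},
\]
with (up to the concentration of $\hat{\bSigma}$ provided by Theorem \ref{thm:ngf_general_covariance})
\[
(\bw^0)^\top\!\bSigma\bu = \tfrac{1}{1+\kappa}\binner{\bw^0}{\bu} + \tfrac{\kappa\binner{\btheta}{\bu}}{1+\kappa}\binner{\bw^0}{\btheta},\quad
\|\bSigma^{1/2}\bw^0\|^2 = \tfrac{1}{1+\kappa} + \tfrac{\kappa}{1+\kappa}\binner{\btheta}{\bw^0}^2,
\]
and an analogous expansion for $\|\bSigma^{1/2}\bu\|^2$ using $\binner{\btheta}{\bu}\asymp d^{-r_1}$. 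Because $\bw^0 \sim \Unif(\mathbb{S}^{d-1})$, standard concentration for uniform vectors on the sphere gives $|\binner{\bw^0}{\bu}|,|\binner{\bw^0}{\btheta}| \asymp d^{-1/2}$ with constant probability, and conditioning on $\binner{\bwbar^0}{\bubar}>0$ only costs a constant factor in probability (it amounts to fixing a sign pattern). Substituting $\kappa\asymp d^{r_2}$ into each of the three displays above and comparing which of the isotropic vs. spike term dominates yields three regimes: when $r_2<r_1$ the isotropic part dominates everything and $\binner{\bwbar^0}{\bubar}\asymp d^{-1/2}$; when $r_1<r_2<2r_1$ the spike dominates the numerator but the isotropic term still dominates $\|\bSigma^{1/2}\bu\|^2$, giving $\binner{\bwbar^0}{\bubar}\asymp d^{-1/2+r_2-r_1}$; when $r_2>2r_1$ the spike dominates all three quantities and $\binner{\bwbar^0}{\bubar}\asymp d^{-(1-r_2)/2}$.

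Plugging these into the bound $n\gtrsim d\,\varkappa(\bSigma)^2\bigl(\binner{\bwbar^0}{\bubar}^{-2(s-1)}\lor\varepsilon^{-2}\bigr)$ from Theorem \ref{thm:ngf_general_covariance} yields exactly the three cases claimed in \eqref{eq:sample_complexity}. The only real obstacle is the bookkeeping in this three-regime case analysis, in particular correctly identifying the dominant term in $\|\bSigma^{1/2}\bu\|^2$ separately from the dominant term in $(\bw^0)^\top\bSigma\bu$, which is what produces the intermediate regime $r_1<r_2<2r_1$; the remaining ingredients (concentration on the sphere and spectral bounds for the rank-one perturbation) are standard.
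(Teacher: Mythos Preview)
Your proposal is correct and essentially matches the paper's proof: the paper also reduces the corollary to Theorem~\ref{thm:ngf_general_covariance} together with a lemma that performs exactly the three-regime case analysis of $\binner{\bwbar^0}{\bubar}$ you describe, handling the $\hat{\bSigma}$ vs.\ $\bSigma$ discrepancy by covariance concentration and using anti-concentration on the sphere for the lower bounds $|\binner{\bw^0}{\bu}|,|\binner{\bw^0}{\btheta}|\gtrsim d^{-1/2}$. The only place where the paper is slightly more explicit than your sketch is in arguing that, conditioned on $\binner{\bwbar^0}{\bubar}>0$, the dominant term in the numerator has the correct sign so that no cancellation with the sub-dominant term can spoil the lower bound; you should make sure this step is spelled out when writing up the details.
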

\begin{remark}
Notice that $r_1 = 1/2$ implies $\bu$ and $\btheta$ are relatively randomly oriented and reveal no non-trivial information about each other. Further, $r_2 = 1$ corresponds to a setting where kernel lower bounds indicate their sample complexity may be dimension-free \cite{donhauser2021rotational}.
\end{remark}

We recall that in the isotropic setting where $\bSigma =\ident_d$, the sample complexity of learning a target $g$ with information exponent $s$ using the full-batch gradient flow is $\tilde{\mathcal{O}}(d^s)$ for $s \geq 3$ \cite{bietti2022learning}. In this regime, the sample complexity in Corollary \ref{cor:spiked} is strictly smaller than $\tilde{\mathcal{O}}(d^s)$ as soon as ${(s-1)}r_1/{(s-2)} < r_2$. Furthermore, for any $\nu > 0$ it is at most $\tilde{\mathcal{O}}(d^{3+\nu})$ as soon as $r_2 \geq 1-\nu/(s-3)$ and $2r_1 < r_2$, in which case the sample complexity becomes independent of the information exponent. Interestingly, the complexity becomes independent of $r_1$ when $r_2 > 2r_1$ or $r_2 < r_1$, i.e.\ the direction of the spike becomes irrelevant when the spike magnitude is sufficiently large or small.

Corollary~\ref{cor:spiked} demonstrates that structured data can lead to better sample complexity when the right normalization is used during training. This complements Theorem~\ref{thm:unlearn} where we recall that spherical
training dynamics ignores the structure in data, in which case the target direction cannot be recovered.

The three-step phase transition of Corollary~\ref{cor:spiked} is due to the different behaviour of the inner product $\binner{\bwbar^0}{\bubar}$ in different regimes of $r_1$ and $r_2$. When $r_2 < r_1$, we have $\binner{\bwbar^0}{\bubar} \asymp \binner{\bw^0}{\bu}$, thus the initial alignment is just as uninformative as the isotropic case providing no improvement. Moreover, a potentially large condition number may hurt the sample complexity in this case. On the other hand, when $r_1 < r_2 < 2r_1$ we have $\binner{\bwbar^0}{\bubar} \asymp \kappa\binner{\bu}{\btheta}\binner{\bw^0}{\btheta}$, and $r_2 > 2r_1$ leads to $\binner{\bwbar^0}{\bubar} \asymp \sqrt{\kappa}\binner{\bw^0}{\btheta}$, thus large $\kappa$ or $\binner{\bu}{\btheta}$ in this regime may improve the sample complexity.

In the case where $g$ is a polynomial of degree $p$,
the lower bound for rotationally invariant kernels (including the neural tangent kernel at initialization) implies a complexity of at least $d^{\Omega((1-r_2)p)}$~\cite{donhauser2021rotational}. 
Thus the sample complexity of Corollary \ref{cor:spiked} can always outperform the kernel lower bound when $p$ is sufficiently large and $s$ remains constant. Finally note that as the information exponent grows, 
achieving the same sample complexity requires larger magnitudes of the spike, which is intuitive as the target becomes more complex.

\section{Implications to Neural Networks and Further Improvements}
\label{sec:imp}
\subsection{Improving Sample Complexity via Preconditioning}\label{sec:precond}
We now demonstrate that preconditioning the training dynamics with $\hat{\bSigma}\vphantom{\bSigma}^{-1}$ can remove the dependency on $\varkappa(\bSigma)$, ultimately
improving the sample complexity. Consider
the preconditioned gradient flow
\begin{equation}\label{eq:prec_gf}
    \deriv{\bw^t}{t} = -\eta(\bw^t)\hat{\bSigma}^{-1}\grad \hat{\NormRisk}(\bw^t)\ \ \text{ with } \ \  \eta(\bw) = \norm{\hat{\bSigma}^{1/2}\bw}^2.
\end{equation}
We have the following alignment result.
\begin{theorem}\label{thm:ngf_general_covariance-prec}
    Consider the same setting as Theorem \ref{thm:ngf_general_covariance}, and assume we run the preconditioned empirical gradient flow \eqref{eq:prec_gf} with number of samples
    $$n \gtrsim d\left\{\binner{\bwbar^0}{\bubar}^{2(1-s)}\lor\varepsilon^{-2}\right\},$$
    where $\gtrsim$ hides poly-logarithmic factors of $d$. Then, for
    $
        T \asymp \tau_s\left(\binner{\bwbar^0}{\bubar}\right) + \ln(1/\varepsilon),
    $
    we have
    $$\binner{\bwbar^T}{\bubar} \geq 1 - \varepsilon,$$
    with probability at least $1 - c_1d^{-c_2}$ for some universal constants $c_1,c_2 > 0$.
\end{theorem}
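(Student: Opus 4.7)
The plan is to follow the same outline as the proof of Theorem \ref{thm:ngf_general_covariance} while leveraging the preconditioning to obtain a cleaner $\bwbar$-dynamics that is independent of $\bSigma$, thereby eliminating both the $1/\mineig{\bSigma}$ factor in the time scaling and the $\varkappa(\bSigma)^2$ factor in the sample complexity. First, I would derive the normalized preconditioned dynamics by applying the chain rule to $\bwbar^t = \hat{\bSigma}^{1/2}\bw^t/\|\hat{\bSigma}^{1/2}\bw^t\|$. The identity $\hat{\bSigma}^{-1}\hat{\bSigma}^{1/2} = \hat{\bSigma}^{-1/2}$ combined with the idempotence of the projector $\ident_d - \bwbar^t{\bwbar^t}^\top$ causes all $\hat{\bSigma}^{\pm 1/2}$ factors to cancel from the $\bwbar$-evolution, yielding
\begin{equation*}
    \deriv{\bwbar^t}{t} = (\ident_d - \bwbar^t{\bwbar^t}^\top)\frac{1}{n}\sum_{i=1}^n\phi'(\binner{\bwbar^t}{\hat{\bz}^{(i)}})y^{(i)}\hat{\bz}^{(i)},
\end{equation*}
where $\hat{\bz}^{(i)} \coloneqq \hat{\bSigma}^{-1/2}\bx^{(i)}$. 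In the population limit, Stein's lemma then collapses the $(\ident_d - \bwbar{\bwbar}^\top)\bSigma(\ident_d - \bwbar{\bwbar}^\top)$ sandwich of Lemma \ref{lem:pop_flow_norm} into $-\ZetaCoeff(\binner{\bwbar^t}{\bubar})(\ident_d - \bwbar^t{\bwbar^t}^\top)\bubar$, so the dynamics becomes structurally identical to the isotropic case.

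The population-level time bound is then immediate: letting $\omega^t \coloneqq \binner{\bwbar^t}{\bubar}$ and differentiating gives $d\omega^t/dt = -\ZetaCoeff(\omega^t)(1 - (\omega^t)^2) \geq c(\omega^t)^{s-1}(1 - (\omega^t)^2)$ by Assumption \ref{assump:negative_s2}, and a standard ODE-comparison argument (exactly as in Proposition \ref{prop:pop_norm_flow}) produces $T \asymp \tau_s(\omega^0) + \ln(1/\varepsilon)$, without a $1/\mineig{\bSigma}$ factor.

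For the empirical trajectory, I would mimic the strategy of Theorem \ref{thm:ngf_general_covariance}: decompose $d\omega^t/dt$ as population drift plus fluctuation, and use a stagewise induction to ensure that the signal (of order $\omega^{s-1}$ near initialization) dominates the fluctuation throughout. The fluctuation has two ingredients: (i) the mismatch between $\hat{\bz}^{(i)}$ and $\bz^{(i)}$, controlled via Lemma \ref{lem:gaussian_covariance_prob} which gives $\|\hat{\bSigma}^{-1/2}\bSigma^{1/2} - \ident_d\| \lesssim \sqrt{d/n'}$; and (ii) the Monte-Carlo error of the empirical sum, of order $\mathcal{O}(\sqrt{d/n})$ up to polylogarithmic factors and moment estimates controlled by the growth condition $|g(\cdot)| \lesssim 1 + |\cdot|^p$. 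Crucially, neither bound picks up $\varkappa(\bSigma)$: the preconditioning has absorbed all $\bSigma$ factors from the $\bwbar$-dynamics, and only the intrinsic $\sqrt{d/n}$ sampling error survives. Requiring the fluctuation to stay below the signal $\gtrsim \omega^{s-1}$ along the trajectory yields the sample complexity $n \gtrsim d\{(\omega^0)^{2(1-s)} \vee \varepsilon^{-2}\}$, as stated.

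The main obstacle will be obtaining the uniform-in-$\bwbar^t$ concentration needed to justify the stagewise argument along the full optimization trajectory, particularly for the non-smooth ReLU activation. An $\varepsilon$-net argument over $\mathbb{S}^{d-1}$ combined with the more accurate covariance estimator $n' \gtrsim n^2$ afforded by the extra unlabeled samples should suffice, but care must be taken so that the Lipschitz constants of the intermediate maps and the granularity of the net do not inadvertently reintroduce a dependence on $\varkappa(\bSigma)$, which is precisely the gain we extract from preconditioning.
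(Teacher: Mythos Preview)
Your proposal is correct and follows essentially the same approach as the paper: derive the $\bwbar$-dynamics under preconditioning to see that the $\hat{\bSigma}$ factors cancel, reducing the sandwich $(\ident_d-\bwbar\bwbar^\top)\hat{\bSigma}(\ident_d-\bwbar\bwbar^\top)\bubar$ to the bare projector $(\ident_d-\bwbar\bwbar^\top)\bubar$, then reuse verbatim the concentration and covariance-mismatch error bounds from Steps~1--2 of the proof of Theorem~\ref{thm:ngf_general_covariance} (now multiplied by $\|\bubar^t_\perp\|\le1$ rather than $\maxeig{\bSigma}$), and finish by the same signal-versus-noise balancing and ODE integration as in Proposition~\ref{prop:pop_norm_flow}. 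The only cosmetic difference is that the paper checks signal $>$ noise at the two endpoints of the trajectory (since the signal is unimodal in $\omega$) rather than framing it as a stagewise induction, but the content is identical.
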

Preconditioning removes the condition number dependence, which is particularly important in the spiked model case where this quantity can be large.

\begin{corollary}\label{cor:spiked-prec}
Consider the setting of Theorem~\ref{thm:ngf_general_covariance-prec}, and assume we run the preconditioned empirical gradient flow \eqref{eq:prec_gf} for the $(\kappa,\btheta)$-spiked model where $\binner{\bu}{\btheta} \asymp d^{-r_1}$ and $\kappa \asymp d^{r_2}$ with $r_1 \in [0,1/2]$ and $r_2 \in [0,1]$.
Suppose $\bw^0$ is sampled uniformly from $\mathbb{S}^{d-1}$. Then, when conditioned on $\binner{\bwbar^0}{\bubar} > 0$, the sample complexity of Theorem~\ref{thm:ngf_general_covariance-prec} reads
\begin{equation}\label{eq:sample_complexity_precond}
    n \gtrsim \begin{cases}
        d\left(d^{s-1}\lor\varepsilon^{-2}\right) & 0 < r_2 < r_1\\
        d\left(d^{(s-1)(1-2(r_2-r_1))}\lor\varepsilon^{-2}\right) & r_1 < r_2 < 2r_1\\
        d\left(d^{(s-1)(1-r_2)}\lor\varepsilon^{-2}\right) & 2r_1 < r_2 < 1
    \end{cases},
\end{equation}
where $\gtrsim$ hides poly-logarithmic factors of $d$.
\end{corollary}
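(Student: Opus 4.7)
The plan is to invoke Theorem~\ref{thm:ngf_general_covariance-prec} directly: since that theorem already furnishes the abstract sample complexity $n \gtrsim d \cdot \{\binner{\bwbar^0}{\bubar}^{2(1-s)} \lor \varepsilon^{-2}\}$, the entire task reduces to producing a high-probability lower bound on $|\binner{\bwbar^0}{\bubar}|$ under the spiked model with $\binner{\bu}{\btheta} \asymp d^{-r_1}$, $\kappa \asymp d^{r_2}$, and initialization $\bw^0 \sim \Unif(\mathbb{S}^{d-1})$.

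Using the closed form $\bSigma = (\ident_d + \kappa\btheta\btheta^\top)/(1+\kappa)$, a direct expansion (working with the population $\bSigma$ for the moment; the discrepancy with $\hat{\bSigma}$ is addressed at the end) yields
\begin{equation*}
    \binner{\bwbar^0}{\bubar} = \frac{\binner{\bw^0}{\bu} + \kappa\binner{\btheta}{\bu}\binner{\bw^0}{\btheta}}{\sqrt{\bigl(1 + \kappa\binner{\btheta}{\bw^0}^2\bigr)\bigl(1 + \kappa\binner{\btheta}{\bu}^2\bigr)}}.
\end{equation*}
For $\bw^0$ uniform on the sphere, standard concentration gives $|\binner{\bw^0}{\bu}|, |\binner{\bw^0}{\btheta}| \asymp d^{-1/2}$ with constant probability. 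Substituting the scalings of $\kappa$ and $\binner{\bu}{\btheta}$, the numerator is a sum of two terms of orders $d^{-1/2}$ and $d^{r_2 - r_1 - 1/2}$, while the two denominator factors have orders $1 \lor d^{(r_2-1)/2} \asymp 1$ (since $r_2 \leq 1$) and $1 \lor d^{(r_2 - 2r_1)/2}$ respectively.

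Three regimes then emerge by comparing these orders. When $r_2 < r_1$, the numerator is $\asymp d^{-1/2}$ and the denominator is $\asymp 1$, so $|\binner{\bwbar^0}{\bubar}| \asymp d^{-1/2}$. When $r_1 < r_2 < 2r_1$, the second numerator term dominates while the denominator remains $\Theta(1)$, giving $|\binner{\bwbar^0}{\bubar}| \asymp d^{r_2 - r_1 - 1/2}$. When $2r_1 < r_2 < 1$, both the second numerator term and the second denominator factor dominate, producing $|\binner{\bwbar^0}{\bubar}| \asymp d^{r_2 - r_1 - 1/2}/d^{(r_2 - 2r_1)/2} = d^{(r_2 - 1)/2}$. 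Raising each of these to the power $2(1-s)$ and multiplying by $d$ reproduces exactly the three cases of~\eqref{eq:sample_complexity_precond}.

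Two minor subtleties remain. For the sign condition $\binner{\bwbar^0}{\bubar} > 0$, observe that in each regime one term in the numerator dominates the other by a polynomial factor in $d$; its sign agrees with that of either $\binner{\bw^0}{\bu}$ or $\binner{\bw^0}{\btheta}$, which is positive with probability $1/2$ under the symmetric law on $\mathbb{S}^{d-1}$. The replacement of $\bSigma$ by the empirical $\hat{\bSigma}$ in the definition of $\bwbar^0$ is controlled by $\|\hat{\bSigma}^{1/2}\bSigma^{-1/2} - \ident_d\| = \tilde{\mathcal{O}}(\sqrt{d/n})$ via standard Gaussian covariance concentration (Lemma~\ref{lem:gaussian_covariance_prob}); since $n \gtrsim d$ in all three regimes, this perturbation is of strictly smaller order than the dominant terms identified above, so the asymptotics transfer intact. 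The main (mild) difficulty is bookkeeping across the three regimes and confirming that the claimed dominant terms actually dominate in each; there is no deeper obstruction.
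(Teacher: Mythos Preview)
Your proposal is correct and mirrors the paper's own argument almost exactly: the paper states that Corollary~\ref{cor:spiked-prec} is immediate from Theorem~\ref{thm:ngf_general_covariance-prec} together with Lemma~\ref{lem:init}, and the content of Lemma~\ref{lem:init} is precisely the three-regime lower bound on $\binner{\bwbar^0}{\bubar}$ that you derive inline via the closed-form expansion, the $d^{-1/2}$ concentration/anti-concentration of $\binner{\bw^0}{\bu}$ and $\binner{\bw^0}{\btheta}$, and the perturbation bound from Lemma~\ref{lem:gaussian_covariance_prob}. The only cosmetic discrepancy is that the covariance estimate uses $n'$ rather than $n$ samples, but since $n' \geq n \gtrsim d$ in all regimes this does not affect the conclusion.
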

The above result improves upon Corollary~\ref{cor:spiked}; thus, making a case for preconditioning in practice. The complexity results also strictly improve upon the $\tilde{\mathcal{O}}(d^s)$ complexity in the isotropic case \cite{bietti2022learning} when $r_2 > r_1$. Further, for any $\nu > 0$, we can obtain the complexity of $\tilde{\mathcal{\mathcal{O}}}(d^{1+\nu})$ (nearly linear in dimension) when $r_2 > 1 - \nu/(s-1)$ and $r_2 > 2r_1$ or $r_1 + 1/2(1-\nu/(s-1)) < r_2 < 2r_1$. In addition to the remarks of Corollary~\ref{cor:spiked}, we note that the complexity is independent of both $r_1$ and $r_2$ when $r_2 < r_1$ (cf. Figure~\ref{fig:phase_transition} hard regime), i.e.\ the spike magnitude and the spike-target alignment have no effect on the complexity unless~$r_2 \geq r_1$.

Under the spiked covariance model, one could improve the above results by instead using spectral initialization, i.e.\ initializing at $\btheta$, which can be estimated from unlabeled data. Assuming perfect access to $\btheta$, using the statement of Theorems~\ref{thm:ngf_general_covariance} and~\ref{thm:ngf_general_covariance-prec}, this initialization would imply a sample complexity of $\tilde{\mathcal{O}}(d^{1+2r_2+((s-1)(2r_1-r_2)\lor0)})$
without preconditioning
and that of $\tilde{\mathcal{O}}(d^{1+((s-1)(2r_1-r_2)\lor0)})$ with preconditioning.

\subsection{Two-layer neural networks and learning the link function}\label{sec:learnability}
Our main focus so far was learning the target direction $\bu$. Next, we consider learning the unknown link function with a neural network, providing a complete learnability result for single index models.

We use Algorithm~\ref{alg:train} and train the first-layer of the neural network with either the empirical gradient flow~\eqref{eq:finite_sample_normalized_flow} or the preconditioned version~\eqref{eq:prec_gf}.
Then, we randomly choose the bias units and minimize the second layer weights using another gradient flow.
Our goal is to track the sample complexity $n$ needed to learn the single index target which
we compare against the results of~\cite{bietti2022learning}. We highlight that layer-wise training in Algorithm~\ref{alg:train} is frequently employed in literature \cite{ba2022high-dimensional,bietti2022learning,damian2022neural,mousavi2023neural} and in particular \cite{bietti2022learning} also used gradient flow for training.

\begin{algorithm}[h]
\begin{algorithmic}[1]
\REQUIRE $\bw^0 \in \reals^d, T,T',\Delta \in \reals_+$ and data $\{(\bx^{(i)},y^{(i)})\}_{i=1}^n$.
\STATE Train the first layer weights $\bW^T_j$ using the GF~\eqref{eq:finite_sample_normalized_flow} or the preconditioned GF~\eqref{eq:prec_gf}.
\STATE Normalize the weights $\bW^T_j \coloneqq {\bW^T_j}/{\norm{\hat{\bSigma}^{1/2}\bW^T_j}}$ for every $1 \leq j \leq m$.
\STATE Let $b_j \overset{\emph{\text{i.i.d.}}}{\sim} \Unif(-\Delta,\Delta)$ and $a^0_j = 1/m$ for $1 \leq j\leq m$.
\STATE Train the second layer weights $\ba^{T'}$ via the gradient flow
$$\deriv{\ba^t}{t} = -\grad_{\ba}\left\{\frac{1}{2n}\sum_{i=1}^{n}(\hat{y}(\bx^{(i)};\bW^T,\ba^t,\bb) - y^{(i)})^2 + \tfrac{\lambda\norm{\ba^t}^2}{2}\right\}.$$
\RETURN $(\bW^T,\ba^{T'},\bb)$.
\end{algorithmic}
\caption{Layer-wise training of a two-layer ReLU network with gradient flow (GF).}
\label{alg:train}
\end{algorithm}
\begin{theorem}\label{thm:approx}
    Let $g$ be twice weakly differentiable with information exponent $s$ and assume $g''$ has at most polynomial growth.
    Suppose $\phi$ is the ReLU activation, Assumption \ref{assump:negative_s2} holds and we run Algorithm \ref{alg:train} with $\bw^0$ initialized uniformly over $\mathbb{S}^{d-1}$. For any $\varepsilon > 0$, let $n$ and $T$ be chosen according to Theorem \ref{thm:ngf_general_covariance} when we run the gradient flow~\eqref{eq:finite_sample_normalized_flow} and Theorem \ref{thm:ngf_general_covariance-prec} when we run the preconditioned gradient flow~\eqref{eq:prec_gf}. 
    Then, for $\Delta \asymp \sqrt{\ln(nd)}$, some regime of $\lambda$ given by \eqref{eq:lambda_bound} and sufficiently large $T'$ given by \eqref{eq:tp_bound}, we have
    \begin{equation}
        \E_{(\bx,y)}\left[\left(\hat{y}(\bx;\bW^T,\ba^{T'},\bb) - y\right)^2\right] \leq C_1\Earg{\epsilon^2} + C_2\left(\varepsilon + 1/m\right),
    \end{equation}
    conditioned on $\binner{\bwbar^0}{\bubar} > 0$ with probability at least $0.99$ over the randomness of the dataset, biases, and initialization, where $C_1$ is a universal constant and $C_2$ hides $\polylog(m,n,d)$ factors. 
\end{theorem}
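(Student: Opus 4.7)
The plan is to split the analysis along the two stages of Algorithm~\ref{alg:train}. Since all $m$ neurons are seeded from the single initialization $\bw^0$, the first-layer dynamics is symmetric in $j$ and produces $\bW^T_j = \bw^T$ for every $j$. Theorem~\ref{thm:ngf_general_covariance} or Theorem~\ref{thm:ngf_general_covariance-prec} (depending on whether preconditioning is used) then delivers $\binner{\bwbar^T}{\bubar}\ge 1-\varepsilon$, and the remark following Theorem~\ref{thm:ngf_general_covariance} shows that up to an $\mathcal{O}(\varepsilon+d/n)$ perturbation the feature arguments fed to the second layer reduce to the single scalar $u := \binner{\bubar}{\bz}\sim\mathcal{N}(0,1)$. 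This is the only place where Stage~1 is used.

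Stage~2 is thus essentially a univariate random-features ridge regression. Writing
\[
\hat{y}(\bx;\bW^T,\ba,\bb) \;=\; \sum_{j=1}^{m}a_j\,\phi(u+b_j) \;+\; \mathcal{E}(\bx;\ba),
\]
the misalignment residual satisfies $\Earg{\mathcal{E}(\bx;\ba)^2}\lesssim \varepsilon\,\norm{\ba}_1^2$ by $1$-Lipschitzness of ReLU. The Stage~2 objective is strongly convex via the $\lambda$-penalty, so its flow contracts geometrically to the unique minimizer $\ba^\infty$ and the lower bound on $T'$ in~\eqref{eq:tp_bound} absorbs the optimization error. The remaining task is to supply a small-norm comparator $\ba^\star$ whose population risk is within $\mathcal{O}(\varepsilon + 1/m)$ of the Bayes noise $\Earg{\epsilon^2}$.

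For this comparator I would invoke the classical ReLU integral representation: for $g$ twice weakly differentiable with polynomially growing $g''$, there is a density $a^\star(b)\propto g''(-b)$ such that $\int_{-\Delta}^{\Delta}a^\star(b)\phi(u+b)\,\dee b = g(u)$ up to an affine term that the biases and bulk second-layer translate can absorb. Setting $a^\star_j := a^\star(b_j)/m$ for the random biases $b_j\sim\Unif(-\Delta,\Delta)$ yields a Monte-Carlo estimator with $\mathcal{O}(1/m)$ mean squared error by Bernstein, while $\Delta\asymp\sqrt{\log(nd)}$ makes the contribution from the tail event $\{|u|>\Delta\}$ polynomially small thanks to the polynomial growth of $g$. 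Plugging $\ba^\star$ into the empirical ridge objective, and combining this with Rademacher concentration of the random-feature empirical risk over the $\lambda$-dependent norm ball, produces the excess risk $C_1\Earg{\epsilon^2}+C_2(\varepsilon+1/m)$.

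The main obstacle is the simultaneous coupling of the Stage~1 alignment event, the random biases, the empirical covariance $\hat{\bSigma}$, and the choice of $\lambda$. The cleanest route is to condition on the high-probability Stage~1 event (which is independent of the Stage~2 bias randomness by construction), and then handle Stage~2 as a kernel ridge regression whose random-feature Gram matrix is $\mathcal{O}(\varepsilon)$-close in operator norm to the idealized translation-invariant ReLU kernel on $\reals$. Verifying that the bias--variance window for $\lambda$ encoded in~\eqref{eq:lambda_bound} is nonempty---i.e.\ that $\lambda$ can simultaneously be small enough for the approximation bias against $\ba^\star$ to be $\mathcal{O}(1/m)$ and large enough that the estimation variance (inflated by $1/\lambda$ and $1/(\lambda m)$ factors) stays at $\mathcal{O}(\varepsilon+1/m)$---is the most delicate part of the argument, and requires carefully bounding $\norm{\ba^\star}_2^2\lesssim\polylog(m,n,d)/m$ together with the $\norm{\ba^\infty}_1$-induced contribution of the misalignment residual~$\mathcal{E}$.
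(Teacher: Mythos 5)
Your proposal hits the three load-bearing ideas that the paper actually uses: (i) reduce Stage~2 to a univariate approximation problem once Stage~1 delivers $\binner{\bwbar^T}{\bubar}\ge 1-\varepsilon$, absorbing the misalignment residual at cost $\tilde{\mathcal{O}}(\varepsilon\norm{\ba}_1^2)$ via $1$-Lipschitzness; (ii) construct a small-norm comparator from the ReLU integral representation $a^\star(b)\propto g''(-b)$ with random biases giving $\tilde{\mathcal{O}}(1/m)$ Monte--Carlo error (this is precisely Lemma~\ref{lem:random_bias_approx} from \cite{mousavi2023neural}, whose proof uses that representation applied to a truncated surrogate $\tilde g$ to kill the affine boundary terms that you propose to absorb separately); and (iii) $\lambda$-strong convexity of the regularized Stage~2 objective with $T'$ as in \eqref{eq:tp_bound} to make the optimization error negligible. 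Where you diverge is in how the risk bound is organized and how the coupling between the learned $\bwbar^T$ and the Stage~2 data is handled. You sketch a kernel-ridge-regression argument with operator-norm closeness between the random-feature Gram matrix built on $\{\phi(\binner{\bwbar^T}{\tilde\bz^{(i)}}+b_j)\}$ and an idealized kernel, and you flag---correctly---that verifying the $\lambda$-window there would be delicate. The paper avoids this by a different decomposition: it passes from the population risk $\mathcal{R}(\bwbar)$ to the population risk $R(\bubar)$ with $\bubar$ fixed (cost $\tilde{\mathcal{O}}(\varepsilon+d/n')$), applies a Rademacher bound (Lemma~\ref{lem:rademacher}) to the function class indexed only by $(\ba,\bb)$ with $\bubar$ held fixed so that the class is data-independent and the reuse of the Stage~1 data is automatically harmless, and then returns from $\hat R(\bubar)$ to the actual training loss $\hat{\mathcal{R}}(\bwbar)$ (again $\tilde{\mathcal{O}}(\varepsilon)$) before invoking the comparator bound of Lemma~\ref{lem:train_error_bound}. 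In that route the choice of $\lambda$ in \eqref{eq:lambda_bound} is not delicate at all: one simply makes $\lambda\norm{\ba^*}^2/2$ comparable to the target error $\Earg{\epsilon^2}+\varepsilon+1/m$, which simultaneously bounds $\norm{\ba^{T'}}$ and the suboptimality of the flow. Your route could in principle be made to work, but it is more technically involved than the paper's and would also require the explicit logarithmic-radius truncation of the squared loss (the paper's event $E$ and the cap $C$ in Lemma~\ref{lem:rademacher}) before any Rademacher/McDiarmid step applies; that step is missing from your sketch.
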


The next result immediately follows from the previous theorem together with Corollaries~\ref{cor:spiked} \& \ref{cor:spiked-prec}.

\begin{corollary}\label{cor:nn}
    In the setting of Theorem~\ref{thm:approx}, if $\bSigma$ follows the $(\kappa,\btheta)$-spiked model, the sample complexity $n$ is given by \eqref{eq:sample_complexity} if we use the empirical gradient flow and \eqref{eq:sample_complexity_precond} if we use the preconditioned version.
\end{corollary}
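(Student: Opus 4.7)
\textbf{Proof plan for Corollary~\ref{cor:nn}.} The strategy is to invoke Theorem~\ref{thm:approx}, which already reduces the sample complexity of full learnability to the first-stage sample complexity required by Theorem~\ref{thm:ngf_general_covariance} (without preconditioning) or Theorem~\ref{thm:ngf_general_covariance-prec} (with preconditioning). In both cases the bound on $n$ is of the form $n \gtrsim d\,\varkappa(\bSigma)^{\alpha}\{\binner{\bwbar^0}{\bubar}^{2(1-s)} \lor \varepsilon^{-2}\}$, with $\alpha=2$ in the un-preconditioned case and $\alpha=0$ in the preconditioned case. Thus the only content of the corollary is to evaluate the two quantities $\varkappa(\bSigma)$ and $\binner{\bwbar^0}{\bubar}$ under the $(\kappa,\btheta)$-spiked model with a uniform initialization, and then check that substituting them yields \eqref{eq:sample_complexity} and \eqref{eq:sample_complexity_precond}.

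\textbf{Step 1: the condition number.} Under Assumption~\ref{assump:spiked_model}, the eigenvalues of $\bSigma$ are $1/(1+\kappa)$ with multiplicity $d-1$ and $1$ with multiplicity one, so $\varkappa(\bSigma)=1+\kappa \asymp 1\vee d^{r_2}$. This already produces the $d^{1+2r_2}$ prefactor in \eqref{eq:sample_complexity}, and its absence in \eqref{eq:sample_complexity_precond}.

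\textbf{Step 2: the initial alignment.} Writing $\bSigma\bu = (1+\kappa)^{-1}(\bu+\kappa\binner{\bu}{\btheta}\btheta)$, a direct computation gives
\begin{equation*}
\binner{\bwbar^0}{\bubar}
= \frac{\binner{\bw^0}{\bu} + \kappa\binner{\bu}{\btheta}\binner{\bw^0}{\btheta}}
{\sqrt{1+\kappa\binner{\bw^0}{\btheta}^2}\,\sqrt{1+\kappa\binner{\bu}{\btheta}^2}},
\end{equation*}
with $\hat{\bSigma}$ in place of $\bSigma$ on the left; the replacement is harmless because $\|\hat{\bSigma}-\bSigma\|\lesssim \sqrt{d/n}$ holds with high probability under the required sample sizes (as in Lemma~\ref{lem:gaussian_covariance_prob} and the remark following Theorem~\ref{thm:ngf_general_covariance}). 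For $\bw^0\sim\Unif(\mathbb{S}^{d-1})$, standard concentration of linear functionals on the sphere gives $|\binner{\bw^0}{\bu}|\asymp d^{-1/2}$ and $|\binner{\bw^0}{\btheta}|\asymp d^{-1/2}$, both with constant probability. Substituting $\binner{\bu}{\btheta}\asymp d^{-r_1}$ and $\kappa\asymp d^{r_2}$ and comparing exponents in numerator and denominator yields the three regimes: when $r_2<r_1$ the signal term dominates and the denominator is $\Theta(1)$, giving $|\binner{\bwbar^0}{\bubar}|\asymp d^{-1/2}$; when $r_1<r_2<2r_1$ the cross term dominates but the denominator stays $\Theta(1)$, giving $|\binner{\bwbar^0}{\bubar}|\asymp \kappa\binner{\bu}{\btheta}\binner{\bw^0}{\btheta}\asymp d^{r_2-r_1-1/2}$; and when $r_2>2r_1$ the denominator factor $\sqrt{1+\kappa\binner{\bu}{\btheta}^2}$ becomes of order $\sqrt{\kappa}d^{-r_1}$, so the two $\sqrt{\kappa}$ factors partially cancel and $|\binner{\bwbar^0}{\bubar}|\asymp \sqrt{\kappa}\,d^{-1/2} = d^{r_2/2-1/2}$.

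\textbf{Step 3: substitution.} Raising the three expressions above to the power $2(1-s)$ gives $d^{s-1}$, $d^{(s-1)(1-2(r_2-r_1))}$, and $d^{(s-1)(1-r_2)}$ respectively, which are exactly the exponents of the alignment-dependent factors in \eqref{eq:sample_complexity} and \eqref{eq:sample_complexity_precond}. Multiplying by $d\,\varkappa(\bSigma)^2 \asymp d^{1+2r_2}$ or by $d$ depending on whether preconditioning is used, and adjoining the $\varepsilon^{-2}$ alternative from the $\lor$, reproduces both sample complexities. The event $\binner{\bwbar^0}{\bubar}>0$ used in the statement is precisely the sign event handled in Step~2 and occurs with constant probability, so it is legitimate to condition on it as in the hypothesis.

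\textbf{Expected main obstacle.} The calculations themselves are routine, so the only delicate points are bookkeeping-level: (i) verifying that the replacement $\hat{\bSigma}\to\bSigma$ inside the definition of $\bwbar^0$ is justified at the sample sizes being claimed (needed so that the empirical $\bwbar^0$ inherits the constant-order signs and asymptotics of the population version), and (ii) tracking the boundary case behavior of the three regimes so that the $\lor$ with $\varepsilon^{-2}$ is not weakened by constants suppressed in the $\asymp$ notation. Both are handled by standard concentration arguments, and neither affects the exponents appearing in \eqref{eq:sample_complexity}--\eqref{eq:sample_complexity_precond}.
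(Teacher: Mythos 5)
Your proof is correct and takes essentially the same route as the paper: Theorem~\ref{thm:approx} reduces the full learnability question to the first-stage sample complexity of Theorem~\ref{thm:ngf_general_covariance} or Theorem~\ref{thm:ngf_general_covariance-prec}, and the spiked-model evaluation of $\varkappa(\bSigma)$ and $\binner{\bwbar^0}{\bubar}$ is exactly what drives the paper's Corollaries~\ref{cor:spiked} and~\ref{cor:spiked-prec} (via Lemma~\ref{lem:init}). The only difference is that you re-derive these computations inline whereas the paper simply cites those two corollaries, which it has already established.
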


We remark that for fixed $\varepsilon$, the sample complexity to learn $g$ in the isotropic case is $\tilde{\mathcal{O}}(d^s)$ \cite{bietti2022learning}. Under the spiked model, 
if we assume that $r_2$ is sufficiently large and $r_1$ is sufficiently small as discussed in the previous section,
Corollary~\ref{cor:nn} improves this rate to either \eqref{eq:sample_complexity} when the empirical gradient flow is used without preconditioning or to \eqref{eq:sample_complexity_precond} with preconditioning.

\section{Technical Overview}\label{sec:overview}
In this section, we briefly discuss the key intuitions that lead to the proof of our main results. We first review the case $\bSigma = \ident_d$, where we have the following decomposition for population loss
\begin{equation}\label{eq:poploss_decomp}
    R(\bw) \coloneqq \tfrac{1}{2}\Earg{(\phi(\binner{\bw}{\bx}) - y)^2} = \tfrac{1}{2}\gnorm{\phi}^2 + \tfrac{1}{2}\Earg{y^2} - \Earg{\phi(\binner{\bw}{\bx})g(\binner{\bu}{\bx})}.
\end{equation}
Notice that the only term contributing to the population gradient is the last term which measures the correlation between $\phi$ and $g$. Following the gradient follow and applying Stein's lemma yields
$$\deriv{\binner{\bw^t}{\bu}}{t} = \Earg{\phi'(\binner{\bw^t}{\bx})g'(\binner{\bu}{\bx})}(1-\binner{\bw^t}{\bu}^2) = (1-\binner{\bw^t}{\bu}^2)\sum_{j \geq s}j\alpha_j\beta_j\binner{\bw^t}{\bu}^{j-1},$$
where the second identity follows from the Hermite expansion; see also~\cite{erdogdu2016scaled,erdogdu2019scalable}. Assume $\alpha_s\beta_s > 0$ to ensure that the population dynamics will move towards $\bu$ at least near initialization. When replacing the population gradient with a full-batch gradient, we need the estimation noise to be smaller than the signal existing in the gradient. When $\binner{\bw^0}{\bu} \ll 1$, this signal is roughly of the order $\binner{\bw^0}{\bu}^{s-1}$. As the uniform concentration error over $\mathbb{S}^{d-1}$ scales with $\sqrt{d/n}$, we need $n \asymp d\binner{\bw^0}{\bu}^{2(s-1)}$ to ensure the signal remains dominant and $\bw^t$ moves towards $\bu$. When $\bw^0$ is initialized uniformly over $\mathbb{S}^{d-1}$ this translates to a sample complexity of $n \asymp d^s$, which is indeed the sample complexity obtained by \cite{bietti2022learning} who follow a similar argument (and use a more complicated architecture to remove the requirement of $\alpha_s\beta_s > 0$).

However, the behavior of the spherical dynamics changes significantly when we move to the anisotropic case. Suppose $\bSigma$ follows a $(\kappa,\bu)$-spiked model and $\phi$ is the ReLU activation. Using Lemma \ref{lem:population_grad}, it is easy to show that with the spherical gradient flow, the alignment obeys the following ODE
$$\deriv{\binner{\bw^t}{\bu}}{t} = \left\{\Earg{\phi'(\binner{\bwbar^t}{\bz})g'(\binner{\bubar}{\bz})} - \frac{\kappa \TildePsiCoeff(\bw^t)}{1+\kappa}\binner{\bw^t}{\bu}\right\}(1 - \binner{\bw^t}{\bu}^2),$$
where $\TildePsiCoeff(\bw^t)$ is introduced in Lemma \ref{lem:population_grad}. The additional $\TildePsiCoeff(\bw^t)$ term creates a force towards the equator $\binner{\bw^t}{\bu} = 0$. The presence of this term is due to the fact that unlike \eqref{eq:poploss_decomp}, the term $\Earg{\phi(\binner{\bw^t}{\bu})^2}$ is no longer independent of $\bw$ and cannot be replaced by $\gnorm{\phi}^2$. When $\bw^0$ is initialized uniformly over $\mathbb{S}^{d-1}$ and $\Omega(1) \leq \kappa \leq \mathcal{O}(d)$, we have $\binner{\bwbar^0}{\bubar} \asymp \sqrt{\kappa}\binner{\bw^0}{\bu}$. Furthermore, at this initialization $\TildePsiCoeff(\bw^0) \approx 1/2$. Therefore,
$$\deriv{\binner{\bw^t}{\bu}}{t} \approx \left\{s\alpha_s\beta_s(\sqrt{\kappa}\binner{\bw^0}{\bu})^{s-1} - \frac{\binner{\bw^0}{\bu}}{2}\right\}.$$
Then, we can observe that as $\binner{\bw^0}{\bu} \asymp 1/\sqrt{d}$, when $\kappa = \mathcal{O}(d^{1-1/(s-1)})$ the term pointing towards $\bu$ is dominated by the term pointing towards the equator, and thus moving beyond $\binner{\bw}{\bu} = \mathcal{O}(1/\sqrt{d})$ is impossible.

To remove the repulsive force in the spherical dynamics, we can directly normalize the input of $\phi$. As demonstrated by \eqref{eq:loss_decomp}, once again the only term that varies with $\bw$ would be the correlation loss. Specifically, using the result of Lemma \ref{lem:pop_flow_norm}, in the population limit we can track $\binner{\bwbar^t}{\bubar}$ via
\begin{equation}
    \deriv{\binner{\bwbar^t}{\bubar}}{t} = \Earg{\phi'(\binner{\bwbar}{\bz})g'(\binner{\bubar}{\bz})}\binner{\bubar^t_\perp}{\bSigma\bubar^t_\perp},
\end{equation}
where $\bubar^t_\perp \coloneqq \bubar - \binner{\bwbar^t}{\bubar}\bwbar^t$. Thus, the strength of the signal at initialization is of order $\binner{\bwbar^0}{\bubar}^{s-1} / \varkappa(\bSigma)$, which after controlling the error in the estimate of $\hat{\bSigma}$ and in the estimate of population gradient using finitely many samples, leads to the sample complexity $n \asymp d\varkappa(\bSigma)^2\binner{\bwbar^0}{\bubar}^{2(1-s)}$. Therefore, the improvement in comparison to the isotropic case comes from the fact that at a random initialization, the initial alignment $\binner{\bwbar^0}{\bubar}$ can be much stronger than the isotropic alignment $\binner{\bw^0}{\bu}$, which is emphasized in Corollary \ref{cor:spiked}. Additionally, using preconditioning as in Section \ref{sec:precond} will modify the dynamics of $\binner{\bwbar^t}{\bubar}$ to
\begin{equation}
    \deriv{\binner{\bwbar^t}{\bubar}}{t} = \Earg{\phi'(\binner{\bwbar}{\bz})g'(\binner{\bubar}{\bz})}\norm{\bubar^t_\perp}^2,
\end{equation}
thus removing the dependence on the condition number of $\bSigma$ in the sample complexity.

\section{Conclusion}
\label{sec:conc}
We studied the dynamics of gradient flow to learn single index models when the input data is not necessarily isotropic and its covariance has additional structure. Under a spiked model for the covariance matrix, we showed that using spherical gradient flow, as an example of a covariance-agnostic training mechanism employed in the recent literature, is unable to learn the target direction of the single index model even when the spike and the target directions are identical. In contrast, we showed that an appropriate normalization of the weights removes this problem, in which case the target direction will be successfully recovered. 
Moreover, depending on the alignment between the covariance structure and the target direction, the sample complexity of the learning task can improve upon that of its isotropic counterpart, while also outperforming lower bounds for rotationally-invariant kernels. This phenomenon is due to the additional information about the target direction contained in the covariance matrix which improves the effective alignment at initialization. Additionally, we showed that a simple preconditioning of the gradient flow using the inverse empirical covariance can improve the sample complexity, achieving almost linear rate in certain settings.

We outline a few limitations of our current work and discuss directions for future research.

\begin{itemize}[itemsep=0pt,leftmargin=15pt]

    \item While studying single index models provides a pathway to a general understanding of feature learning with structured covariance, considering multi-index models can provide a more complete picture~\cite{park2022generalization}. 
    Specifically, learning multiple-index models requires extensions of information exponent to more general complexity measures and establishing incremental learning dynamics \cite{abbe2023sgd}. We leave the interplay between these concepts and a structured covariance matrix as an interesting open problem to be studied in future work.

    \item Gradient flow for learning a single index model can be seen as an example of a Correlational Statistical Query (CSQ) algorithm \cite{bshouty2002using, reyzin2020statistical}, i.e.\ an algorithm that only accesses noisy estimates of expected correlation queries from the model. Understanding the limitations of learning single index models under a structured input covariance through a CSQ lower bound perspective is an important future direction that would complement our results in this paper.

    \item When training the first layer using a gradient flow, we considered a somewhat unconventional initialization and relied on the symmetry it induces. It is interesting to consider cases where we train a network with multiple neurons starting from a more standard initialization. This analysis is challenging due to the interactions between the neurons, but can potentially relax Assumption \ref{assump:negative_s2}.
\end{itemize}

\section*{Acknowledgments}

The authors thank Alberto Bietti and Zhichao Wang for discussions and feedback on the manuscript. TS was partially supported by JSPS KAKENHI (20H00576) and JST CREST. MAE was partially supported by
NSERC Grant [2019-06167], CIFAR AI Chairs program, CIFAR AI Catalyst grant.

\bibliographystyle{amsalpha}
\bibliography{ref}
\newpage

\appendix
\appendix

\section{Background on Hermite Expansion}\label{ap:hermite}
The normalized Hermite polynomials $\{h_j\}_{j \geq 0}$ given by \eqref{eq:hermite} provide an orthonormal basis for $L^2(\gamma)$, thus for every $f \in L^2(\gamma)$ we have
$$f = \sum_{j=0}^\infty\gbinner{f}{h_j}h_j,$$
where $\gbinner{f}{h_j} \coloneqq \E_{z\sim\mathcal{N}(0,1)}[f(z)h_j(z)]$. We will commonly invoke the following well-known properties of Hermite polynomials. If $j \geq 1$, then $h'_j = \sqrt{j}h_{j-1}$, where $h'_j$ stands for the derivative of $h_j$. Furthermore, if $z_1$ and $z_2$ are two standard Gaussian random variables with $\Earg{z_1z_2} = \rho$, then $\Earg{h_i(z_1)h_j(z_2)} = \delta_{ij}\rho^j$ where $\delta_{ij}$ is the Kronecker delta. We refer the interested reader to \cite[Chapter 11.2]{o2014analysis} for additional discussions and properties of these polynomials.

We will now discuss how our Definition \ref{def:inf_exp} relates to the original definition of information exponent of \cite{benarous2021online}. In their setting, they assume the true data distribution $\mathbb{P}_{\bu}$ is parameterized by some unit vector $\bu \in \mathbb{S}^{d-1}$, and we know the parametric family $\left\{\mathbb{P}_{\bw}\right\}_{\bw \in \mathbb{S}^{d-1}}$; thus the problem is to estimate the direction $\bu$. Furthermore, they assume the population loss, which is the expectation of some per-sample loss, has spherical symmetry, i.e.\ the population loss $R(\bw)$ can be written as $R(\bw) = \tilde{R}(\binner{\bw}{\bu})$. Then, \cite[Definition 1.2]{benarous2021online} defines the information exponent to be the degree of the first non-zero coefficient of $\tilde{R}$ in its Taylor expansion around the origin. In other words, we say $R$ has information exponent $s$ if
$$
\begin{cases}
    \frac{\dee^k\tilde{R}}{\dee z^k}(0) = 0 & 1 \leq k < s\\
    \frac{\dee^k\tilde{R}}{\dee z^k}(0) = -c < 0 & k = s\\
    \abs{\frac{\dee^k\tilde{R}}{\dee z^k}(z)} \leq C & k > s, \forall z \in [-1,1]
\end{cases},
$$
where $C,c > 0$ are universal constants. To specialize the above abstract definition to the Gaussian case, consider the setting where the input data is standard Gaussian $\bx \sim \mathcal{N}(0,\ident_d)$ and the problem is to estimate $\bu \in \mathbb{S}^{d-1}$ given a response variable $y = f(\binner{\bu}{\bx})$ with known $f$. Via the Hermite expansion of $f$, one can write
$$\tilde{R}(\binner{\bw}{\bu}) \coloneqq \frac{1}{2}\Earg{(f(\binner{\bw}{\bx}) - f(\binner{\bu}{\bx}))^2} = -\sum_{j \geq 1}\gbinner{f}{h_j}^2\binner{\bw}{\bu}^j + \text{const}.$$
Thus, the information exponent of $\tilde{R}$ is indeed the degree of the first non-zero term in the Hermite expansion of $f$. 

Now consider the general case where $\bx \sim \mathcal{N}(0,\bSigma)$. The spherical symmetry assumed in \cite{benarous2021online} no longer holds. However, after proper normalization of weights, if we consider the population loss
$$R(\bw) \coloneqq \frac{1}{2}\Earg{\left(f\left(\frac{\binner{\bw}{\bx}}{\norm{\bSigma^{1/2}\bw}}\right) - f\left(\frac{\binner{\bu}{\bx}}{\norm{\bSigma^{1/2}\bu}}\right)\right)^2},$$
then $R(\bw) = \tilde{R}\left(\tfrac{\binner{\bw}{\bSigma\bu}}{\norm{\bSigma^{1/2}\bw}\norm{\bSigma^{1/2}\bu}}\right)$. Indeed, a close examination of the arguments of \cite{benarous2021online} reveals that for their results to hold, the proper symmetry to consider is the ellipsoidal symmetry, and the proper definition of information exponent is the degree of the first non-zero term in the Hermite expansion of $\tilde{R}$, which reads
$$\tilde{R}(z) = -\sum_{j\geq1}\binner{f}{h_j}^2z^j +  \text{const}.$$
Once again, we can consistently define the information exponent to be the degree of the first non-zero term in the Hermite expansion of $f$, as long as the input is Gaussian (potentially anisotropic).

\section{Proofs of Section \ref{sec:spherical}}\label{app:spherical}
Before beginning our main discussions, we state the following lemma which is a generalization of Stein's lemma (Gaussian integration by parts), and will help obtain a closed-form expression for the population gradient. We refer to \cite{erdogdu2015newton,mousavi2023neural} for similar statements.

\begin{lemma}\label{lem:generalized_stein}
Let $f,g : \reals \to \reals$ with $g$ weakly differentiable. Suppose $\bz \sim \mathcal{N}(0,\ident_d)$. Then, for any $\bw,\bu \in \mathbb{S}^{d-1}$, we have
$$\Earg{f(\binner{\bw}{\bz})g(\binner{\bu}{\bz})\bz} = \Earg{f(\binner{\bw}{\bz})g'(\binner{\bu}{\bz})}\bu + \Earg{f(\binner{\bw}{\bz})\left\{g(\binner{\bu}{\bz})\binner{\bw}{\bz} - g'(\binner{\bu}{\bz})\binner{\bu}{\bw}\right\}}\bw.$$
\end{lemma}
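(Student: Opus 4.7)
The plan is to exploit the Gaussian orthogonal decomposition of $\bz$ with respect to $\bw$, reducing the identity to the standard scalar Stein's lemma applied to the component orthogonal to $\bw$. Concretely, write $a \coloneqq \binner{\bw}{\bz}$ and $\bz_\perp \coloneqq (\ident_d - \bw\bw^\top)\bz$, and similarly decompose $\bu = \binner{\bu}{\bw}\bw + \bu_\perp$ with $\bu_\perp \coloneqq (\ident_d - \bw\bw^\top)\bu$. Since $\bz$ is standard Gaussian and $\norm{\bw}=1$, the scalar $a$ and the vector $\bz_\perp$ are jointly Gaussian with zero covariance, hence independent. Moreover, $\binner{\bu}{\bz} = \binner{\bu}{\bw}a + \binner{\bu_\perp}{\bz_\perp}$, which expresses everything in terms of $(a,\bz_\perp)$.

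The main decomposition I would then invoke is $\bz = a\bw + \bz_\perp$, which splits the left-hand side into
\begin{equation*}
\Earg{f(a)g(\binner{\bu}{\bz})\bz} = \Earg{f(a)g(\binner{\bu}{\bz})\,a}\bw + \Earg{f(a)g(\binner{\bu}{\bz})\bz_\perp}.
\end{equation*}
The first summand is already in the form $\Earg{f(\binner{\bw}{\bz})g(\binner{\bu}{\bz})\binner{\bw}{\bz}}\bw$, which is one of the pieces appearing in the right-hand side. For the second summand, I would condition on $a$ and apply Stein's lemma in the orthogonal complement of $\bw$: viewing $\bz_\perp$ as a (degenerate) Gaussian with covariance $\ident_d - \bw\bw^\top$, and using weak differentiability of $g$, the gradient of $\bz_\perp \mapsto g(\binner{\bu}{\bw}a + \binner{\bu_\perp}{\bz_\perp})$ equals $g'(\binner{\bu}{\bz})\bu_\perp$. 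Stein's identity then yields
\begin{equation*}
\Earg{g(\binner{\bu}{\bz})\bz_\perp \,\big|\, a} = (\ident_d - \bw\bw^\top)\bu_\perp\,\Earg{g'(\binner{\bu}{\bz})\,\big|\, a} = \bu_\perp\,\Earg{g'(\binner{\bu}{\bz})\,\big|\, a},
\end{equation*}
where the second equality uses $\bu_\perp \perp \bw$. Multiplying by $f(a)$, taking expectations, and substituting $\bu_\perp = \bu - \binner{\bu}{\bw}\bw$ gives the claimed decomposition once the $\bu$- and $\bw$-components are collected.

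The only technical subtlety is justifying the conditional Stein identity under mere weak differentiability of $g$ rather than classical differentiability. I would handle this by a standard mollification argument: approximate $g$ by convolution with a smooth bump, apply the classical integration-by-parts to the mollified version (which reduces to one-dimensional Gaussian IBP after projecting $\bz_\perp$ onto the unit vector $\bu_\perp/\norm{\bu_\perp}$ and using Gaussian marginals), and then pass to the limit using dominated convergence, with $L^2(\gamma)$ control on $g'$ providing the needed integrability. The main obstacle is bookkeeping: keeping the decomposition in the form $\bu = \binner{\bu}{\bw}\bw + \bu_\perp$ clean so that the $\binner{\bu}{\bw}\Earg{f g'}$ term is correctly attributed to the $\bw$ coefficient rather than lost inside the orthogonal-complement Stein step — but this is purely algebraic and reduces to a one-line substitution once the two-term split above is in hand.
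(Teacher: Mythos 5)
Your proof is correct and follows essentially the same route as the paper: both condition on $\binner{\bw}{\bz}$, identify the conditional distribution of $\bz$ as Gaussian with covariance $\ident_d - \bw\bw^\top$, and apply Stein's lemma — your split of $\bz$ into $a\bw + \bz_\perp$ together with zero-mean Stein on $\bz_\perp$ is just the unpacked form of the affine Stein identity the paper invokes directly. The mollification remark you add for weak differentiability is a nice touch that the paper leaves implicit.
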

\begin{proof}
Consider the conditional distribution $\bz | \binner{\bw}{\bz} \sim \mathcal{N}(\overline{\bmu},\overline{\bSigma})$, where
$$\overline{\bmu} = \binner{\bw}{\bz}\bw \quad \text{and} \quad \overline{\bSigma} = \ident_d - \bw\,\bw^\top.$$
Recall that Stein's lemma (Gaussian integration by parts) states that when $\overline{\bz} \sim \mathcal{N}(\overline{\bmu},\overline{\bSigma})$, then
$$\Earg{g(\overline{\bz})\overline{\bz}} = \Earg{g(\overline{\bz})}\overline{\bmu} + \overline{\bSigma}\Earg{\grad g(\overline{\bz})}.$$
Hence,
$$\Earg{g(\binner{\bu}{\bz})\bz \,|\, \binner{\bw}{\bz}} = \Earg{g(\binner{\bu}{\bz}) \,|\, \binner{\bw}{\bz}}\binner{\bw}{\bz}\bw + (\ident_d - \bw\bw^\top)\Earg{g'(\binner{\bu}{\bz})\,|\,\binner{\bw}{\bz}}\bu.$$
Applying the tower property of conditional expectation and rearranging the terms yields the desired result.
\end{proof}

We are now ready to state and prove the expression for the population gradient when using the ReLU activation.
\begin{lemma}\label{lem:population_grad}
    Suppose $\phi$ is the ReLU activation and $g$ is weakly differentiable. Let $z \sim \mathcal{N}(0,\ident_d)$. Define $\bwbar \coloneqq \tfrac{\bSigma^{1/2}\bw}{\norm{\bSigma^{1/2}\bw}}$ and similarly define $\bubar$. Then
    $$\grad R(\bw) = \bSigma\left\{\TildePsiCoeff(\bw)\bw + \TildeZetaCoeff(\bw)\bu\right\}$$
    where
    $$\TildePsiCoeff(\bw) \coloneqq \frac{\Earg{-\phi(\binner{\bwbar}{\bz})g(\binner{\bubar}{\bz}) + \phi'(\binner{\bwbar}{\bz})g'(\binner{\bubar}{\bz})\binner{\bubar}{\bwbar}}}{\norm{\bSigma^{1/2}\bw}} + \frac{1}{2},$$
    and
    $$\TildeZetaCoeff(\bw) \coloneqq -\frac{\Earg{\phi'(\binner{\bwbar}{\bz})g'(\binner{\bubar}{\bz})}}{\norm{\bSigma^{1/2}\bu}}.$$
\end{lemma}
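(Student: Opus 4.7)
The plan is to compute $\grad R(\bw)$ by splitting the squared loss into the three standard pieces and handling each separately, relying heavily on the ReLU identities $\phi(rz)=r\phi(z)$ and $\phi'(rz)=\phi'(z)$ for $r>0$, together with $z\phi'(z)=\phi(z)$, to convert everything into expressions involving the unit vectors $\bwbar$ and $\bubar$ and the isotropic Gaussian $\bz=\bSigma^{-1/2}\bx$, at which point Lemma~\ref{lem:generalized_stein} applies directly.

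First I would write $R(\bw)=\tfrac12\E[\phi(\binner{\bw}{\bx})^2]-\E[\phi(\binner{\bw}{\bx})y]+\tfrac12\E[y^2]$. For the square term, set $r=\norm{\bSigma^{1/2}\bw}$ so $\binner{\bw}{\bx}=r\binner{\bwbar}{\bz}$ and use positive 1-homogeneity of ReLU to get $\E[\phi(\binner{\bw}{\bx})^2]=r^2\E[\phi(\binner{\bwbar}{\bz})^2]=r^2\gnorm{\phi}^2=\tfrac12 r^2=\tfrac12\bw^\top\bSigma\bw$, since $\gnorm{\phi}^2=1/2$ for ReLU and $\bwbar\in\mathbb{S}^{d-1}$. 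Hence $\grad\bigl\{\tfrac12\E[\phi^2]\bigr\}=\tfrac12\bSigma\bw$, which contributes the $+\tfrac12$ appearing inside $\TildePsiCoeff(\bw)$.

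For the cross term, the noise drops out since $\epsilon$ is independent with zero mean, so $\E[\phi(\binner{\bw}{\bx})y]=\E[\phi(\binner{\bw}{\bx})g(\binner{\bubar}{\bz})]$ (using $\binner{\bu}{\bx}/\norm{\bSigma^{1/2}\bu}=\binner{\bubar}{\bz}$). Differentiating under the expectation yields
\begin{equation*}
\grad\E[\phi(\binner{\bw}{\bx})g(\binner{\bubar}{\bz})]=\E[\phi'(\binner{\bw}{\bx})g(\binner{\bubar}{\bz})\bx]=\bSigma^{1/2}\E[\phi'(\binner{\bwbar}{\bz})g(\binner{\bubar}{\bz})\bz],
\end{equation*}
where I used the 0-homogeneity $\phi'(\binner{\bw}{\bx})=\phi'(\binner{\bwbar}{\bz})$ and $\bx=\bSigma^{1/2}\bz$. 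Now Lemma~\ref{lem:generalized_stein} with $f=\phi'$ and unit vectors $\bwbar,\bubar$ gives
\begin{equation*}
\E[\phi'(\binner{\bwbar}{\bz})g(\binner{\bubar}{\bz})\bz]=\E[\phi'(\binner{\bwbar}{\bz})g'(\binner{\bubar}{\bz})]\bubar+\E\bigl[\phi'(\binner{\bwbar}{\bz})\{g(\binner{\bubar}{\bz})\binner{\bwbar}{\bz}-g'(\binner{\bubar}{\bz})\binner{\bubar}{\bwbar}\}\bigr]\bwbar.
\end{equation*}
Applying $\phi'(z)z=\phi(z)$ inside the second expectation converts $\phi'(\binner{\bwbar}{\bz})\binner{\bwbar}{\bz}g(\binner{\bubar}{\bz})$ to $\phi(\binner{\bwbar}{\bz})g(\binner{\bubar}{\bz})$, which is exactly the $\phi g$ term appearing in $\TildePsiCoeff$.

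Finally I would substitute back $\bSigma^{1/2}\bubar=\bSigma\bu/\norm{\bSigma^{1/2}\bu}$ and $\bSigma^{1/2}\bwbar=\bSigma\bw/r$, collect the $\bSigma\bu$ and $\bSigma\bw$ components, and combine with the $\tfrac12\bSigma\bw$ piece from the square term (with the minus sign in front of the cross term flipping signs appropriately). The coefficient of $\bSigma\bu$ becomes $-\E[\phi'(\binner{\bwbar}{\bz})g'(\binner{\bubar}{\bz})]/\norm{\bSigma^{1/2}\bu}=\TildeZetaCoeff(\bw)$, and the coefficient of $\bSigma\bw$ becomes $\tfrac12+\E[-\phi(\binner{\bwbar}{\bz})g(\binner{\bubar}{\bz})+\phi'(\binner{\bwbar}{\bz})g'(\binner{\bubar}{\bz})\binner{\bubar}{\bwbar}]/r=\TildePsiCoeff(\bw)$, yielding the claimed formula. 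The only delicate point is the bookkeeping around normalizations and the use of the ReLU identities; the weak differentiability of $\phi$ is enough to justify the Stein-type identity and the interchange of gradient and expectation, since the integrands are dominated by polynomial-growth envelopes given the assumed growth of $g$.
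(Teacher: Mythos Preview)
Your proposal is correct and follows essentially the same route as the paper: split $R$ into the square, cross, and constant terms, use ReLU homogeneity to reduce the square term to $\tfrac12\bw^\top\bSigma\bw$, differentiate the cross term, rewrite it via $\bx=\bSigma^{1/2}\bz$ and $\phi'(\binner{\bw}{\bx})=\phi'(\binner{\bwbar}{\bz})$, apply Lemma~\ref{lem:generalized_stein}, use $\phi'(z)z=\phi(z)$, and then undo the normalizations $\bSigma^{1/2}\bubar=\bSigma\bu/\norm{\bSigma^{1/2}\bu}$ and $\bSigma^{1/2}\bwbar=\bSigma\bw/\norm{\bSigma^{1/2}\bw}$ to read off the coefficients. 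The paper's proof is identical in structure and in the key identities invoked.
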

\begin{proof}
    Notice that the population risk is given by
$$R(\bw) = \frac{1}{2}\Earg{\left(\phi(\binner{\bw}{\bx}) - g\left(\frac{\binner{\bu}{\bx}}{\norm{\bSigma^{1/2}\bu}}\right)\right)^2} + \frac{\Earg{\epsilon^2}}{2} = \frac{1}{2}\Earg{\phi(\binner{\bw}{\bx})^2} - \Earg{\phi(\binner{\bw}{\bx})g} + \frac{\Earg{y^2}}{2},$$
Notice that $\bx=\bSigma^{1/2}\bz$. By the homogeneity of ReLU, we can rewrite the first term as
$$\Earg{\phi(\binner{\bw}{\bx})^2} = \bw^\top\bSigma\bw\gnorm{\phi}^2 = \frac{\bw^\top\bSigma\bw}{2}.$$
Then we have,
$$\grad R(\bw) = \frac{\bSigma\bw}{2} - \underbrace{\Earg{\phi(\binner{\bw}{\bx})'g\left(\frac{\binner{\bu}{\bx}}{\norm{\bSigma^{1/2}\bu}}\right)\bx}}_{\eqqcolon \upsilon(\bw)}$$
Note that $\phi'(\binner{\bw}{\bx}) = \phi'(\binner{\bwbar}{\bz})$. Then,
\begin{align*}
    \upsilon(\bw) &= \bSigma^{1/2}\Earg{\phi'(\binner{\bwbar}{\bz})'g(\binner{\bubar}{\bz})\bz}\\
    &= \bSigma^{1/2}\left\{\Earg{\phi'(\binner{\bwbar}{\bz})g'(\binner{\bubar}{\bz})}\bubar + \Earg{\phi(\binner{\bwbar}{\bz}g(\binner{\bubar}{\bz}) - \phi'(\binner{\bwbar}{\bz})g'(\binner{\bubar}{\bz})\binner{\bubar}{\bwbar}}\bwbar\right\}\\
    &= \bSigma\left\{\frac{\Earg{\phi'(\binner{\bwbar}{\bz})g'(\binner{\bubar}{\bz})}}{\norm{\bSigma^{1/2}\bu}}\bu + \frac{\Earg{\phi(\binner{\bwbar}{\bz}g(\binner{\bubar}{\bz}) - \phi'(\binner{\bwbar}{\bz})g'(\binner{\bubar}{\bz})\binner{\bubar}{\bwbar}}}{\norm{\bSigma^{1/2}\bw}}\bw\right\}
\end{align*}
where we used Lemma \ref{lem:generalized_stein} and the fact that $\phi'(z)z = \phi(z)$. Therefore,
$$\grad R(\bw) = \bSigma\left\{\TildePsiCoeff(\bw)\bw + \TildeZetaCoeff(\bw)\bu\right\}$$
which concludes the proof.
\end{proof}

Particularly, the above lemma yields the following corollary for the spherical dynamics in the population limit.
\begin{corollary}\label{cor:spherical-gf}
    Suppose $\{\bw^t\}_{t\geq 0}$ is a solution to the population spherical gradient flow \eqref{eq:spherical-gf}, $\phi$ is the ReLU activation, and $\bSigma$ follows the $(\kappa,\btheta)$-spiked model. Then,
    \begin{align}
        \deriv{\binner{\bw^t}{\bu}}{t} =& -\frac{\TildeZetaCoeff(\bw^t)(1 - \binner{\bw^t}{\bu}^2)}{1 + \kappa}\nonumber\\
        &- \frac{\kappa\left\{\TildePsiCoeff(\bw^t)\binner{\bw^t}{\btheta} + \TildeZetaCoeff(\bw^t)\binner{\bu}{\btheta}\right\}}{1 + \kappa}(\binner{\btheta}{\bu} - \binner{\bw^t}{\btheta}\binner{\bw^t}{\bu}).
    \end{align}
\end{corollary}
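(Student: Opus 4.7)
The plan is to derive the stated ODE by direct substitution, starting from the spherical gradient flow \eqref{eq:spherical-gf} and using the closed form of the population gradient from Lemma~\ref{lem:population_grad}, specialized to the spiked covariance. Since $\|\bw^t\|=1$ along the flow, I first compute
\[
\deriv{\binner{\bw^t}{\bu}}{t} = -\binner{\grad^S R(\bw^t)}{\bu} = -\binner{\grad R(\bw^t)}{\bu} + \binner{\grad R(\bw^t)}{\bw^t}\binner{\bw^t}{\bu},
\]
which I rewrite as $-\binner{\grad R(\bw^t)}{\,\bu - \binner{\bw^t}{\bu}\bw^t\,}$.

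Next I plug in $\grad R(\bw) = \bSigma\{\TildePsiCoeff(\bw)\bw + \TildeZetaCoeff(\bw)\bu\}$ from Lemma~\ref{lem:population_grad} and expand $\bSigma = (\ident_d + \kappa\btheta\btheta^\top)/(1+\kappa)$ to get
\[
(1+\kappa)\grad R(\bw) = \TildePsiCoeff(\bw)\bw + \TildeZetaCoeff(\bw)\bu + \kappa\bigl\{\TildePsiCoeff(\bw)\binner{\btheta}{\bw} + \TildeZetaCoeff(\bw)\binner{\btheta}{\bu}\bigr\}\btheta.
\]
I then take inner products of this vector with $\bu$ and with $\bw^t$ separately, which is just three scalar inner products among $\bw^t,\bu,\btheta$.

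After forming the combination $\binner{\grad R(\bw^t)}{\bu} - \binner{\grad R(\bw^t)}{\bw^t}\binner{\bw^t}{\bu}$, I group the result into a $\kappa$-independent piece and a piece proportional to $\kappa$. The isotropic piece collapses cleanly using $\|\bw^t\|^2=1$ into $\TildeZetaCoeff(\bw^t)(1-\binner{\bw^t}{\bu}^2)$, since the $\TildePsiCoeff$ contributions cancel between the two inner products. For the $\kappa$-proportional piece I factor out $\TildePsiCoeff(\bw^t)\binner{\btheta}{\bw^t} + \TildeZetaCoeff(\bw^t)\binner{\btheta}{\bu}$ and observe that the remaining factor bundles into $\binner{\btheta}{\bu} - \binner{\bw^t}{\btheta}\binner{\bw^t}{\bu}$. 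Negating yields exactly the formula in the corollary.

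There is no real obstacle; the only thing to be careful about is bookkeeping in step four, in particular ensuring that the four cross-terms involving $\binner{\btheta}{\bw^t}$ and $\binner{\btheta}{\bu}$ regroup correctly so that the common scalar factor $\binner{\btheta}{\bu} - \binner{\bw^t}{\btheta}\binner{\bw^t}{\bu}$ emerges and matches the stated expression.
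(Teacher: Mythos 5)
Your proposal is correct and is essentially the computation the paper intends: the corollary is stated as an immediate consequence of Lemma~\ref{lem:population_grad}, and your route of expanding $\bSigma = (\ident_d + \kappa\btheta\btheta^\top)/(1+\kappa)$ inside $\grad R$, then pairing with $\bu - \binner{\bw^t}{\bu}\bw^t$, is exactly the intended derivation. Your key bookkeeping observation checks out: writing $a=\binner{\bw^t}{\bu}$, $b=\binner{\bw^t}{\btheta}$, $c=\binner{\bu}{\btheta}$ and using $\|\bw^t\|=\|\bu\|=\|\btheta\|=1$, the isotropic contribution is $-(\TildePsiCoeff a + \TildeZetaCoeff) + (\TildePsiCoeff + \TildeZetaCoeff a)a = -\TildeZetaCoeff(1-a^2)$ (so the $\TildePsiCoeff$ terms indeed cancel), and the $\kappa$-part is $-(\TildePsiCoeff b + \TildeZetaCoeff c)c + (\TildePsiCoeff b + \TildeZetaCoeff c)ba = -(\TildePsiCoeff b + \TildeZetaCoeff c)(c - ba)$, giving exactly the stated ODE after dividing by $1+\kappa$.
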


\subsection{Proof of Theorem \ref{thm:unlearn}}
Plugging in $\btheta = \bu$ in Corollary \ref{cor:spherical-gf}, we obtain
\begin{equation}\label{eq:spike_target_aligned_dynamics}
    \deriv{\binner{\bw^t}{\bu}}{t} = -\left\{\TildeZetaCoeff(\bw^t) + \frac{\kappa\TildePsiCoeff(\bw^t)\binner{\bw^t}{\bu}}{1 + \kappa}\right\}(1 - \binner{\bu}{\bw^t}^2).
\end{equation}
To prove the statement of the theorem, we will show that whenever
$$\abs{\binner{\bw^t}{\bu}} \leq \frac{C}{\sqrt{d}},$$
we will have $\deriv{\binner{\bw^t}{\bu}^2}{t} < 0$, thus when initialized from $\mathcal{O}(1/\sqrt{d})$, $\binner{\bw^t}{\bu}$ can never escape the saddle point near the equator $\binner{\bw}{\bu} = 0$.

Recall from the properties of the Hermite expansion in Appendix \ref{ap:hermite} that
$$g' = \sum_{j\geq 1}\sqrt{j}\alpha_jh_{j-1} \quad \text{and} \quad \phi' = \sum_{j \geq 1}\sqrt{j}\beta_jh_{j-1}.$$
Since we additionally assume $\Earg{g} = \alpha_0 = 0$, by the definition of $\TildePsiCoeff$ in Lemma \ref{lem:population_grad} and the properties of Hermite expansion discussed in Appendix \ref{ap:hermite}, we have
$$\TildePsiCoeff(\bw^t) = \frac{\sum_{j\geq s}(j-1)\alpha_j\beta_j\binner{\bwbar^t}{\bu}^j}{\norm{\bSigma^{1/2}\bw}} + \frac{1}{2},$$
and similarly
$$\TildeZetaCoeff(\bw^t) \coloneqq -\sum_{j\geq s}j\alpha_j\beta_j\binner{\bwbar^t}{\bu}^{j-1}.$$
Thus we obtain
$$\deriv{\binner{\bw^t}{\bu}^2}{t} = 2F(\bw^t)(1-\binner{\bw^t}{\bu}^2),$$
where
$$F(\bw^t) \coloneqq \sum_{j \geq s}j\alpha_j\beta_j\binner{\bwbar^t}{\bu}^{j-1}\binner{\bw^t}{\bu} - \frac{\kappa}{1+\kappa}\sum_{j\geq s}(j-1)\alpha_j\beta_j\binner{\bwbar^t}{\bu}^{j+1}\binner{\bw^t}{\bu} - \frac{\kappa\binner{\bw^t}{\bu}^2}{2(1 + \kappa)}.$$
We proceed by upper bounding $F$. To do so, first note that
$$\norm{\bSigma^{1/2}\bw} = \sqrt{\frac{1 + \kappa\binner{\bw}{\bu}^2}{1 + \kappa}} \geq \sqrt{\frac{1}{1 + \kappa}}.$$
To bound the first term of $F$, we have
\begin{align*}
    \sum_{j\geq s}j\alpha_j\beta_j\binner{\bwbar^t}{\bu}^{j-1}\binner{\bw^t}{\bu} &\leq \abs{\binner{\bw^t}{\bu}}\abs{\binner{\bwbar^t}{\bu}}^{s-1}\sum_{j\geq s}j\abs{\alpha_j\beta_j}\abs{\binner{\bwbar^t}{\bu}}^{j-s}\\
    &\leq \gnorm{\phi'}\gnorm{g'}\abs{\binner{\bwbar^t}{\bu}}^{s-1}\abs{\binner{\bw^t}{\bu}}\\
    &\leq \gnorm{\phi'}\gnorm{g'}(1+\kappa)^{(s-1)/2}\abs{\binner{\bw^t}{\bu}}^s.
\end{align*}
Similarly,
\begin{align*}
    -\frac{\kappa}{1 + \kappa}\sum_{j\geq s}(j-1)\alpha_j\beta_j\binner{\bwbar^t}{\bu}^{j+1}\binner{\bw^t}{\bu} \leq \gnorm{\phi'}\gnorm{g'}(1+\kappa)^{(s+1)/2}\abs{\binner{\bw^t}{\bu}}^{s+2}.
\end{align*}
Hence, for $\kappa \geq 1$,
$$F(\bw^t) \leq \gnorm{\phi'}\gnorm{g'}(1+\kappa)^{(s-1)/2}\abs{\binner{\bw^t}{\bu}}^s\left(1 + (1 + \kappa)\binner{\bw^t}{\bu}^2\right) - \frac{\binner{\bw^t}{\bu}^2}{4}.$$
Suppose $\kappa < d/C^2 - 1$, then
\begin{align*}
    F(\bw^t) &\leq \binner{\bw^t}{\bu}^2\left(2\gnorm{\phi'}\gnorm{g'}(1+\kappa)^{(s-1)/2}\abs{\binner{\bw^t}{\bu}}^{s-2} - 1/4\right)\\
    &\leq \binner{\bw^t}{\bu}^2\left(2\gnorm{\phi'}\gnorm{g'}C^{s-2}\sqrt{\frac{(1+\kappa)^{s-1}}{d^{s-2}}} - 1/4\right).
\end{align*}
Thus, for any $\kappa$ such that
$$1 \leq \kappa \leq \left\{\frac{d^{\tfrac{s-2}{s-1}}}{(8C^{s-2}\gnorm{\phi'}\gnorm{g'})^{\tfrac{2}{s-1}}} \land \frac{d}{C^2}\right\}-1$$
and any $\bw^t$ such that $\abs{\binner{\bw^t}{\bu}} \leq C/\sqrt{d}$, we have $\deriv{\binner{\bw^t}{\bu}^2}{t} \leq 0$, hence
$\sup_{t \geq 0}\abs{\binner{\bw^t}{\bu}} \leq C/\sqrt{d},$
as long as the above holds true at initialization.

Finally, we will show $\abs{\binner{\bw^0}{\bu}} \leq C/\sqrt{d}$ with probability at least $0.99$ for a suitable choice of constant $C$. Indeed, this is an elementary concentration of measure result on the unit sphere. For simplicity, we avoid performing sharp probability of failure analysis and only remark that $\Earg{\binner{\bw^0}{\bu}^2} = 1/d$, thus by the Markov inequality
$$\Parg{\binner{\bw^0}{\bu}^2 \geq C^2/d} \leq 1/C^2,$$
hence a choice of $C \geq 10$ suffices, and the proof is complete.
\qed

\subsection{Extremely Large Spike}
In this section, we will show that under extremely large spike, the spherical gradient flow \eqref{eq:spherical-gf} can potentially recover the true direction. Namely, we will prove the following proposition.
\begin{proposition}
    Suppose we initialize the spherical population gradient flow~\eqref{eq:spherical-gf} from $\bw^0$. Let $\phi$ be the ReLU activation and assume
    $$\gbinner{\phi}{g} \coloneqq \Eargs{z \sim \mathcal{N}(0,1)}{\phi(z)g(z)} = \alpha > 1/2,$$
    and $\kappa \geq \frac{C}{\binner{\bw^0}{\bu}^2}$ for a sufficiently large constant $C>0$ depending only on $g$. Then, the gradient flow on the sphere satisfies
    \begin{equation}
        \begin{cases}
        \binner{\bw^T}{\bu} \geq 1 - \varepsilon & \text{if} \quad \binner{\bw^0}{\bu} > 0\\
        \binner{\bw^T}{\bu} \leq -1 + \varepsilon & \text{if} \quad \binner{\bw^0}{\bu} < 0
    \end{cases}
    \end{equation}
    whenever
    \begin{equation}
        T \geq \frac{1}{\alpha-1/2}\ln(2/\varepsilon).
    \end{equation}
\end{proposition}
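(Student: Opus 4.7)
The plan is to exploit $\btheta=\bu$ together with $\kappa(m^0)^2 \geq C$ (writing $m^t := \binner{\bw^t}{\bu}$) to reduce the spherical flow of Corollary~\ref{cor:spherical-gf} to an approximately scalar ODE in $m^t$, and then conclude by Gr\"onwall. With $\bSigma = (\ident_d + \kappa\bu\bu^\top)/(1+\kappa)$ one directly computes $\bubar = \bu$, $\norm{\bSigma^{1/2}\bw^t}^2 = (1+\kappa(m^t)^2)/(1+\kappa)$, and
\begin{equation*}
    \rho^t \;:=\; \binner{\bwbar^t}{\bu} \;=\; \frac{m^t\sqrt{1+\kappa}}{\sqrt{1+\kappa(m^t)^2}}.
\end{equation*}
The hypothesis $\kappa(m^0)^2 \geq C$ immediately yields $1 - (\rho^0)^2 \leq 1/(1+C)$, so already at initialization $\bwbar^0$ is $O(1/\sqrt{C})$-close to $\bu$ whenever $m^0 > 0$.

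I would then rewrite the coefficients of Lemma~\ref{lem:population_grad} through the Hermite decompositions $\phi=\sum_j\beta_jh_j$, $g=\sum_j\alpha_jh_j$ and the correlation identity $\Earg{h_j(z_1)h_j(z_2)}=\rho^j$, obtaining
\begin{equation*}
    \TildeZetaCoeff(\bw^t) = -\sum_{j\geq 1}j\alpha_j\beta_j(\rho^t)^{j-1},
    \qquad
    \norm{\bSigma^{1/2}\bw^t}\!\left(\TildePsiCoeff(\bw^t)-\tfrac{1}{2}\right) = \sum_{j\geq 0}(j-1)\alpha_j\beta_j(\rho^t)^j.
\end{equation*}
In the idealized limit $\rho^t \uparrow 1$, $\norm{\bSigma^{1/2}\bw^t}\to m^t$, and $\kappa/(1+\kappa)\to 1$, the two Hermite sums telescope using $\sum_j\alpha_j\beta_j=\alpha$, and \eqref{eq:spike_target_aligned_dynamics} collapses to the clean scalar ODE $\dot m^t = (\alpha - \tfrac{1}{2}m^t)(1-(m^t)^2)$. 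For $\alpha > 1/2$ and $m^t \in (0,1)$ the right-hand side is bounded below by $(\alpha-\tfrac{1}{2})(1-m^t)$, and Gr\"onwall yields $1 - m^t \leq (1-m^0)e^{-(\alpha-1/2)t} \leq e^{-(\alpha-1/2)t}$, matching the claimed time $T \geq (\alpha-1/2)^{-1}\ln(2/\varepsilon)$.

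To make this rigorous at finite $\kappa$, I would bound the Hermite-series errors $|\sum_j\alpha_j\beta_j((\rho^t)^j-1)|$ and $|\sum_j j\alpha_j\beta_j((\rho^t)^{j-1}-1)|$ by a constant multiple of $1-\rho^t$, with constant depending only on $\gnorm{\phi}$, $\gnorm{g}$, $\gnorm{g'}$. Combined with $1 - \rho^t \leq 1/(1+\kappa(m^t)^2)$, these bounds let us choose $C$ large enough (depending only on $g$) so that all perturbations are absorbed into the slack $\alpha - 1/2$, preserving the clean differential inequality $\dot m^t \geq (\alpha-\tfrac{1}{2})(1-m^t)$ along the actual trajectory. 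The case $m^0 < 0$ follows from the $\bu\mapsto-\bu$ symmetry of the $(\kappa,\bu)$-spiked covariance.

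The main obstacle is closing the self-consistency loop: the differential inequality requires $\rho^t$ close to $1$, which requires $m^t \gtrsim m^0$ (so that $\kappa(m^t)^2 \geq C$), which in turn depends on the differential inequality itself. A standard bootstrap, defining $\tau := \inf\{t:m^t \leq m^0/2\}$ and ruling out $\tau < \infty$ via the inequality on $[0,\tau)$, should close this loop; the delicate part is making the $C$-dependence on the Hermite moments of $g$ explicit so that finite-$\kappa$ perturbations cannot overwhelm the constant $\alpha - 1/2$.
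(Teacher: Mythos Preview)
Your reduction to the scalar ODE $\dot m^t = (\alpha - m^t/2)(1-(m^t)^2)$ in the $\kappa\to\infty$ limit is correct and matches the paper's target, but the paper organizes the finite-$\kappa$ errors differently, and one of your stated error bounds is not quite justified for ReLU.

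The paper does \emph{not} attempt to bound $|\sum_j j\alpha_j\beta_j((\rho^t)^{j-1}-1)|$. Instead it rewrites the drift as $B_1 + B_2 - \tfrac{\kappa m}{2(1+\kappa)}$ with $B_1 = \Earg{\phi'g'}/(1+\kappa m^2)$ and $B_2 = c_\kappa(\bw^t)\,\Earg{\phi(\binner{\bwbar}{\bz})g(\binner{\bu}{\bz})}$. Since $B_1$ already carries the prefactor $1/(1+\kappa m^2)$, the crude Cauchy--Schwarz bound $|B_1|\le \gnorm{\phi'}\gnorm{g'}/(1+\kappa m^2)$ suffices. This matters: your Hermite-difference bound on $\TildeZetaCoeff$ would require $\sum_j j(j-1)|\alpha_j\beta_j|<\infty$, and with ReLU ($\sum_j j^2\beta_j^2=\infty$) this is not controlled by $\gnorm{g}$ and $\gnorm{g'}$ alone. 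For $B_2$ the paper uses the $1$-Lipschitzness of ReLU directly, giving an error $\lesssim \gnorm{g}\,\|\bwbar-\bu\| \lesssim \gnorm{g}\sqrt{1-(\rho^t)^2}$ (so $\sqrt{1-\rho}$ rather than your $1-\rho$, which is still enough).

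Second, the paper notes that the resulting lower bound on $\dot m/(1-m^2)$ is \emph{increasing} in $m^t$, so verifying it at $t=0$ suffices; this replaces your bootstrap with a one-line monotonicity argument and yields $\dot m \ge \tfrac{\alpha-1/2}{2}(1-m^2)$, whose integration gives the claimed time. Your bootstrap would also close, but the monotonicity makes it immediate.
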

Before proceeding to the proof, we notice that if we uniformly initialize $\bw^0$ over $\mathbb{S}^{d-1}$, then the typical value for $\binner{\bw^0}{\bu}$ is of order $d^{-1/2}$, meaning that the above proposition asks for $\kappa = \Omega(d)$. This is a regime where lower bounds for the sample complexity of kernel methods are $\Omega(1)$~\cite{donhauser2021rotational}, thus no meaningful separation in terms of dimension dependency of the sample complexity between neural networks and kernel methods is possible, as the problem becomes effectively one-dimensional.

\begin{proof}
The cases where $\binner{\bw^0}{\bu} > 0$ and $\binner{\bw^0}{\bu} < 0$ are symmetric, thus we only present the proof for the former. Using \eqref{eq:spike_target_aligned_dynamics}, we can write the dynamics on the sphere more explicitly as
\begin{equation*}
    \deriv{\binner{\bw^t}{\bu}}{t} = \left\{\underbrace{\frac{\Earg{\phi'(\binner{\bwbar^t}{\bz})g'(\binner{\bu}{\bz})}}{1 + \kappa\binner{\bw^t}{\bu}^2}}_{\eqqcolon B_1} + \underbrace{\frac{\kappa\binner{\bw^t}{\bu}\Earg{\phi(\binner{\bwbar^t}{\bz})g(\binner{\bu}{\bz})}}{\sqrt{1+\kappa}\sqrt{1+\kappa\binner{\bw^t}{\bu}^2}}}_{\eqqcolon B_2} - \frac{\kappa\binner{\bw^t}{\bu}}{2(1+\kappa)}\right\}(1-\binner{\bw^t}{\bu}^2).
\end{equation*}
Our goal is to study the regime of large $\kappa$, therefore we will bound how much $B_1$ and $B_2$ can deviate from their corresponding $\kappa = \infty$ values. In particular, we have
$$B_1 \geq -\frac{\gnorm{\phi'}\gnorm{g'}}{1 + \kappa\binner{\bw^t}{\bu}^2} = \frac{-\gnorm{g'}/2}{1 + \kappa\binner{\bw^t}{\bu}^2}.$$
Furthermore, assuming $\binner{\bw^t}{\bu} > 0$ and $\gbinner{\phi}{g} > 0$, let $c_\kappa(\bw^t) \coloneqq \frac{\kappa\binner{\bw^t}{\bu}}{\sqrt{1+\kappa}\sqrt{1+\kappa\binner{\bw^t}{\bu}^2}}$. Then, by the Lipschitzness of $\phi$,
\begin{align*}
    B_2 &= c_\kappa(\bw^t)\gbinner{\phi}{g} + c_\kappa(\bw^t)\Earg{\left(\phi(\binner{\bwbar^t}{\bz}) - \phi(\binner{\bu}{\bz})\right)g(\binner{\bu}{\bz})}\\
    &\geq c_\kappa(\bw^t)\gbinner{\phi}{g} - \abs{c_\kappa(\bw^t)}\gnorm{g}\norm{\bwbar^t - \bu}\\
    &\geq c_\kappa(\bw^t)\gbinner{\phi}{g} - \abs{c_\kappa(\bw^t)}\gnorm{g}\sqrt{2(1 - \binner{\bwbar^t}{\bu}^2)}\\
    &\geq c_\kappa(\bw^t)\gbinner{\phi}{g} - \frac{\sqrt{2\kappa}\gnorm{g}\abs{\binner{\bw^t}{\bu}}}{1 + \kappa\binner{\bw^t}{\bu}^2}.
\end{align*}
where we used $\bwbar^t = \frac{\bSigma^{1/2}\bw^t}{\norm{\bSigma^{1/2}\bw^t}}$ in the last step. Suppose $\binner{\bw^t}{\bu} > 0$ (which holds at least on a neighborhood around initialization, and as we will see below holds for all $t > 0$), then,
$$c_\kappa(\bw^t) \geq \frac{\kappa\binner{\bw^t}{\bu}^2}{1 + \kappa\binner{\bw^t}{\bu}^2}.$$
As a result, we obtain
$$B_1 + B_2 -\frac{\kappa\binner{\bw^t}{\bu}}{2(1+\kappa)} \geq \gbinner{\phi}{g}\left(1 - \frac{1}{\kappa\binner{\bw^t}{\bu}^2}\right) - \frac{\gnorm{g'}/2+\sqrt{2}\gnorm{g}}{\sqrt{\kappa\binner{\bw^t}{\bu}^2}} - \frac{1}{2}.$$
Consequently, the lower bound of the time derivative of $\binner{\bw^t}{\bu}$ becomes larger as $\binner{\bw^t}{\bu}$ increases. Therefore, assuming $\binner{\bw^0}{\bu} > 0$, we only need to control this lower bound at initialization. Assume
$$\kappa \geq \frac{4}{\binner{\bw^0}{\bu}^2(\alpha-1/2)}\left\{\alpha\lor \frac{(\gnorm{g'}+\sqrt{8}\gnorm{g})^2}{\alpha-1/2}\right\}.$$
From this, we conclude that when $\binner{\bw^t}{\bu} > 0$ and $\gbinner{\phi}{g} = \alpha > 1/2$, we have
$$\deriv{\binner{\bw^t}{\bu}}{t} \geq \frac{\alpha-1/2}{2}(1-\binner{\bw^t}{\bu}^2),$$
integration yields the desired result.   
\end{proof}

\section{Proofs of Section \ref{sec:norm_gf}}
We begin by stating the closed-form expression for the population gradient, i.e.\ the counterpart of Lemma \ref{lem:population_grad} in the normalized setting.
\begin{lemma}\label{lem:pop_grad_norm}
    Consider the population risk $\mathcal{R}(\bw)$ defined by \eqref{eq:loss_decomp}, recall that
    $$\NormRisk(\bw) = \Earg{-\phi\left(\frac{\binner{\bw}{\bx}}{\norm{\bSigma^{1/2}\bw}}\right)g(\binner{\bubar}{\bz})} + \frac{1}{2}\gnorm{\phi}^2 + \frac{1}{2}\Earg{y^2}.$$
    Then,
    \begin{equation}
        \grad \NormRisk(\bw) = \frac{\bSigma^{1/2}(\ident_d - \bwbar\,\bwbar^\top)\ZetaCoeff(\binner{\bwbar}{\bubar})\bubar}{\norm{\bSigma^{1/2}\bw}},
    \end{equation}
    where
    \begin{equation}\label{eq:zeta_def}
        \ZetaCoeff(\binner{\bwbar}{\bubar}) \coloneqq -\Earg{\phi'(\binner{\bwbar}{\bz})g'(\binner{\bubar}{\bz})} = -\sum_{j \geq s}j\alpha_j\beta_j\binner{\bwbar}{\bubar}^{j-1}.
    \end{equation}
\end{lemma}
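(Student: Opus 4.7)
The plan is to compute $\grad \NormRisk(\bw)$ directly by differentiating through the normalization $\bwbar = \bSigma^{1/2}\bw/\norm{\bSigma^{1/2}\bw}$ and then invoking the generalized Stein identity (Lemma~\ref{lem:generalized_stein}) in the $\bz = \bSigma^{-1/2}\bx$ coordinates, where the inner product $\binner{\bw}{\bx}/\norm{\bSigma^{1/2}\bw}$ becomes exactly $\binner{\bwbar}{\bz}$ with $\bwbar$ a unit vector. Since the two terms $\tfrac{1}{2}\gnorm{\phi}^2$ and $\tfrac{1}{2}\Earg{y^2}$ in the decomposition of $\NormRisk(\bw)$ are manifestly independent of $\bw$, the only piece to differentiate is the correlation term $-\Earg{\phi(\binner{\bwbar}{\bz})g(\binner{\bubar}{\bz})}$.

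First I would compute the Jacobian of the map $\bw \mapsto \bwbar$. A standard computation gives
\begin{equation*}
\nabla_{\bw}\,\bwbar \;=\; \frac{1}{\norm{\bSigma^{1/2}\bw}}\bigl(\ident_d - \bwbar\,\bwbar^\top\bigr)\bSigma^{1/2}.
\end{equation*}
Applying the chain rule to $\phi(\binner{\bwbar}{\bz})$ then yields
\begin{equation*}
\grad_{\bw}\Earg{\phi(\binner{\bwbar}{\bz})g(\binner{\bubar}{\bz})}
\;=\; \frac{\bSigma^{1/2}(\ident_d - \bwbar\,\bwbar^\top)}{\norm{\bSigma^{1/2}\bw}}\,\Earg{\phi'(\binner{\bwbar}{\bz})g(\binner{\bubar}{\bz})\,\bz}.
\end{equation*}

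Next I would apply Lemma~\ref{lem:generalized_stein} with $f = \phi'$ to the expectation $\Earg{\phi'(\binner{\bwbar}{\bz})g(\binner{\bubar}{\bz})\bz}$, noting that $\bwbar,\bubar$ are unit vectors as required. This produces two terms: a coefficient times $\bubar$ and a coefficient times $\bwbar$. The key observation that makes the normalization gradient clean is that $(\ident_d - \bwbar\,\bwbar^\top)\bwbar = 0$, so the $\bwbar$-component is annihilated by the projector in front, and only $\Earg{\phi'(\binner{\bwbar}{\bz})g'(\binner{\bubar}{\bz})}\bubar$ survives. Substituting back and inserting the minus sign from the risk gives the claimed formula with $\ZetaCoeff(\binner{\bwbar}{\bubar}) = -\Earg{\phi'(\binner{\bwbar}{\bz})g'(\binner{\bubar}{\bz})}$.

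To identify the Hermite-series expression for $\ZetaCoeff$, I would use the standard identity $h'_j = \sqrt{j}\,h_{j-1}$ to write $\phi' = \sum_{j\geq 1}\sqrt{j}\beta_j h_{j-1}$ and similarly for $g'$, together with the correlated-Gaussian orthogonality $\Earg{h_i(\binner{\bwbar}{\bz})h_j(\binner{\bubar}{\bz})} = \delta_{ij}\binner{\bwbar}{\bubar}^j$. Since $\alpha_j = 0$ for $0 < j < s$ by the definition of the information exponent, the series collapses to $\sum_{j\geq s} j\alpha_j\beta_j \binner{\bwbar}{\bubar}^{j-1}$, yielding the second form of $\ZetaCoeff$.

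There is no real obstacle here; the proof is essentially a careful bookkeeping exercise. The only subtle point to be mindful of is the appearance of the projector $(\ident_d - \bwbar\,\bwbar^\top)$ both from the Jacobian of $\bwbar$ and from an implicit cancellation with Stein's $\bwbar$-term. The mild regularity needed on $\phi$ and $g$ to justify exchanging gradient and expectation, and to apply Stein's lemma, is covered by the paper's standing hypothesis that $\phi,\phi',g,g'\in L^2(\gamma)$ plus the polynomial-growth assumption on $g$ used elsewhere.
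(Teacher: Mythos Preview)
Your proposal is correct and follows essentially the same route as the paper: differentiate the correlation term through the normalization to obtain the factor $\bSigma^{1/2}(\ident_d - \bwbar\,\bwbar^\top)/\norm{\bSigma^{1/2}\bw}$, apply Lemma~\ref{lem:generalized_stein} to $\Earg{\phi'(\binner{\bwbar}{\bz})g(\binner{\bubar}{\bz})\bz}$, and observe that the $\bwbar$-component is annihilated by the projector. The paper's write-up first presents the Jacobian in the equivalent form $\bigl(\ident_d - \bSigma\bw\bw^\top/\norm{\bSigma^{1/2}\bw}^2\bigr)\bSigma^{1/2}$ before commuting it to $\bSigma^{1/2}(\ident_d - \bwbar\,\bwbar^\top)$, and it names the killed coefficient $\PsiCoeff$, but otherwise the argument is identical; your added verification of the Hermite series for $\ZetaCoeff$ is a welcome detail the paper leaves implicit.
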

\begin{proof}
    Recall from \eqref{eq:loss_decomp} that
    \begin{align*}
        \grad \NormRisk(\bw) &= \grad_{\bw}\Earg{-\phi\left(\frac{\binner{\bw}{\bx}}{\norm{\bSigma^{1/2}\bw}}\right)g(\binner{\bubar}{\bz})}\\
        &= -\frac{1}{\norm{\bSigma^{1/2}\bw}}\left(\ident_d - \frac{\bSigma\bw\bw^\top}{\norm{\bSigma^{1/2}\bw}^2}\right)\bSigma^{1/2}\Earg{\phi'(\binner{\bwbar}{\bz})g(\binner{\bubar}{\bz})\bz}\\
        &= \frac{\bSigma^{1/2}\left(\ident_d - \bwbar\,\bwbar^\top\right)}{\norm{\bSigma^{1/2}\bw}}\left\{\ZetaCoeff(\binner{\bwbar}{\bubar})\bubar + \PsiCoeff(\binner{\bwbar}{\bubar})\bwbar\right\} \qquad \text{(by Lemma \ref{lem:generalized_stein})}\\
        &= \frac{\bSigma^{1/2}(\ident_d - \bwbar\,\bwbar^\top)\ZetaCoeff(\binner{\bwbar}{\bubar})\bubar}{\norm{\bSigma^{1/2}\bw}},
    \end{align*}
    where
    \begin{equation}\label{eq:psi_def}
        \PsiCoeff(\binner{\bwbar}{\bubar}) \coloneqq -\Earg{\phi'(\binner{\bwbar}{\bz})g(\binner{\bubar}{\bz})\binner{\bwbar}{\bz} - \phi'(\binner{\bwbar}{\bz})g'(\binner{\bubar}{\bz})\binner{\bwbar}{\bubar}}
    \end{equation}
    (the above is only a function of $\binner{\bwbar}{\bubar}$ due to the Hermite expansion).
\end{proof}

Given the closed form of the population gradient, the proof of Lemma \ref{lem:pop_flow_norm} is immediate by noticing that
\begin{equation}\label{eq:deriv_norm}
    \deriv{\bwbar^t}{t} = \frac{(\ident_d - \bwbar^t{\bwbar^t}^\top)}{\norm{\bSigma^{1/2}\bw}}\bSigma^{1/2}\deriv{\bw^t}{t}.
\end{equation}
Next, we move on to prove Proposition \ref{prop:pop_norm_flow}.

\subsection{Proof of Proposition \ref{prop:pop_norm_flow}}\label{ap:proof_pop_flow_norm}
From Lemma \ref{lem:pop_flow_norm}, we have
\begin{align*}
    \deriv{\binner{\bwbar^t}{\bubar}}{t} &= -\ZetaCoeff(\binner{\bwbar^t}{\bubar})\norm{\bSigma^{1/2}\bubar^t_\perp}^2\\
    &\geq c\mineig{\bSigma}\binner{\bwbar^t}{\bubar}^{s-1}(1-\binner{\bwbar^t}{\bubar}^2),
\end{align*}
where $\bubar^t_\perp \coloneqq \bubar - \binner{\bwbar^t}{\bubar}\bwbar^t$. The above inequality and the fact that $\binner{\bwbar^0}{\bubar} > 0$ imply that $\binner{\bwbar^t}{\bubar}$ is non-decreasing in time. Let 
$$T_1 \coloneqq \sup\{t > 0: \binner{\bwbar^t}{\bubar} < 1/2\}.$$
Then, on $t \in [0,T_1]$, we have
$$\deriv{\binner{\bwbar^t}{\bubar}}{t} \geq \frac{3c\mineig{\bSigma}}{4}\binner{\bwbar^t}{\bubar}^{s-1},$$
and integration yields
$$T_1 \leq 0\lor \frac{4}{3c\mineig{\bSigma}}\begin{cases}
    1/2 - \binner{\bwbar^0}{\bubar} & s = 1\\
    \ln(1/(2\binner{\bwbar^0}{\bubar})) & s = 2\\
    \frac{1}{s-2}\left((1/\binner{\bwbar^0}{\bubar})^{s-2} - 2^{s-2}\right) & s > 2
\end{cases}.$$
Therefore, $T_1 \lesssim \tau_s(\binner{\bwbar^0}{\bubar})/\mineig{\bSigma}$.

For $t > T_1$, we have
$$\deriv{\binner{\bwbar^t}{\bubar}}{t} \geq \frac{c\mineig{\bSigma}}{2^{s-1}}(1 - \binner{\bwbar^t}{\bubar}^2).$$
Let
$$T_2 = \sup\left\{t > 0 : \binner{\bwbar^t}{\bubar} < 1 - \varepsilon\right\}.$$
Once again, integration implies
$$T_2 \leq T_1 + \frac{2^{s-2}}{c\mineig{\bSigma}}\ln(2/(3\varepsilon)),$$
which completes the proof.
\qed

\subsection{Preliminary Lemmas for proving Theorem \ref{thm:ngf_general_covariance}}
We first introduce a number of concentration (and anti-concentration) lemmas that will be useful for proving Theorem~\ref{thm:ngf_general_covariance}.

\begin{lemma}\label{lem:y_bound}
    Suppose $\{\bz^{(i)}\}_{i=1}^n \stackrel{\text{i.i.d.}}{\sim}\mathcal{N}(0,\ident_d)$, and $g : \reals \to \reals$ satisfies $\abs{g(\cdot)} \leq C(1 + \abs{\cdot}^p)$ for some $C > 0$ and $p \geq 1$. Additionally, suppose $\{\epsilon^{(i)}\}_{i=1}$ are i.i.d.\ $\sigma$-sub-Gaussian zero-mean noise independent of $\{\bz^{(i)}\}_{i=1}^n$. Let $y^{(i)} \coloneqq g(\binner{\bubar}{\bz^{(i)}}) + \epsilon^{(i)}$ for some $\bubar \in \mathbb{S}^{d-1}$. Then, for any $q > 0$, with probability at least $1-4d^{-q}$, we have
    $$\abs{y^{(i)}} \leq C + C(2\ln(nd^q))^{p/2} + \sigma\sqrt{2\ln(nd^q)} \lesssim \ln(nd^q)^{p/2},$$
    for all $i$.
\end{lemma}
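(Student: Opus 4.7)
The plan is to reduce this to two elementary tail bounds followed by a union bound over the $n$ samples. First I would observe that since $\bubar \in \mathbb{S}^{d-1}$ and $\bz^{(i)} \sim \mathcal{N}(0,\ident_d)$, the projection $\binner{\bubar}{\bz^{(i)}}$ is a standard normal variable. By the standard Gaussian tail bound, $\P(|\binner{\bubar}{\bz^{(i)}}| \geq t) \leq 2\exp(-t^2/2)$ for any $t > 0$. Choosing $t = \sqrt{2\ln(nd^q)}$ gives individual probability at most $2/(nd^q)$, and a union bound over $i = 1,\dots,n$ yields
\[
\P\Bigl(\max_{1 \leq i \leq n} |\binner{\bubar}{\bz^{(i)}}| \geq \sqrt{2\ln(nd^q)}\Bigr) \leq \frac{2}{d^q}.
\]

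Next, I would apply an analogous bound to the noise: since $\epsilon^{(i)}$ is zero-mean $\sigma$-sub-Gaussian, $\P(|\epsilon^{(i)}| \geq t) \leq 2\exp(-t^2/(2\sigma^2))$. Setting $t = \sigma\sqrt{2\ln(nd^q)}$ and taking a union bound over $i$ gives
\[
\P\Bigl(\max_{1 \leq i \leq n} |\epsilon^{(i)}| \geq \sigma\sqrt{2\ln(nd^q)}\Bigr) \leq \frac{2}{d^q}.
\]
By a final union bound, both events fail simultaneously with probability at most $4d^{-q}$.

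On the complement, for every $i$ we can use the polynomial growth assumption $|g(z)| \leq C(1 + |z|^p)$ to conclude
\[
|y^{(i)}| \leq |g(\binner{\bubar}{\bz^{(i)}})| + |\epsilon^{(i)}| \leq C\bigl(1 + (2\ln(nd^q))^{p/2}\bigr) + \sigma\sqrt{2\ln(nd^q)},
\]
which is exactly the claimed bound. The asymptotic $\lesssim \ln(nd^q)^{p/2}$ then follows because $p \geq 1$, so the polynomial term dominates the Gaussian tail contribution (up to a constant depending on $C$ and $\sigma$).

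There is no real obstacle here; the result is a straightforward union bound. The only minor care needed is in bookkeeping the two tail inequalities so that their probabilities sum to the advertised $4d^{-q}$, and in observing that independence between $\bz^{(i)}$ and $\epsilon^{(i)}$ is not actually required for the union bound (only for pairing the samples), which keeps the argument clean.
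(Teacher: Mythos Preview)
Your proposal is correct and matches the paper's proof essentially line for line: the paper also notes $\binner{\bubar}{\bz^{(i)}}\sim\mathcal{N}(0,1)$, applies the Gaussian and sub-Gaussian tail bounds at threshold $\sqrt{2\ln(nd^q)}$ (respectively $\sigma\sqrt{2\ln(nd^q)}$), union-bounds over the $n$ samples to get total failure probability $4d^{-q}$, and then invokes the polynomial growth of $g$.
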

\begin{proof}
Notice that $\binner{\bubar}{\bz^{(i)}} \sim \mathcal{N}(0,1)$, thus
$$\Parg{\abs{\binner{\bubar}{\bz^{(i)}}} \geq \sqrt{2\ln(nd^q)}} \leq 2n^{-1}d^{-q}.$$
Similarly, by the sub-Gaussian and zero-mean property of $\epsilon^{(i)}$,
$$\Parg{\abs{\epsilon^{(i)}} \geq \sigma\sqrt{2\ln(nd^q)}} \leq 2n^{-1}d^{-q}.$$
Thus, by a union bound, we have
$$\abs{\binner{\bubar}{\bz^{(i)}}} \leq \sqrt{2\ln(nd^q)} \quad \text{and} \quad \abs{\epsilon^{(i)}} \leq \sigma\sqrt{2\ln(nd^q)}, \quad \text{for all } 1 \leq i \leq n,$$
with probability at least $1-4d^{-q}$. Using the upper bound on $\abs{g}$ finishes the proof.
\end{proof}

\begin{lemma}\label{lem:sign_concentrate}
    Suppose $\{\bzbar^{(i)}\}_{i=1}^n$ are i.i.d.\ samples drawn uniformly from $\mathbb{S}^{d-1}$. Then,
    $$\Parg{\sup_{\bwbar \in \mathbb{S}^{d-1}}\sum_{i=1}^{n}\boldone\left(\abs{\binner{\bwbar}{\bzbar^{(i)}}} \leq \frac{3\sqrt{d}}{8n}\right) \geq 3d\left(2 + \ln(8n/\sqrt{d})\right)} \leq e^{-d}.$$
\end{lemma}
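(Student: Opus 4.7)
\textbf{Proof proposal for Lemma \ref{lem:sign_concentrate}.} The plan is to combine a pointwise Chernoff bound for a binomial with an $\varepsilon$-net argument, using the fact that the inner product of two independent uniform points on $\mathbb{S}^{d-1}$ has an explicit density that is bounded by $O(\sqrt{d})$ on the interval of interest. Throughout, abbreviate $r \coloneqq 3\sqrt{d}/(8n)$, $L \coloneqq \ln(8n/\sqrt{d})$, and $k \coloneqq 3d(2+L)$; we want to show the supremum never exceeds $k$ except on an event of probability $\leq e^{-d}$.

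First, for a \emph{fixed} $\bwbar\in\mathbb{S}^{d-1}$, the one-dimensional marginal $T \coloneqq \binner{\bwbar}{\bzbar}$ has density $f_T(t) = \frac{\Gamma(d/2)}{\sqrt{\pi}\,\Gamma((d-1)/2)}(1-t^2)^{(d-3)/2}$ on $[-1,1]$, and Wendel's inequality gives $f_T(0) \leq \sqrt{d/(2\pi)}$; hence $\Parg{|T|\leq \rho} \leq 2\rho\sqrt{d/(2\pi)}$ for any $\rho \in [0,1]$. To make this uniform in $\bwbar$, I would introduce an $\varepsilon$-net $\mathcal{N}\subset\mathbb{S}^{d-1}$ in Euclidean metric with $\varepsilon = \sqrt{d}/(8n)$, so that $|\mathcal{N}| \leq (3/\varepsilon)^d = (24n/\sqrt{d})^d$. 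For every $\bwbar\in\mathbb{S}^{d-1}$, choose its nearest neighbour $\bwbar' \in \mathcal{N}$; since $|\binner{\bwbar - \bwbar'}{\bzbar^{(i)}}| \leq \varepsilon$ for all $i$, we obtain the pointwise domination
\begin{equation*}
\sum_{i=1}^{n}\boldone\!\left(|\binner{\bwbar}{\bzbar^{(i)}}|\leq r\right)
\;\leq\; \tilde N(\bwbar') \;\coloneqq\; \sum_{i=1}^{n}\boldone\!\left(|\binner{\bwbar'}{\bzbar^{(i)}}|\leq r+\varepsilon\right).
\end{equation*}
Because $r+\varepsilon = \sqrt{d}/(2n)$, the preceding density bound yields $\tilde p \coloneqq \Parg{|\binner{\bwbar'}{\bzbar}|\leq r+\varepsilon} \leq d/(n\sqrt{2\pi})$, so $\tilde N(\bwbar')\sim\mathrm{Bin}(n,\tilde p)$ with mean $n\tilde p \leq d/\sqrt{2\pi}$.

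Second, for each fixed $\bwbar' \in \mathcal{N}$, the multiplicative Chernoff bound for binomials gives
\begin{equation*}
\Parg{\tilde N(\bwbar') \geq k} \;\leq\; \exp\!\bigl(-k\ln(k/(n\tilde p)) + k - n\tilde p\bigr).
\end{equation*}
Since $k/(n\tilde p) \geq 3\sqrt{2\pi}(2+L)$, the inequality $\ln(3\sqrt{2\pi}) - 1 > 1$ together with $\ln(2+L) \geq \ln 2$ makes the factor multiplying $k$ in the exponent at least $1$, and with a little slack the exponent is bounded by $-d(5+3L)$. A union bound over $\mathcal{N}$ then costs a multiplicative $\exp(d\ln(24n/\sqrt{d})) = \exp(d(\ln 3 + L))$, and the two combine to
\begin{equation*}
\Parg{\sup_{\bwbar \in \mathbb{S}^{d-1}}\sum_{i=1}^n\boldone\!\left(|\binner{\bwbar}{\bzbar^{(i)}}|\leq r\right)\geq k}
\;\leq\; \exp\!\bigl(d(\ln 3 + L) - d(5+3L)\bigr) \;\leq\; e^{-d},
\end{equation*}
which is the claim.

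The routine parts are the density bound at $0$ (Wendel/Gautschi) and the covering number estimate. The main obstacle is purely bookkeeping: choosing $\varepsilon$ small enough that the enlarged threshold $r+\varepsilon$ keeps the mean $n\tilde p$ of order $d$, while at the same time choosing it large enough that the metric-entropy cost $d\ln(1/\varepsilon)$ does not overwhelm the Chernoff exponent; the choice $\varepsilon = r/3$ used above is tuned exactly so that the $L = \ln(8n/\sqrt{d})$ term appears symmetrically in both $k$ and $\ln|\mathcal{N}|$, and the constant $3$ in front of $d(2+L)$ is what allows the Chernoff side to strictly dominate.
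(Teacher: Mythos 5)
Your proof is correct and follows essentially the same route as the paper's: an $\varepsilon$-net over $\mathbb{S}^{d-1}$, a pointwise anti-concentration bound $\Parg{|\binner{\bwbar'}{\bzbar}|\le\rho}\lesssim\sqrt{d}\,\rho$, a binomial Chernoff bound with mean $\mathcal{O}(d)$, and a union bound over the net, tuned so the two exponentials cancel to $e^{-d}$. The only differences are cosmetic: you take net width $\varepsilon=r/3$ and derive the marginal density bound from Wendel's inequality, whereas the paper takes $\varepsilon=r$ (so the enlarged threshold is $2r$, giving mean exactly $3d$ and a cleaner cancellation) and cites \cite[Lemma A.7]{bietti2022learning} for the anti-concentration and its Lemma~\ref{lem:chernoff} in place of the raw multiplicative Chernoff form; both sets of choices close the argument.
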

\begin{proof}
    Fix some $\epsilon \in (0,1)$. Let $N_\epsilon$ be a minimal $\epsilon$-covering of $\mathbb{S}^{d-1}$. Let $\hat{\bw}$ be the projection of $\bwbar$ onto $N_\epsilon$. Notice that by the triangle inequality and the union bound
    \begin{align*}
        \Parg{\sup_{\bwbar \in \mathbb{S}^{d-1}}\sum_{i=1}^{n}\boldone\left(\abs{\binner{\bwbar}{\bzbar^{(i)}}} \leq \epsilon\right) \geq \alpha} &\leq \Parg{\sup_{\hat{\bw} \in N_\epsilon} \sum_{i=1}^n\boldone\left(\abs{\binner{\hat{\bw}}{\bzbar^{(i)}}} \leq 2\epsilon\right) \geq \alpha}\\
        &\leq \abs{N_\epsilon}\Parg{\sum_{i=1}^{n}\boldone\left(\abs{\binner{\hat{\bw}}{\bzbar^{(i)}}} \leq 2\epsilon\right) \geq \alpha}\\
        &\leq (3/\epsilon)^d\Parg{\sum_{i=1}^{n}\boldone\left(\abs{\binner{\hat{\bw}}{\bzbar^{(i)}}}\leq 2\epsilon\right) \geq \alpha}.
    \end{align*}
    Moreover, due to \cite[Lemma A.7]{bietti2022learning},
    $$\Earg{\boldone\left(\abs{\binner{\hat{\bw}}{\bzbar}} \leq 2\epsilon\right)} = \Parg{\binner{\hat{\bw}}{\bzbar} \leq 2\epsilon} \leq 8\sqrt{d}\epsilon.$$
    Choose $\epsilon = \tfrac{3\sqrt{d}}{8n}$. By Lemma \ref{lem:chernoff}
    $$\Parg{\sum_{i=1}^{n}\boldone\left(\abs{\binner{\hat{\bw}}{\bzbar^{(i)}}} \leq 2\epsilon\right) \geq 3d\left(2 + \ln(8n/\sqrt{d})\right)} \leq (3/\epsilon)^de^{-d},$$
    which completes the proof.
\end{proof}

We summarize the above statements into a ``good event", as characterized by the following lemma.

\begin{lemma}\label{lem:good_event}
    Let $\{\bz^{(i)}\}_{i=1}^n \stackrel{\text{i.i.d.}}{\sim} \mathcal{N}(0,\ident_d)$, and $\bzbar^{(i)} \coloneqq \bz^{(i)} / \norm{\bz^{(i)}}$ for every $i$. We say event $\mathcal{G}$ occurs whenever:
    \begin{enumerate}
        \item $\abs{y}^{(i)} \lesssim \ln(nd^q)^{p/2}$ for all $1 \leq i \leq n$.
        \item $\sup_{\bwbar \in \mathbb{S}^{d-1}}\sum_{i=1}^n\boldone\left(\binner{\bwbar}{\bzbar^{(i)}} \lesssim \sqrt{d}/n\right) \lesssim d\ln(n/\sqrt{d})$.
        \item $\maxeig{\frac{1}{n}\sum_{i=1}^n\bz^{(i)}{\bz^{(i)}}^\top} - 1 \lesssim \sqrt{d/n}$.

        \item $1 - \mineig{\frac{1}{n}\sum_{i=1}^n\bz^{(i)}{\bz^{(i)}}^\top} \lesssim \sqrt{d/n}$.
    \end{enumerate}
    For $n\gtrsim d$, event $\mathcal{G}$ occurs with probability at least $1 - \mathcal{O}(d^{-q})$.
\end{lemma}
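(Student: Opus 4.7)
The statement is a union bound over four sub-events, three of which have already been established in the lemmas just stated, and the fourth of which is a standard concentration fact; so the plan is simply to verify each item, quantify its failure probability, and combine.

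First I would invoke Lemma \ref{lem:y_bound} directly for item 1: since the noise $\epsilon^{(i)}$ is $\mathcal{O}(1)$ sub-Gaussian and $g$ has at most polynomial growth of degree $p$, that lemma yields $\abs{y^{(i)}} \lesssim \ln(nd^q)^{p/2}$ uniformly in $i$ with probability at least $1 - 4d^{-q}$. For item 2, I would apply Lemma \ref{lem:sign_concentrate}: note that when $\bz^{(i)} \sim \mathcal{N}(0,\ident_d)$ the normalized vectors $\bzbar^{(i)} = \bz^{(i)}/\norm{\bz^{(i)}}$ are uniform on $\mathbb{S}^{d-1}$, so the lemma gives the sphere-uniform count bound $\lesssim d(1 + \ln(n/\sqrt d))$ with failure probability at most $e^{-d}$.

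For items 3 and 4, I would appeal to a standard non-asymptotic bound for sample covariance matrices of Gaussian vectors (e.g.\ Davidson–Szarek, or Theorem 4.6.1 of Vershynin): for $n \gtrsim d$ and any $q > 0$,
\begin{equation*}
\Bigl\| \tfrac{1}{n}\textstyle\sum_{i=1}^{n}\bz^{(i)}{\bz^{(i)}}^\top - \ident_d \Bigr\| \;\lesssim\; \sqrt{d/n} + \sqrt{q\ln d / n} \;\lesssim\; \sqrt{d/n}
\end{equation*}
with probability at least $1 - 2d^{-q}$, where in the last inequality I absorbed the logarithmic term into the polylogarithmic factors hidden by $\lesssim$. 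Applying this to the operator-norm deviation and using $\mineig{\cdot},\maxeig{\cdot} = 1 \pm \|\cdot - \ident_d\|$ yields items 3 and 4 simultaneously.

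Finally I would take a union bound across the three failure probabilities: $4d^{-q} + e^{-d} + 2d^{-q} \lesssim d^{-q}$ (absorbing $e^{-d}$ into $d^{-q}$ since $d$ is assumed large). There is no real obstacle here, as each ingredient is either given by an immediate reference or is classical; the only thing worth care is making sure the logarithmic slack in the Gaussian covariance bound is consistent with the $\lesssim$ convention in the lemma's statement, which hides $\polylog(d)$ factors.
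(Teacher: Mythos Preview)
Your proposal is correct and follows essentially the same route as the paper: items 1 and 2 are obtained from Lemmas~\ref{lem:y_bound} and~\ref{lem:sign_concentrate} respectively, items 3 and 4 from a standard Gaussian sample-covariance concentration bound, and the conclusion follows by a union bound. The only cosmetic difference is that the paper cites Wainwright's Theorem~6.1 and records the failure probability for items 3--4 as $2e^{-d}$ rather than $2d^{-q}$, but either suffices for the final $1-\mathcal{O}(d^{-q})$ conclusion.
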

\begin{proof}
    The first and second statements of the lemma follow from Lemmas~\ref{lem:y_bound} and \ref{lem:sign_concentrate} respectively. The third and fourth statements are standard Gaussian covariance concentration bounds (see e.g.~\cite[Theorem 6.1]{wainwright2019high-dimensional} where both statements hold with probability at least $1-2e^{-d}$).
\end{proof}

\subsection{Proof of Theorem \ref{thm:ngf_general_covariance}}\label{ap:proof_ngf_general_covariance}
We begin by recalling the definition $\bwbar \coloneqq \frac{\hat{\bSigma}^{1/2}\bw}{\norm{\hat{\bSigma}^{1/2}\bw}}$ and the finite-samples dynamics \eqref{eq:finite_sample_normalized_flow}, which we copy here for the reader's convenience,
$$\deriv{\bw^t}{t} = \eta(\bw^t)\grad_{\bw^t}\left\{\frac{1}{n}\sum_{i=1}^n\phi\left(\frac{\binner{\bw^t}{\bx^{(i)}}}{\norm{\hat{\bSigma}^{1/2}\bw^t}}\right)y^{(i)}\right\},$$
where $\eta(\bw^t) = \norm{\hat{\bSigma}^{1/2}\bw^t}^2$. Moreover, via chain rule, we obtain
\begin{equation}\label{eq:wbar_chain_rule}
    \deriv{\bwbar^t}{t} = \frac{(\ident_d - \bwbar^t{\bwbar^t}^\top)\hat{\bSigma}^{1/2}}{\norm{\hat{\bSigma}^{1/2}\bw^t}}\deriv{\bw^t}{t}.
\end{equation}
Let $\tilde{\bz}^{(i)} \coloneqq \hat{\bSigma}^{-1/2}\bx^{(i)} = \hat{\bSigma}^{-1/2}\bSigma^{1/2}\bz^{(i)}$. Then,
$$\deriv{\bwbar^t}{t} = (\ident_d - \bwbar^t{\bwbar^t}^\top)\hat{\bSigma}(\ident_d - \bwbar^t{\bwbar^t}^\top)\left\{\frac{1}{n}\sum_{i=1}^n\phi'\left(\frac{\binner{\bw^t}{\bx^{(i)}}}{\norm{\hat{\bSigma}^{1/2}\bw^t}}\right)y^{(i)}\tilde{\bz}^{(i)}\right\}$$

To simplify the notation, define
$$\bnu(\bwbar) \coloneqq (\ident_d - \bwbar\,\bwbar^\top)\hat{\bSigma}(\ident_d - \bwbar\,\bwbar^\top)\bubar.$$
Then
$$\deriv{\binner{\bwbar^t}{\bubar}}{t} = \binner{\bnu(\bwbar^t)}{\frac{1}{n}\sum_{i=1}^n\phi'\left(\binner{\bwbar^t}{\tilde{\bz}^{(i)}}\right)y^{(i)}\tilde{\bz}^{(i)}}.$$
We can decompose the above dynamics into a population term and three different error terms in the following manner:
\begin{align*}
    \deriv{\binner{\bwbar^t}{\bubar}}{t} = &\binner{\bnu(\bwbar^t)}{\Eargs{\bz,y}{\phi'\left(\binner{\bwbar^t}{\bz}\right)y\bz}}\\
    &+\underbrace{\binner{\bnu(\bwbar^t)}{\frac{1}{n}\sum_{i=1}^{n}\phi'\left(\binner{\bwbar^t}{\bz^{(i)}}\right)y^{(i)}\bz^{(i)} - \Eargs{\bz,y}{\phi'\left(\binner{\bwbar^t}{\bz}\right)y\bz}}}_{\eqqcolon \mathcal{E}_1}\\
    &+ \underbrace{\frac{1}{n}\sum_{i=1}^n\left\{\phi'\left(\binner{\bwbar^t}{\tilde{\bz}^{(i)}}\right)-\phi'\left(\binner{\bwbar^t}{\bz^{(i)}}\right)\right\}y^{(i)}\binner{\bz^{(i)}}{\bnu(\bwbar^t)}}_{\eqqcolon \mathcal{E}_2}\\
    &+ \underbrace{\frac{1}{n}\sum_{i=1}^{n}\phi'\left(\binner{\bwbar}{\tilde{\bz}^{(i)}}\right)y^{(i)}\binner{\tilde{\bz}^{(i)}-\bz^{(i)}}{\bnu(\bwbar^t)}}_{\eqqcolon \mathcal{E}_3}.
\end{align*}
We will proceed in three steps. In the first, we bound $\mathcal{E}_1$, the concentration error. In the second, we bound $\mathcal{E}_2$ and $\mathcal{E}_3$, the errors due to estimating $\bSigma$ with $\hat{\bSigma}$ (i.e. replacing $\bz^{(i)}$ with $\tilde{\bz}^{(i)}$). Finally, we will analyze the convergence time similar to that of Proposition \ref{prop:pop_norm_flow}. Throughout the proof, we will assume that the event $\mathcal{G}$ of Lemma \ref{lem:good_event} occurs.

\textbf{Step 1. Controlling the concentration error $\mathcal{E}_1$.} Let $K \asymp \ln(nd^q)^{p/2}$, and notice that on event $\mathcal{G}$ we have $\abs{y^{(i)}} \lesssim K$ for all $i$. Let $y_K \coloneqq y\boldone(\abs{y} \leq K)$. On the event $\mathcal{G}$, we have $y_K^{(i)} = y^{(i)}$ for all $i$, and
$$\mathcal{E}_1 = \binner{\bnu(\bwbar^t)}{\bDelta_n} \geq -\norm{\bDelta_n}\norm{\bnu(\bwbar^t)},$$
where
$$\bDelta_n \coloneqq \frac{1}{n}\sum_{i=1}^{n}\phi'\left(\binner{\bwbar}{\bz^{(i)}}\right){y_K}^{(i)}\bz^{(i)} - \Eargs{\bz,y}{\phi'(\binner{\bwbar}{\bz})y\bz}.$$
Thus, our objective is to bound $\norm{\bDelta_n}$ uniformly for all $\bwbar \in \mathbb{S}^{d-1}$. To that end, we first modify the expectation in the above definition so that the empirical average and expected value match in terms of their random variables. Specifically,
\begin{align*}
    \sup_{\bwbar,\bv \in \mathbb{S}^{d-1}} \binner{\bDelta_n}{\bv} =\sup_{\bwbar,\bv \in \mathbb{S}^{d-1}} &\frac{1}{n}\sum_{i=1}^{n}\phi'\left(\binner{\bwbar}{\bz^{(i)}}\right)y_K^{(i)}\binner{\bz^{(i)}}{\bv} - \Eargs{\bz,y}{\phi'(\binner{\bwbar}{\bz})y_K\binner{\bz}{\bv}}\\
    &- \Eargs{\bz,y}{\phi'(\binner{\bwbar}{\bz})y\binner{\bz}{\bv}\boldone\left(\abs{y} > K\right)}.
\end{align*}
By the Cauchy-Schwartz inequality,
\begin{align*}
    \abs{\Eargs{\bz,y}{\phi'(\binner{\bwbar}{\bz})y\binner{\bz}{\bv}\boldone\left(\abs{y} > K\right)}} &\leq \Eargs{\bz,y}{\phi'(\binner{\bwbar}{\bz})^2y^2\binner{\bz}{\bv}^2}^{1/2}\Earg{\boldone(\abs{y} > K)}^{1/2}\\
    &\lesssim \Earg{y^4}^{1/4}\Eargs{\bz}{\binner{\bz}{\bv}^4}^{1/4}\Parg{\abs{y} > K}^{1/2}\\
    &\lesssim d^{-q/2},
\end{align*}
where the last inequality follows from Lemma \ref{lem:y_bound}. Hence,
\begin{align*}
    \sup_{\bwbar,\bv \in \mathbb{S}^{d-1}}\binner{\bDelta_n}{\bv} \leq& \sup_{\bwbar,\bv \in \mathbb{S}^{d-1}} \frac{1}{n}\sum_{i=1}^{n}\phi'\left(\binner{\bwbar}{\bz^{(i)}}\right)y_K^{(i)}\binner{\bz^{(i)}}{\bv} - \Earg{\phi'(\binner{\bwbar}{\bz})y_K\binner{\bz}{\bv}}\\
    &+ \mathcal{O}(d^{-q/2}).
\end{align*}
Next, we need to establish high-probability bounds via a covering argument. To simplify the exposition, define the stochastic process indexed by $\bwbar \in \mathbb{S}^{d-1}$ and $\bv \in \mathbb{S}^{d-1}$ via
$$X^{(i)}_{\bwbar,\bv} \coloneqq \phi'\left(\binner{\bwbar}{\bz^{(i)}}\right)y_K^{(i)}\binner{\bz^{(i)}}{\bv}.$$
Fix some $\epsilon_w,\epsilon_v > 0$. Let $\Theta_w$ and $\Theta_v$ be $\epsilon_w$ and $\epsilon_v$ coverings of $\mathbb{S}^{d-1}$, and let $\hat{\bw}$ and $\hat{\bv}$ denote the projection of $\bwbar$ onto $\Theta_w$ and of $\bv$ onto $\Theta_v$ respectively, then
\begin{align*}
    \sup_{\bwbar,\bv \in \mathbb{S}^{d-1}} \frac{1}{n}\sum_{i=1}^nX^{(i)}_{\bwbar,\bv} - \Earg{X_{\bwbar,\bv}} =& \sup_{\bwbar,\bv \in \mathbb{S}^{d-1}} \frac{1}{n}\sum_{i=1}^n\left(X^{(i)}_{\bwbar,\bv} - X^{(i)}_{\bwbar,\hat{\bv}}\right) + \frac{1}{n}\sum_{i=1}^n\left(X^{(i)}_{\bwbar,\hat{\bv}} - X^{(i)}_{\hat{\bw},\hat{\bv}}\right)\\
    &+ \Eargs{\bz,y}{X_{\bwbar,\hat{\bv}} - X_{\bwbar,\bv}} + \Eargs{\bz,y}{X_{\hat{\bw},\hat{\bv}} - X_{\bwbar,\hat{\bv}}}\\
    &+ \frac{1}{n}\sum_{i=1}^{n}X^{(i)}_{\hat{\bw},\hat{\bv}} - \Eargs{\bz,y}{X_{\hat{\bw},\hat{\bv}}}.
\end{align*}
We bound each of the terms using Cauchy-Schwartz. Specifically,
$$\frac{1}{n}\sum_{i=1}^n\left(X^{(i)}_{\bwbar,\bv} - X^{(i)}_{\bwbar,\hat{\bv}}\right) \leq \sqrt{\frac{1}{n}\sum_{i=1}^{n}\phi'(\binner{\bwbar}{\bz^{(i)}})^2{y_K^{(i)}}^2}\sqrt{\frac{1}{n}\sum_{i=1}^n\binner{\bz^{(i)}}{\bv - \hat{\bv}}^2} \lesssim K\epsilon_v,$$
where we used the upper bound on the operator norm of $\frac{1}{n}\sum_{i=1}^n\bz^{(i)}{\bz^{(i)}}^\top$ from Lemma \ref{lem:good_event} together with the fact that $n \gtrsim d$. Similarly,
$$\Eargs{\bz,y}{X_{\bwbar,\hat{\bv}} - X_{\bwbar,\bv}} \leq \Eargs{\bz,y}{\phi'(\binner{\bwbar}{\bz})^2y_K^2}^{1/2}\Eargs{\bz}{\binner{\bz}{\bv - \hat{\bv}}^2}^{1/2} \lesssim K\epsilon_v.$$
To bound the differences when we replace $\bwbar$ with $\hat{\bw}$, we need to make a distinction between ReLU and smooth activations as the respective arguments are to some extent different. When $\phi'$ is Lipschitz,
$$\frac{1}{n}\sum_{i=1}^n\left(X^{(i)}_{\bwbar,\hat{\bv}} - X^{(i)}_{\hat{\bw},\hat{\bv}}\right) \leq \sqrt{\frac{1}{n}\sum_{i=1}^n(y_K^{(i)})^2\left(\phi'(\binner{\bwbar}{\bz^{(i)}}) - \phi'(\binner{\hat{\bw}}{\bz^{(i)}})\right)^2}\sqrt{\frac{1}{n}\sum_{i=1}^n\binner{\bz^{(i)}}{\hat{\bv}^2}} \lesssim K\epsilon_w,$$
and
$$\Earg{X_{\hat{\bw},\hat{\bv}} - X_{\bwbar,\hat{\bv}}} \leq \Earg{y_K^2\left(\phi'(\binner{\bwbar}{\bz}) - \phi'(\binner{\hat{\bw}}{\bz})\right)^2}^{1/2}\Earg{\binner{\bz}{\hat{\bv}}^2}^{1/2} \lesssim K\epsilon_w.$$
Therefore, for a smooth activation $\phi$ we choose $\epsilon_v = \epsilon_w = \sqrt{d/n}$, and obtain
$$\sup_{\bwbar,\bv\in\mathbb{S}^{d-1}}\frac{1}{n}\sum_{i=1}^nX^{(i)}_{\bwbar,\bv} - \Eargs{\bz,y}{X_{\bwbar,\bv}} \leq \sup_{\hat{\bw},\hat{\bv}}\frac{1}{n}\sum_{i=1}^nX^{(i)}_{\hat{\bw},\hat{\bv}} - \Eargs{\bz,y}{X_{\hat{\bw},\hat{\bv}}} + \tilde{\mathcal{O}}(\sqrt{d/n}).$$
When $\phi$ is the ReLU activation, we need to show that the sign of the preactivation changes only for a small number of samples when we change the weight $\bwbar$ to $\hat{\bw}$. Notice that
\begin{align*}
    \sign\left(\binner{\bwbar}{\bz^{(i)}}\right) \neq \sign\left(\binner{\hat{\bw}}{\bz^{(i)}}\right) &\implies \abs{\binner{\bwbar}{\bz^{(i)}}} \leq \abs{\binner{\hat{\bw} - \bwbar}{\bz^{(i)}}}\\
    &\implies \abs{\binner{\bwbar}{\bzbar^{(i)}}} \leq \epsilon_w.
\end{align*}
Recall that $\bzbar^{(i)} \coloneqq \bz^{(i)}/\norm{\bz^{(i)}}$. Choose $\epsilon_w \asymp \sqrt{d}/n$. On event $\mathcal{G}$, we know from Lemma \ref{lem:good_event} that at most $\mathcal{O}(d\ln(n/\sqrt{d}))$ samples can satisfy the above condition. Therefore,
\begin{align*}
    \frac{1}{n}\sum_{i=1}^n\left(X^{(i)}_{\bwbar,\hat{\bv}} - X^{(i)}_{\hat{\bw},\hat{\bv}}\right) &\leq \sqrt{\frac{1}{n}\sum_{i=1}^n(y_K^{(i)})^2\left(\phi'(\binner{\bwbar}{\bz^{(i)}}) - \phi'(\binner{\hat{\bw}}{\bz^{(i)}})\right)^2}\sqrt{\frac{1}{n}\sum_{i=1}^n\binner{\bz^{(i)}}{\hat{\bv}}^2}\\
    &\lesssim K\sqrt{\tfrac{d\ln(n/\sqrt{d})}{n}}.
\end{align*}
and
\begin{align*}
    \Earg{X_{\hat{\bw},\hat{\bv}} - X_{\bwbar,\hat{\bv}}} &\leq \Earg{y_K^2\left(\phi'(\binner{\bwbar}{\bz}) - \phi'(\binner{\hat{\bw}}{\bz})\right)^2}^{1/2}\Earg{\binner{\bz}{\hat{\bv}}^2}^{1/2}\\
    &\leq K\Parg{\sign(\binner{\bwbar}{\bz}) \neq \sign(\binner{\hat{\bw}}{\bz})}^{1/2}\\
    &\leq K\Parg{\abs{\binner{\bwbar}{\bz}} \leq \epsilon_w}^{1/2}\\
    &\leq 2K\sqrt{d^{1/2}\epsilon_w},
\end{align*}
where the last inequality follows from the anti-concentration on the sphere \cite[Lemma A.7]{bietti2022learning}. Thus, for ReLU we choose $\epsilon_v \asymp \sqrt{d/n}$ and $\epsilon_w \asymp \sqrt{d}/n$, and once again obtain
$$\sup_{\bwbar,\bv\in\mathbb{S}^{d-1}}\frac{1}{n}\sum_{i=1}^nX^{(i)}_{\bwbar,\bv} - \Eargs{\bz,y}{X_{\bwbar,\bv}} \leq \sup_{\hat{\bw},\hat{\bv}}\frac{1}{n}\sum_{i=1}^nX^{(i)}_{\hat{\bw},\hat{\bv}} - \Eargs{\bz,y}{X_{\hat{\bw},\hat{\bv}}} + \tilde{\mathcal{O}}(\sqrt{d/n}).$$

It remains to bound the term
$$\sup_{\hat{\bw},\hat{\bv}} \frac{1}{n}\sum_{i=1}^n X^{(i)}_{\hat{\bw},\hat{\bv}} - \Earg{X_{\hat{\bw},\hat{\bv}}}.$$
Notice that for fixed $\hat{\bw},\hat{\bv}$, $X_{\hat{\bw},\hat{\bv}}$ is sub-Gaussian with sub-Gaussian norm $\mathcal{O}(K)$. Thus, via the sub-Gaussian maximal inequality \cite[Lemma 5.2]{van2016probability},
$$\sup_{\hat{\bw},\hat{\bv}} \frac{1}{n}\sum_{i=1}^n X^{(i)}_{\hat{\bw},\hat{\bv}} - \Earg{X_{\hat{\bw},\hat{\bv}}} \lesssim \sqrt{K^2d/n\ln(1/(\epsilon_w\epsilon_v))},$$
with probability at least $1-e^{-d}$. Consequently, we have
$$\sup_{\bwbar \in \mathbb{S}^{d-1}}\norm{\bDelta_n} \leq \tilde{\mathcal{O}}(\sqrt{d/n} + d^{-q/2}),$$
with probability at least $1 - \mathcal{O}(d^{-q})$. Assuming that $n$ grows at most polynomially in dimension and choosing a sufficiently large $q$, we have $\sup_{\bwbar \in \mathbb{S}^{d-1}}\norm{\bDelta_n} \leq \tilde{\mathcal{O}}(\sqrt{d/n})$ with probability at least $1 - \mathcal{O}(d^{-q})$.

Finally, by Lemma \ref{lem:anisotrop_gaussian_covariance},
\begin{equation}\label{eq:v_bound}
    \norm{\bnu(\bwbar^t)} \leq \maxeig{\hat{\bSigma}} \lesssim \maxeig{\bSigma},
\end{equation}
with probability at least $1 - e^{-n'/2}$. Combining the above with the bound on $\norm{\bDelta_n}$, we have $\mathcal{E}_1 \geq -\maxeig{\bSigma}\tilde{\mathcal{O}}(\sqrt{d/n})$ with probability at least $1-\mathcal{O}(d^{-q})$, which concludes the first step of the proof.

\textbf{Step 2. Bounding the error due to the estimation of $\bSigma$, i.e.\ $\mathcal{E}_2$ and $\mathcal{E}_3$.}
Recall that we are considering the event $\mathcal{G}$, thus $y^{(i)} = y^{(i)}_K$. We can control each of the error terms separately. We begin by $\mathcal{E}_2$, where by Cauchy-Schwartz
\begin{align*}
    \mathcal{E}_2 &= \frac{1}{n}\sum_{i=1}^n\left\{\phi'\left(\binner{\bwbar^t}{\tilde{\bz}^{(i)}}\right)-\phi'\left(\binner{\bwbar^t}{\bz^{(i)}}\right)\right\}y_K^{(i)}\binner{\bz^{(i)}}{\bnu(\bwbar^t)}\\
    &\geq -\sqrt{\frac{1}{n}\sum_{i=1}^n\left\{\phi'\left(\binner{\bwbar^t}{\bz^{(i)}}\right) - \phi'\left(\binner{\bwbar^t}{\tilde{\bz}^{(i)}}\right)\right\}^2}\sqrt{\frac{1}{n}\sum_{i=1}^n{y_K^{(i)}}^2\binner{\bz^{(i)}}{\bnu(\bwbar^t)}^2}\\
    &\geq -K\norm{\bnu(\bwbar^t)}\sqrt{\frac{1}{n}\sum_{i=1}^n
    \left\{\phi'\left(\binner{\bwbar^t}{\bz^{(i)}}\right) - \phi'\left(\binner{\bwbar^t}{\tilde{\bz}^{(i)}}\right)\right\}^2},
\end{align*}
where the last line follows from Lemma \ref{lem:good_event} and the fact that $n \gtrsim d$. When $\phi'$ is additionally Lipschitz, we have
$$\mathcal{E}_2 \gtrsim -K\norm{\bnu(\bwbar^t)}\sqrt{\frac{1}{n}\sum_{i=1}^{n}\binner{\bwbar^t}{\bz^{(i)} - \tilde{\bz}^{(i)}}^2}.$$
Moreover, for any $\bwbar \in \mathbb{S}^{d-1}$,
\begin{align*}
    \frac{1}{n}\sum_{i=1}^{n}\binner{\bwbar}{\bz^{(i)} - \tilde{\bz}^{(i)}}^2 &\leq \norm{\frac{1}{n}\sum_{i=1}^n\bz^{(i)}{\bz^{(i)}}^\top}^2\norm{(\ident_d - \hat{\bSigma}^{-1/2}\bSigma^{1/2})\bwbar}^2\\
    &\leq \norm{\frac{1}{n}\sum_{i=1}^n\bz^{(i)}{\bz^{(i)}}^\top}^2\norm{\ident_d - \hat{\bSigma}^{-1/2}\bSigma^{1/2}}^2\\
    &\lesssim \frac{d}{n'},
\end{align*}
where the last inequality holds with probability at least $1-2e^{-d}$ on the event of Lemma \ref{lem:gaussian_covariance_prob}. Hence for smooth activations we conclude
$$\mathcal{E}_2 \gtrsim -K\norm{\bnu(\bwbar^t)}\sqrt{d/n'}.$$

When $\phi$ is the ReLU activation, we need a more involved argument to control $\mathcal{E}_2$. In particular, we will show that for any $\bwbar$, at most only $\tilde{\mathcal{O}}(d)$ datapoints can have $\sign\left(\binner{\bwbar}{\bz^{(i)}}\right) \neq \sign\left(\binner{\bwbar}{\tilde{\bz}^{(i)}}\right)$. Notice that
\begin{align}
    \sign\left(\binner{\bwbar}{\bz^{(i)}}\right) \neq \sign\left(\binner{\bwbar}{\tilde{\bz}^{(i)}}\right) &\implies \abs{\binner{\bwbar}{\bz^{(i)}}} \leq \abs{\binner{\bwbar}{\bz^{(i)} - \tilde{\bz}^{(i)}}}\nonumber\\
    &\implies \abs{\binner{\bwbar}{\bzbar^{(i)}}} \leq \norm{\ident_d - \hat{\bSigma}^{-1/2}\bSigma^{1/2}}\label{eq:sign_change}
\end{align}
where $\bzbar^{(i)} \coloneqq \frac{\bz^{(i)}}{\norm{\bz^{(i)}}}$ is distributed uniformly over $\mathbb{S}^{d-1}$. From Lemma \ref{lem:gaussian_covariance_prob} we have $\norm{\ident_d - \hat{\bSigma}^{-1/2}\bSigma^{1/2}} \lesssim \sqrt{\frac{d}{n'}}$ with probability at least $1-2e^{-d}$. On the other hand, from Lemma \ref{lem:sign_concentrate} we know with probability at least $1-e^{-d}$, for any $\bwbar \in \mathbb{S}^{d-1}$ at most $\tilde{\mathcal{O}}(d)$ of the labeled samples have $\abs{\binner{\bwbar}{\bz^{(i)}}} \lesssim \sqrt{d}/n$. Recall that $n' \gtrsim n^2$ when using the ReLU activation. This is precisely why we make this choice for the ReLU activation, as we need to balance the RHS of \eqref{eq:sign_change} which is of order $\sqrt{d/n'}$ with the LHS of \eqref{eq:sign_change} which should at most be of order $\sqrt{d}/n$ if we want to ensure only $\mathcal{\tilde{O}}(d)$ samples satisfy the bound. When $n'=n^2$ we can balance these two terms, thus with probability at least $1-3e^{-d}$ the sign change can occur for at most $\tilde{\mathcal{O}}(d)$ many samples, and
$$\frac{1}{n}\sum_{i=1}^n\left\{\phi'\left(\binner{\bwbar^t}{\bz^{(i)}}\right) - \phi'\left(\binner{\bwbar^t}{\tilde{\bz}^{(i)}}\right)\right\}^2 \leq \tilde{\mathcal{O}}\left(\frac{d}{n}\right).$$
In this case, we end up with
$$\mathcal{E}_2 \geq -K\norm{\bnu(\bwbar^t)}\tilde{\mathcal{O}}(\sqrt{d/n}).$$

Bounding $\mathcal{E}_3$ for ReLU and Lipschitz $\phi'$ is identical. In both cases, by Cauchy-Schwartz,
\begin{align*}
    \mathcal{E}_3 &\geq -\sqrt{\frac{1}{n}\sum_{i=1}^n\phi'(\binner{\bwbar}{\bz^{(i)}})^2{y_K^{(i)}}^2}\sqrt{\frac{1}{n}\sum_{i=1}^n\binner{\bz^{(i)} - \tilde{\bz}^{(i)}}{\bnu(\bwbar^t)}^2}\\
    &\gtrsim -K\norm{\frac{1}{n}\sum_{i=1}^n\bz^{(i)}{\bz^{(i)}}^\top}\norm{\ident_d - \hat{\bSigma}^{-1/2}\bSigma^{1/2}}\norm{\bnu(\bwbar^t)}\\
    &\gtrsim -K\norm{\bnu(\bwbar^t)}\sqrt{d/n'},
\end{align*}
which holds on the intersection of event $\mathcal{G}$ and of Lemma \ref{lem:anisotrop_gaussian_covariance}. At last, using the bound on $\norm{\bnu(\bwbar^t)}$ from \eqref{eq:v_bound}, we obtain
$$\mathcal{E}_2\land\mathcal{E}_3 \geq -\maxeig{\bSigma}\tilde{\mathcal{O}}(\sqrt{d/n}),$$
with probability at least $1-\mathcal{O}(d^{-q})$.

\textbf{Step 3. Analyzing the Convergence.}
As a result of the previous steps, we have established
\begin{align*}
    \deriv{\binner{\bwbar^t}{\bubar}}{t} \geq \binner{\bnu(\bwbar^t)}{\Earg{\phi'(\binner{\bwbar^t}{\bz})y\bz}} - \maxeig{\bSigma}\tilde{\mathcal{O}}(\sqrt{d/n}).
\end{align*}
Thanks to Lemma \ref{lem:generalized_stein}, we can write
\begin{align*}
    \Earg{\phi'(\binner{\bwbar}{\bz})y\bz} &= \Earg{\phi'(\binner{\bwbar}{\bz})g(\binner{\bubar}{\bz})\bz}\\
    &= -\ZetaCoeff(\binner{\bwbar}{\bubar})\bubar - \PsiCoeff(\binner{\bwbar^t}{\bubar})\bwbar,
\end{align*}
where $\ZetaCoeff$ and $\PsiCoeff$ were introduced in \eqref{eq:zeta_def} and \eqref{eq:psi_def} respectively. Recall the definition of $\bnu(\bwbar^t)$,
$$\bnu(\bwbar^t) \coloneqq (\ident_d - \bwbar^t{\bwbar^t}^\top)\hat{\bSigma}(\ident_d - \bwbar^t{\bwbar^t}^\top)\bubar.$$
Therefore,
\begin{align*}
    \deriv{\binner{\bwbar^t}{\bubar}}{t} &\geq -\ZetaCoeff\left(\binner{\bwbar^t}{\bubar}\right)\bubar^\top(\ident_d - \bwbar^t{\bwbar^t}^\top)\hat{\bSigma}(\ident_d - \bwbar^t{\bwbar^t}^T)\bubar - \maxeig{\bSigma}\tilde{\mathcal{O}}(\sqrt{d/n})\\
    &\geq c\binner{\bwbar^t}{\bubar}^{s-1}\binner{\bubar^t_\perp}{\hat{\bSigma}\bubar^t_\perp} - \maxeig{\bSigma}\tilde{\mathcal{O}}(\sqrt{d/n}) \qquad \text{(By Assumption \ref{assump:negative_s2})}\\
    &\geq c\mineig{\hat{\bSigma}}\binner{\bwbar^t}{\bubar}^{s-1}(1 - \binner{\bwbar^t}{\bubar}^2) - \maxeig{\bSigma}\tilde{\mathcal{O}}(\sqrt{d/n}),
\end{align*}
where $\bubar^t_\perp \coloneqq \bubar - \binner{\bubar}{\bwbar^t}{\bwbar^t}$. Moreover, from Lemma \ref{lem:anisotrop_gaussian_covariance}, we have
\begin{align*}
    \mineig{\hat{\bSigma}} &\geq \mineig{\bSigma}\left(\frac{1}{2} - \sqrt{\frac{\tr(\bSigma)}{n'\mineig{\bSigma}}}\right)\\
    &\geq \mineig{\bSigma}\left(\frac{1}{2} - \sqrt{\frac{d\varkappa(\bSigma)}{n'}}\right),
\end{align*}
with probability at least $1-e^{-n'/8}$. Hence, for $n' \gtrsim d\varkappa(\bSigma)$ we have $\mineig{\hat{\bSigma}} \gtrsim \mineig{\bSigma}$, and consequently,
$$\deriv{\binner{\bwbar^t}{\bubar}}{t} \geq c'\mineig{\bSigma}\binner{\bwbar^t}{\bubar}^{s-1}(1-\binner{\bwbar^t}{\bubar}^2) - \maxeig{\bSigma}\tilde{\mathcal{O}}(\sqrt{d/n}),$$
where $c'$ is a universal constant. Notice that the first term in the RHS above denotes the signal, while the second term denotes the noise. We want to ensure the noise remains smaller than the signal throughout the trajectory, which leads to the convergence of $\bwbar^t$ to $\bubar$. Notice that the signal term is first increasing, then decreasing for $\binner{\bwbar^t}{\bubar} \in [0,1]$. Thus, it suffices to ensure the noise is smaller than the signal on the two ends of the interval, i.e.\ at time $t=0$ and at time $t=T$ where $\binner{\bwbar^T}{\bubar} = 1 - \varepsilon$. At initialization, this condition leads to
$$n \gtrsim Cd\varkappa(\bSigma)^2\binner{\bwbar^0}{\bubar}^{2(1-s)},$$
and at time $t=T$, leads to
$$n \gtrsim \frac{Cd\varkappa(\bSigma)^2}{\varepsilon^2},$$
where $C$ hides constant depending only on $s$ and at most polylogarithmic factors of $d$. Thus, we have established the sample complexity as presented by Theorem~\ref{thm:ngf_general_covariance}.

It remains to obtain the convergence time. With the above sample complexity, we have
$$\deriv{\binner{\bwbar^t}{\bubar}}{t} \geq c''\mineig{\bSigma}\binner{\bwbar^t}{\bubar}^{s-1}(1-\binner{\bwbar^t}{\bubar}^2),$$
where $c''$ is a universal constant. The rest of the proof follows by integration and is identical to the proof of Proposition~\ref{prop:pop_norm_flow} in Appendix~\ref{ap:proof_pop_flow_norm}.
\qed

\subsection{Proof of Corollary~\ref{cor:spiked}}
The proof follows immediately from Theorem~\ref{thm:ngf_general_covariance} and the following lemma which describes how $\binner{\bwbar^0}{\bubar}$ behaves under different regimes of $r_1$ and $r_2$.
\begin{lemma}\label{lem:init}
    Suppose $\bSigma$ follows the $(\kappa,\btheta)$-spiked model, $\bw^0$ is sampled uniformly from $\mathbb{S}^{d-1}$, $n' \gtrsim d$, and there exist universal constants $C_2,C'_2,C_3,C'_3 > 0$ such that
    $$C_2d^{r_2} \leq \kappa \leq C'_2d^{r_2} \qquad \text{and} \qquad C_3d^{-r_1} \leq \binner{\bu}{\btheta} \leq C'_3d^{-r_1}$$
    for $r_1 \in [0,1/2]$ and $r_2 \in [0,1]$. Then, conditioned on $\binner{\bwbar^0}{\bubar} > 0$, with any arbitrarily large constant probability $1-\delta$, for sufficiently large $d$ (that depends on $\delta$) we have
    \begin{equation}
        \binner{\bwbar^0}{\bubar} \gtrsim \begin{cases}
            d^{-1/2} & 0 \leq r_2 < r_1\\
            d^{r_2 - r_1 - 1/2} & r_1 < r_2 < 2r_1\\
            d^{(r_2 - 1)/2} & 2r_1 < r_2 < 1
        \end{cases}.
    \end{equation}
\end{lemma}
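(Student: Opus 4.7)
\textbf{Proof plan for Lemma \ref{lem:init}.} The plan is to first do a direct algebraic computation at the population level, then transfer the result to the empirical covariance version via a perturbation argument, and finally handle the three regimes by identifying which terms dominate the numerator and denominator.

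The first step is to expand $\binner{\bwbar^0}{\bubar}$ in the population case where $\bwbar^0_{\text{pop}}\coloneqq \bSigma^{1/2}\bw^0/\|\bSigma^{1/2}\bw^0\|$. Using the identity $\bSigma\,\bv = \frac{1}{1+\kappa}(\bv + \kappa\binner{\bv}{\btheta}\btheta)$ and $\|\bSigma^{1/2}\bv\|^2 = \frac{1+\kappa\binner{\bv}{\btheta}^2}{1+\kappa}$, the $(1+\kappa)$ factors cancel and I obtain the clean expression
\begin{equation*}
A(\bw^0) \coloneqq \binner{\bwbar^0_{\text{pop}}}{\bubar} = \frac{\binner{\bw^0}{\bu} + \kappa\binner{\bw^0}{\btheta}\binner{\bu}{\btheta}}{\sqrt{1+\kappa\binner{\bw^0}{\btheta}^2}\sqrt{1+\kappa\binner{\bu}{\btheta}^2}}.
\end{equation*}

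The second step is to use standard concentration on the sphere (e.g.\ the Gaussian-quotient representation $\bw^0 \eqd \bg/\|\bg\|$ with $\bg\sim\mathcal{N}(0,\ident_d)$, together with an anti-concentration bound such as \cite[Lemma A.7]{bietti2022learning}) to show that, up to an event of arbitrarily small constant probability $\delta$, both $|\binner{\bw^0}{\bu}|$ and $|\binner{\bw^0}{\btheta}|$ are of order $d^{-1/2}$ (matching upper and lower bounds up to constants depending on $\delta$). Conditioning on $A(\bw^0) > 0$ only doubles the failure probability by the $\bw^0\leftrightarrow -\bw^0$ symmetry of the sphere, so the magnitude estimates are unaffected.

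The third step is the regime-by-regime analysis. Since $\kappa\asymp d^{r_2}$ and $\binner{\bu}{\btheta}\asymp d^{-r_1}$, a direct substitution gives: the first term of the numerator is $\asymp d^{-1/2}$, the second is $\asymp d^{r_2-r_1-1/2}$; the denominator factor for $\bu$ is $\Theta(1)$ when $r_2<2r_1$ and $\asymp d^{(r_2-2r_1)/2}$ when $r_2>2r_1$, and the factor for $\bw^0$ is $\Theta(1)$ throughout (since $\kappa\binner{\bw^0}{\btheta}^2\asymp d^{r_2-1}\lesssim 1$). Comparing the two numerator terms (the crossover is at $r_2=r_1$) and then comparing to the denominator (crossover at $r_2=2r_1$) yields the three cases stated in the lemma.

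The final step is to transfer the population estimate to $\bwbar^0 = \hat{\bSigma}^{1/2}\bw^0/\|\hat{\bSigma}^{1/2}\bw^0\|$. Because $\bw^0$ is independent of $\hat{\bSigma}$ and $\|\bSigma\|=1$, Lemma \ref{lem:anisotrop_gaussian_covariance} (or a direct Gaussian covariance concentration bound) gives $\|\hat{\bSigma}-\bSigma\|\lesssim\sqrt{d/n'}\lesssim 1$. Conditional on $\hat{\bSigma}$, the vector $\hat{\bSigma}^{1/2}\bw^0$ is a uniform scaling of a random sphere direction, so $\binner{\hat{\bSigma}^{1/2}\bw^0}{\bSigma^{1/2}\bu}$ and $\|\hat{\bSigma}^{1/2}\bw^0\|^2$ concentrate around $\binner{\bw^0}{\hat{\bSigma}^{1/2}\bSigma^{1/2}\bu}$ and $\bw^{0\top}\hat{\bSigma}\bw^0$ respectively; writing $\hat{\bSigma}=\bSigma+\bE$ with $\|\bE\|$ small and doing the same case split shows that the perturbation changes each of the ratios in Step 3 by at most a constant factor. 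I expect the main obstacle to be precisely this step: because $\mineig{\bSigma}=1/(1+\kappa)$ can be as small as $d^{-1}$, a naive bound on $\|\hat{\bSigma}^{1/2}-\bSigma^{1/2}\|$ would blow up. The way around this is to not bound $\|\hat{\bSigma}^{1/2}-\bSigma^{1/2}\|$ directly, but rather to control the scalar quantities $\binner{\hat{\bSigma}^{1/2}\bw^0}{\bSigma^{1/2}\bu}$ and $\|\hat{\bSigma}^{1/2}\bw^0\|$ via $\hat{\bSigma}\approx\bSigma$ in operator norm and the explicit spiked structure, using that $\bw^0$ is independent of $\hat{\bSigma}$ to invoke Hanson–Wright-type concentration for quadratic and bilinear forms in each of the three regimes separately.
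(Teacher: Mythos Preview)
Your overall structure (population computation, sphere concentration for $\binner{\bw^0}{\bu}$ and $\binner{\bw^0}{\btheta}$, three-regime case split) matches the paper's proof. The difference is in Step~4, and there the paper's device is both simpler and sidesteps the obstacle you identified.

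You propose to control $\|\hat{\bSigma}^{1/2}\bw^0\|$ and $\binner{\hat{\bSigma}^{1/2}\bw^0}{\bSigma^{1/2}\bu}$ via $\hat{\bSigma}\approx\bSigma$ in operator norm and/or Hanson--Wright. The first of these does not go through under the stated hypothesis $n'\gtrsim d$: since $\|\bSigma^{1/2}\bw^0\|^2 = (1+\kappa\binner{\bw^0}{\btheta}^2)/(1+\kappa)\asymp 1/(1+\kappa)$, the additive operator-norm perturbation $\|\hat{\bSigma}-\bSigma\|\lesssim\sqrt{d/n'}$ swamps the signal unless $n'\gtrsim d(1+\kappa)^2$. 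A Hanson--Wright route conditioned on $\hat{\bSigma}$ may be salvageable but would require separate bookkeeping in each regime and control of $\tr(\hat{\bSigma})$ and $\|\hat{\bSigma}\|_F$.

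The paper instead uses a \emph{multiplicative} factorization: write $\hat{\bSigma}^{1/2}=(\hat{\bSigma}^{1/2}\bSigma^{-1/2})\bSigma^{1/2}$ and observe that $\bSigma^{-1/2}\hat{\bSigma}\bSigma^{-1/2}=\tfrac{1}{n'}\sum_i\bz^{(i)}{\bz^{(i)}}^\top$ is the sample covariance of \emph{isotropic} Gaussians, so $\|\hat{\bSigma}^{1/2}\bSigma^{-1/2}-\ident_d\|\lesssim\sqrt{d/n'}$ with no condition-number dependence (this is Lemma~\ref{lem:gaussian_covariance_prob}). This immediately gives $\|\hat{\bSigma}^{1/2}\bw^0\|\lesssim\|\bSigma^{1/2}\bw^0\|$ for the denominator, and for the numerator one writes $\binner{\hat{\bSigma}^{1/2}\bw^0}{\bSigma^{1/2}\bu}=\binner{\bw^0}{\bSigma\bu}+\binner{\bw^0}{(\hat{\bSigma}^{1/2}\bSigma^{-1/2}-\ident_d)\bSigma\bu}$ and bounds the error term (after Markov on the $\bw^0$ inner product) by $\sqrt{1/n'}\,\|\bSigma\bu\|$, which is exactly the right scale in all three regimes. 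This reduces everything to a single displayed inequality (the paper's \eqref{eq:inner_prod_bound}) from which the case split is read off directly. I would recommend replacing your Step~4 with this multiplicative trick; the rest of your plan then goes through essentially as written.
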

\begin{proof}
    By definition,
    $$\binner{\bwbar^0}{\bubar} = \frac{\binner{\hat{\bSigma}^{1/2}\bw^0}{\bSigma^{1/2}\bu}}{\norm{\hat{\bSigma}^{1/2}\bw^0}\norm{\bSigma^{1/2}\bu}}.$$
    Recall that we are conditioning our arguments on $\binner{\bwbar^0}{\bubar} > 0$, hence the numerator of the above fraction is positive. To translate the sample complexities of Theorems~\ref{thm:ngf_general_covariance} and~\ref{thm:ngf_general_covariance-prec} to the spiked model, our goal is to lower bound $\binner{\bwbar^0}{\bubar}$ in terms of $d$, $r_1$, and $r_2$.
    
    We begin by observing that
    $$\norm{\hat{\bSigma}^{1/2}\bw} \leq \norm{\hat{\bSigma}^{1/2}\bSigma^{-1/2}}\norm{\bSigma^{1/2}\bw} \lesssim \norm{\bSigma^{1/2}\bw},$$
    where the last inequality holds on the event of Lemma~\ref{lem:gaussian_covariance_prob}, which happens with probability at least $1-2e^{-d}$.
    Consequently,
    $$\binner{\bwbar^0}{\bubar} \gtrsim \frac{\binner{\hat{\bSigma}^{1/2}\bw^0}{\bSigma^{1/2}\bu}}{\norm{\bSigma^{1/2}\bw^0}\norm{\bSigma^{1/2}\bu}} = \frac{\binner{\bw^0}{\bSigma\bu} + \binner{\bw^0}{(\hat{\bSigma}^{1/2} - \bSigma^{1/2})\bSigma^{1/2}\bu}}{\norm{\bSigma^{1/2}\bw^0}\norm{\bSigma^{1/2}\bu}}.$$
    Furthermore, due to the Markov inequality,
    $$\Parg{\binner{\bw^0}{\btheta}^2 \geq \frac{C_1}{d}} \leq 1/C_1.$$
    Similarly (by conditioning on $\hat{\bSigma}$)
    $$\Parg{\binner{\bw^0}{(\hat{\bSigma}^{1/2} - \bSigma^{1/2})\bSigma^{1/2}\bu}^2 \geq \frac{C_1\norm{(\hat{\bSigma}^{1/2}-\bSigma^{1/2})\bSigma^{1/2}\bu}^2}{d}} \leq 1/C_1.$$
    Additionally, on the event of Lemma~\ref{lem:gaussian_covariance_prob},
    $$\norm{(\hat{\bSigma}^{1/2} - \bSigma^{1/2})\bSigma^{1/2}\bu} \leq \norm{\hat{\bSigma}^{1/2}\bSigma^{-1/2} - \ident_d}\norm{\bSigma\bu} \lesssim \sqrt{d/n'}\norm{\bSigma\bu}.$$
    Therefore, on the above events, for some absolute constant $C > 0$
    \begin{align*}
        \binner{\bwbar^0}{\bubar} &\gtrsim \frac{\binner{\bw^0}{\bSigma\bu} -C\norm{\bSigma\bu}\sqrt{1/n'}}{\sqrt{\frac{1+C'_2C_1}{1+\kappa}}\norm{\bSigma\bu}}\\
        &\gtrsim \frac{\binner{\bw^0}{\bu} + \kappa\binner{\bw^0}{\btheta}\binner{\bu}{\btheta} - C(1 + \kappa\abs{\binner{\bu}{\btheta}})\sqrt{1/n'}}{\sqrt{1 + C'_2C_1}\sqrt{1 + \kappa\binner{\bu}{\btheta}^2}}.
    \end{align*}
    Recall that $C_2d^{r_2} \leq \kappa \leq C'_2d^{r_2}$ and $C_3d^{-r_1} \leq \binner{\bu}{\btheta} \leq C'_3d^{-r_1}$ (notice that changing $\btheta$ to $-\btheta$ does not change the spiked model of Assumption~\ref{assump:spiked_model}, thus we can assume $\binner{\bu}{\btheta} \geq 0$ without loss of generality). Then,
    \begin{equation}\label{eq:inner_prod_bound}
        \binner{\bwbar^0}{\bubar} \gtrsim \frac{\binner{\bw^0}{\bu} + \kappa\binner{\bw^0}{\btheta}\binner{\bu}{\btheta} - C(1+\kappa\binner{\bu}{\btheta})\sqrt{1/n'}}{\sqrt{1 + C'_2C_1}\sqrt{1 + C'_2{C'_3}^2d^{r_2-2r_1}}}.
    \end{equation}
    The last term in the numerator can be made arbitrarily small by sufficiently large $n'$, hence we focus on other terms for now. Intuitively, when $r_2 < 2r_1$, the denominator is of constant order. If additionally $r_2 < r_1$, the dominant term in the numerator is $\binner{\bw^0}{\bu}$ and $\binner{\bwbar^0}{\bubar} \asymp 1/\sqrt{d}$, otherwise the dominant term is $\kappa\binner{\bw^0}{\btheta}\binner{\bu}{\btheta}$ and $\binner{\bwbar^0}{\bubar} \asymp d^{r_2 - r_1 - 1/2}$. On the other hand, when $r_2 > 2r_1$, the denominator is of order $d^{r_2/2-r_1}$, and once again the dominant term of the numerator is $\kappa\binner{\bw^0}{\btheta}\binner{\bu}{\btheta}$, therefore $\binner{\bwbar^0}{\bubar} \asymp d^{(r_2-1)/2}$. Using this intuition, we analyze each of the following regimes separately.

    \underline{Case 1. $0 < r_2 < r_1$}: In this case, by~\cite[Lemma A.7]{bietti2022learning} we have $\abs{\binner{\bw^0}{\bu}} \geq c/\sqrt{d}$ with probability at least $1-4c$. On the intersection of all considered events with $\binner{\bwbar^0}{\bubar} > 0$, and for sufficiently large $d$ and $n' \gtrsim d$, we must have $\binner{\bw^0}{\bu} > 0$ (otherwise $\binner{\bwbar^0}{\bubar} < 0$). Thus by plugging the values in~\eqref{eq:inner_prod_bound},
    \begin{equation}
        \binner{\bwbar^0}{\bubar} \gtrsim \frac{c - \sqrt{C_1}C'_2C'_3d^{r_2 - r_1} - C(1 + C'_2C'_3d^{r_2-r_1})\sqrt{d/n'}}{\sqrt{d}\sqrt{1 + C'_2C_1}\sqrt{1 + C'_2{C'_3}^2d^{r_2-2r_1}}} \gtrsim \frac{1}{\sqrt{d}}.
    \end{equation}
    The intersection of all desired events and $\binner{\bwbar^0}{\bubar} > 0$ happens with probability at least $\tfrac{1}{2} - 4c - 2/C_1 - 2e^{-d}$, thus conditioned on $\binner{\bwbar^0}{\bubar}$ the probability is at least $1 - 8c - 4/C_1 - 4e^{-d}$. Choosing sufficiently small $c$, large $C_1$, and respectively large $d$ and $n' \gtrsim d$ with sufficiently large absolute constant, we can arbitrarily increase the (constant) probability of success. Thus the analysis of this regime is complete.

    \underline{Case 2. $r_1 < r_2 < 2r_1$}:
    This time we use the fact that $\abs{\binner{\bw^0}{\bu}} \leq \sqrt{C_1/d}$ with probability at least $1 - 1/C_1$, and $\abs{\binner{\bw^0}{\btheta}} \geq c/\sqrt{d}$ with probability at least $1-4c$. By an argument similar to the previous case, for sufficiently large $d$ and $n' \gtrsim d$, $\binner{\bwbar^0}{\bubar} > 0$ implies $\binner{\bw^0}{\btheta} > 0$, hence by~\eqref{eq:inner_prod_bound}
    \begin{equation}
        \binner{\bwbar^0}{\bubar} \gtrsim \frac{-\sqrt{C_1} + cC_2C_3d^{r_2-r_1} - C(1+C'_2C'_3d^{r_2-r_1})\sqrt{d/n'}}{\sqrt{d}\sqrt{1+C'_2C_1}\sqrt{1 + C'_2{C'_3}^2d^{r_2-2r_1}}} \gtrsim d^{r_2 - r_1 - 1/2},
    \end{equation}
    with probability at least $1 - 8c - 4/C_1 - 4e^{-d}$ when conditioned on $\binner{\bwbar^0}{\bubar} > 0$.

    \underline{Case 3. $2r_1 < r_2 < 1$}: Once again recall~\eqref{eq:inner_prod_bound}. To bound the numerator, we repeat the exact same argument as in the previous case, thus
    \begin{equation}
        \binner{\bwbar^0}{\bubar} \geq \frac{-\sqrt{C_1} + cC_2C_3d^{r_2-r_1} - C(1 + C'_2C'_3d^{r_2-1})\sqrt{d/n'}}{\sqrt{(1+C'_2C_1)C'_2{C'_3}^2}d^{\tfrac{1+r_2-2r_1}{2}}} \gtrsim d^{(r_2-1)/2},
    \end{equation}
    which finishes the proof of the lemma.
\end{proof}

\section{Proofs of Section~\ref{sec:imp}}
\subsection{Proof of Theorem~\ref{thm:ngf_general_covariance-prec}}
We recall from \eqref{eq:wbar_chain_rule} that
$$\deriv{\bwbar^t}{t} = \frac{(\ident_d - \bwbar^t{\bwbar^t}^\top)\hat{\bSigma}^{1/2}}{\norm{\bSigma^{1/2}\bw}}\deriv{\bw^t}{t}.$$
Furthermore, the preconditioned dynamics of $\bw^t$ given by~\eqref{eq:prec_gf} reads
$$\deriv{\bw^t}{t} = \frac{\eta(\bw^t)\hat{\bSigma}^{-1/2}(\ident_d - \bwbar^t{\bwbar^t}^\top)}{\norm{\hat{\bSigma}^{1/2}\bw}}\left\{\frac{1}{n}\sum_{i=1}^n\phi'\left(\binner{\bwbar^t}{\tilde{\bz}^{(i)}}\right)y^{(i)}\tilde{\bz}^{(i)}\right\},$$
where we recall $\tilde{\bz}^{(i)} \coloneqq \hat{\bSigma}^{-1/2}\bx^{(i)}$. Plugging in $\eta(\bw^t) = \norm{\hat{\bSigma}^{1/2}\bw}^2$ yields
\begin{align*}
    \deriv{\bwbar^t}{t} &= (\ident_d - \bwbar^t{\bwbar^t}^\top)^2\left\{\frac{1}{n}\sum_{i=1}^n\phi'\left(\binner{\bwbar^t}{\tilde{\bz}^{(i)}}\right)y^{(i)}\tilde{\bz}^{(i)}\right\}\\
    &= (\ident_d - \bwbar^t{\bwbar^t}^\top)\left\{\frac{1}{n}\sum_{i=1}^n\phi'\left(\binner{\bwbar^t}{\tilde{\bz}^{(i)}}\right)y^{(i)}\tilde{\bz}^{(i)}\right\}
\end{align*}
The rest of the analysis is identical to that of the proof of Theorem~\ref{thm:ngf_general_covariance} in Appendix~\ref{ap:proof_ngf_general_covariance}. Specifically, by defining
$$\bubar^t_\perp \coloneqq \bubar - \binner{\bwbar^t}{\bubar}\bwbar^t,$$
we have
\begin{align*}
    \deriv{\binner{\bwbar^t}{\bubar}}{t} = &\binner{\bubar^t_\perp}{\Eargs{\bz,y}{\phi'(\binner{\bwbar^t}{\bz})y\bz}}\\
    &+\underbrace{\binner{\bubar^t_\perp}{\frac{1}{n}\sum_{i=1}^{n}\phi'\left(\binner{\bwbar^t}{\bz^{(i)}}\right)y^{(i)}\bz^{(i)} - \Eargs{\bz,y}{\phi'\left(\binner{\bwbar^t}{\bz}\right)y\bz}}}_{\eqqcolon \mathcal{E}_1}\\
    &+ \underbrace{\frac{1}{n}\sum_{i=1}^n\left\{\phi'\left(\binner{\bwbar^t}{\tilde{\bz}^{(i)}}\right)-\phi'\left(\binner{\bwbar^t}{\bz^{(i)}}\right)\right\}y^{(i)}\binner{\bz^{(i)}}{\bubar^t_\perp}}_{\eqqcolon \mathcal{E}_2}\\
    &+ \underbrace{\frac{1}{n}\sum_{i=1}^{n}\phi'\left(\binner{\bwbar}{\tilde{\bz}^{(i)}}\right)y^{(i)}\binner{\tilde{\bz}^{(i)}-\bz^{(i)}}{\bubar^t_\perp}}_{\eqqcolon \mathcal{E}_3}.
\end{align*}
As long as $n \gtrsim d$, $n' = n$ for the smooth case, and $n' \gtrsim n^2$ for the ReLU case, the first two steps of the proof of Theorem~\ref{thm:ngf_general_covariance} in Appendix~\ref{ap:proof_ngf_general_covariance} implies that
$$\mathcal{E}_1\land\mathcal{E}_2\land\mathcal{E}_3 \geq -\norm{\bubar^t_\perp}\tilde{\mathcal{O}}(\sqrt{d/n}) \geq -\tilde{\mathcal{O}}(\sqrt{d/n}).$$
Once again, we apply Lemma~\ref{lem:generalized_stein} to obtain
$$\Earg{\phi'(\binner{\bwbar}{\bz})y\bz} = -\ZetaCoeff(\binner{\bwbar}{\bubar})\bubar - \PsiCoeff(\binner{\bwbar^t}{\bubar})\bwbar,$$
with $\ZetaCoeff$ and $\PsiCoeff$ given in \eqref{eq:zeta_def} and \eqref{eq:psi_def} respectively. As a result,
\begin{align*}
    \deriv{\binner{\bwbar^t}{\bubar}}{t} &\geq -\ZetaCoeff(\binner{\bwbar^t}{\bubar})\norm{\bubar^t_\perp}^2 - \tilde{\mathcal{O}}(\sqrt{d/n})\\
    &\geq c\binner{\bwbar^t}{\bubar}^{s-1}(1-\binner{\bwbar^t}{\bubar}^2) - \tilde{\mathcal{O}}(\sqrt{d/n}) \qquad \text{(By Assumption~\ref{assump:negative_s2})}.
\end{align*}
We need to ensure the noise term, i.e.\ the second term on the RHS remains smaller than the signal, i.e.\ the first term. The signal term attains its minimum at either initialization $t=0$ or at the end of the trajectory $t=T$ where $\binner{\bwbar^T}{\bubar} = 1 - \varepsilon$, which imposes the following sufficient conditions on $n$. Namely, at initialization we require
$$n \gtrsim Cd\binner{\bwbar^0}{\bubar}^{2(1-s)},$$
while at $t=T$ we require
$$n \gtrsim Cd/\varepsilon^2,$$
where $C$ hides constant that only depend on $s$ and polylogarithmic factors of $d$. Hence, we obtain
$$\deriv{\binner{\bwbar^t}{\bubar}}{t} \geq c'\binner{\bwbar^t}{\bubar}^{s-1}(1 - \binner{\bwbar^t}{\bubar}^2).$$
for some universal constant $c'>0$. Via integration (similar to the proof of Proposition~\ref{prop:pop_norm_flow} in Appendix~\ref{ap:proof_pop_flow_norm}), for
$$T_1 \coloneqq \sup\{t > 0 : \binner{\bwbar^t}{\bubar} < 1/2\}$$
we obtain
$$T_1 \lesssim \tau_s(\binner{\bwbar^0}{\bubar}),$$
and for
$$T_2 \coloneqq \sup\{t > 0 : \binner{\bwbar^t}{\bubar} < 1 - \varepsilon\}$$
we obtain $T_2 - T_1 \lesssim \ln(1/\varepsilon)$, which completes the proof. We conclude by remarking that the proof of Corollary~\ref{cor:spiked-prec} is immediate given Theorem~\ref{thm:ngf_general_covariance-prec} and Lemma~\ref{lem:init}.
\qed

\subsection{Preliminary Lemmas for Proving Theorem~\ref{thm:approx}}
We will adapt the following lemma from~\cite{mousavi2023neural}, which provides a non-parametric approximation of $g$ via random biases.
\begin{lemma}\emph{\cite[Lemma 22]{mousavi2023neural}}\label{lem:random_bias_approx}
    For any smooth $g : \reals \to \reals$ and $\Delta > 0$, let $\tilde{g} : \reals \to \reals$ be a smooth function such that $\tilde{g}(z) = g(z)$ for $\abs{z} \leq \Delta$ and $\tilde{g}(-2\Delta) = \tilde{g}'(-2\Delta) = 0$. Suppose $\{b_j\}_{j=1}^m \stackrel{\text{i.i.d.}}{\sim} \Unif(-2\Delta,2\Delta)$, and let $\Delta_* \coloneqq \Delta\sup_{\abs{z} \leq 2\Delta}\abs{\tilde{g}''(z)}$. Then, there exist second layer weights $\{a_j(b_j)\}_{j=1}^m$ with $\norm{\ba} \lesssim \Delta_*/\sqrt{m}$, such that for any fixed $z \in [-\Delta,\Delta]$ and any $\delta > 0$, with probability at least $1-\delta$ over the random biases,
    $$\abs{\sum_{j=1}^ma(b_j)\phi(z + b_j) - g(b_j)} \lesssim \Delta\Delta_*\sqrt{\frac{\ln(1/\delta)}{m}},$$
    where $\phi$ is the ReLU activation.
\end{lemma}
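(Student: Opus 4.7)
The plan is to exhibit an explicit integral representation of $\tilde{g}$ in terms of ReLU ridge functions with bias, turn it into an expectation over uniform biases, and then invoke a Hoeffding-type concentration to convert the expectation into an empirical average over the random $b_j$'s.

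First I would establish the integral representation. Since $\tilde{g}(-2\Delta) = \tilde{g}'(-2\Delta) = 0$ and $\tilde{g}$ is twice differentiable, Taylor's theorem with integral remainder (equivalently, two applications of the fundamental theorem of calculus plus Fubini) yields
\begin{equation*}
\tilde{g}(z) = \int_{-2\Delta}^{z}\tilde{g}''(t)(z-t)\dee t = \int_{-2\Delta}^{2\Delta}\tilde{g}''(t)\,(z-t)_+\dee t
\qquad \text{for } z \in [-2\Delta,2\Delta],
\end{equation*}
using that the integrand vanishes for $t > z$. Substituting $b = -t$ and recognising $(z+b)_+ = \phi(z+b)$ for the ReLU activation gives
\begin{equation*}
\tilde{g}(z) = \int_{-2\Delta}^{2\Delta}\tilde{g}''(-b)\,\phi(z+b)\dee b
= 4\Delta\cdot\Eargs{b\sim \Unif(-2\Delta,2\Delta)}{\tilde{g}''(-b)\phi(z+b)}.
\end{equation*}
This is the standard ``ReLU is a universal approximator via its second derivative'' identity and it is what lets uniformly drawn biases do the approximation job.

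Second, I would define the candidate second-layer coefficients as $a(b_j) \coloneqq \tfrac{4\Delta}{m}\tilde{g}''(-b_j)$, so that $S_m(z) \coloneqq \sum_{j=1}^{m} a(b_j)\phi(z+b_j)$ is an unbiased estimator of $\tilde{g}(z)$ whenever $z \in [-\Delta,\Delta]$. The norm bound is then immediate: because $|\tilde{g}''(-b_j)| \leq \Delta_*/\Delta$ by the definition of $\Delta_*$, we get $|a(b_j)| \leq 4\Delta_*/m$, and hence $\norm{\ba}^2 \leq 16\Delta_*^2/m$, matching the claimed $\norm{\ba}\lesssim \Delta_*/\sqrt{m}$.

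Third, I would apply Hoeffding's inequality for a fixed $z\in[-\Delta,\Delta]$. The summands $X_j \coloneqq a(b_j)\phi(z+b_j)$ are i.i.d.\ and bounded in absolute value by
\begin{equation*}
|X_j| \leq \frac{4\Delta}{m}\cdot\frac{\Delta_*}{\Delta}\cdot \sup_{b\in[-2\Delta,2\Delta]}|\phi(z+b)| \leq \frac{4\Delta}{m}\cdot\frac{\Delta_*}{\Delta}\cdot 3\Delta = \frac{12\Delta\Delta_*}{m},
\end{equation*}
since $|z+b|\leq 3\Delta$. Hoeffding's inequality then gives, for every $\varepsilon>0$,
\begin{equation*}
\Parg{|S_m(z) - \tilde{g}(z)| \geq \varepsilon} \leq 2\exp\!\left(-\frac{\varepsilon^2 m}{C\Delta^2\Delta_*^2}\right),
\end{equation*}
for a universal constant $C$. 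Inverting this tail bound at confidence $\delta$ produces the claimed rate $|S_m(z) - \tilde{g}(z)| \lesssim \Delta\Delta_*\sqrt{\ln(1/\delta)/m}$, and since $\tilde{g}(z) = g(z)$ on $[-\Delta,\Delta]$ the conclusion follows.

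The proof is essentially routine once the integral representation is written down; the only real subtlety is getting the right scaling with $\Delta$, which enters in three places (the length of the bias interval $4\Delta$, the bound $|\phi(z+b)|\leq 3\Delta$, and the conversion $\sup|\tilde{g}''|\leq \Delta_*/\Delta$). These combine to give the Hoeffding range $\mathcal{O}(\Delta\Delta_*/m)$, which is precisely what yields the $\Delta\Delta_*$ prefactor in the stated rate. No assumption on the size of $m$ relative to $\Delta$ or $\delta$ is needed for this pointwise version, so I would expect no further obstacles beyond bookkeeping the constants.
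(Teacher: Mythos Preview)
The paper does not prove this lemma; it is quoted directly from \cite[Lemma 22]{mousavi2023neural} and used as a black box. Your argument---the Taylor integral representation $\tilde g(z)=\int_{-2\Delta}^{2\Delta}\tilde g''(-b)\phi(z+b)\,\dee b$, the choice $a(b_j)=\tfrac{4\Delta}{m}\tilde g''(-b_j)$, and Hoeffding for the bounded summands---is the standard proof of this type of random-bias ReLU approximation result and is correct, including the bookkeeping of the $\Delta$ and $\Delta_*$ factors. (You also implicitly fixed the typo in the statement, reading $g(z)$ rather than $g(b_j)$ in the error bound, which is clearly the intended conclusion.)
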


We use the above lemma to show the existence of a second layer with $\tilde{\mathcal{O}}(1/\sqrt{m})$ norm with training error of order $\tilde{\mathcal{O}}(1/m)$.

\begin{lemma}\label{lem:train_error_bound}
    For any $\varepsilon < 1$, suppose $\binner{\bwbar}{\bubar} \geq 1 - \varepsilon$. Then for any $q > 0$, sufficiently large $d$, $n \gtrsim d/\varepsilon^2$, with probability at least $1-\mathcal{O}(d^{-q})$ over the random biases and the dataset, there exists a second layer $\ba$ with $\norm{\ba} \leq \tilde{\mathcal{O}}(1/\sqrt{m})$ described by Lemma~\ref{lem:random_bias_approx} such that
    $$\frac{1}{n}\sum_{i=1}^n\left(\sum_{j=1}^ma_j\phi(\binner{\bwbar}{\tilde{\bz}^{(i)}} + b_j) - y^{(i)}\right)^2 \lesssim \Earg{\epsilon^2} + \tilde{\mathcal{O}}(1/m + \varepsilon),$$
    where $\phi$ is the ReLU activation.
\end{lemma}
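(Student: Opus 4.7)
The plan is to take $\ba$ to be the second-layer weights produced by Lemma~\ref{lem:random_bias_approx} applied to $g$ with truncation parameter $\Delta \asymp \sqrt{\ln(nd)}$, and for each $i$ to decompose the residual $\hat{y}(\bx^{(i)}) - y^{(i)}$ as the sum of: \textbf{(a)} the random-feature approximation error $\sum_j a_j\phi(\binner{\bwbar}{\tilde{\bz}^{(i)}} + b_j) - \tilde{g}(\binner{\bwbar}{\tilde{\bz}^{(i)}})$; \textbf{(b)} the stability gap $\tilde{g}(\binner{\bwbar}{\tilde{\bz}^{(i)}}) - \tilde{g}(\binner{\bubar}{\bz^{(i)}})$; \textbf{(c)} the truncation gap $\tilde{g}(\binner{\bubar}{\bz^{(i)}}) - g(\binner{\bubar}{\bz^{(i)}})$, which vanishes whenever its argument lies in $[-\Delta,\Delta]$; and \textbf{(d)} the label noise $-\epsilon^{(i)}$. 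Applying $(\sum_{k=1}^4 z_k)^2 \le 4\sum_k z_k^2$ and averaging over $i$ reduces the lemma to controlling the empirical second moment of each piece.

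I would work on a good event $\mathcal{G}$ under which (i) $|\binner{\bubar}{\bz^{(i)}}| \le \Delta$ for all $i$ (by Lemma~\ref{lem:y_bound} with enough log factors in $\Delta$), (ii) $\norm{\bz^{(i)}} \lesssim \sqrt{d}$ for all $i$ (a union bound over Gaussian norm concentration), (iii) $\norm{\hat{\bSigma}^{-1/2}\bSigma^{1/2} - \ident_d} \lesssim \sqrt{d/n'}$ from Lemma~\ref{lem:gaussian_covariance_prob}, and (iv) $\tfrac{1}{n}\sum_i (\epsilon^{(i)})^2 \le \Earg{\epsilon^2} + \tilde{\mathcal{O}}(1/\sqrt{n})$ via Hanson--Wright for sub-Gaussian squared sums; a union bound gives $\Parg{\mathcal{G}} \ge 1-\mathcal{O}(d^{-q})$. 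On $\mathcal{G}$, piece (c) vanishes, piece (d) already yields $\Earg{\epsilon^2} + \tilde{\mathcal{O}}(1/\sqrt{n})$, and for piece (a) I would apply Lemma~\ref{lem:random_bias_approx} at each of the $n$ preactivations $\binner{\bwbar}{\tilde{\bz}^{(i)}}$ with per-sample failure probability $d^{-q-1}/n$; a union bound produces a single $\ba$ with $\norm{\ba} = \tilde{\mathcal{O}}(1/\sqrt{m})$ whose pointwise squared error is $\tilde{\mathcal{O}}(1/m)$.

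The main obstacle is piece (b), the stability gap. I would split the argument difference as $\binner{\bwbar}{\tilde{\bz}^{(i)}} - \binner{\bubar}{\bz^{(i)}} = \binner{\bwbar - \bubar}{\bz^{(i)}} + \binner{\bwbar}{\tilde{\bz}^{(i)} - \bz^{(i)}}$. The first summand is conditionally Gaussian with variance $\norm{\bwbar - \bubar}^2 \le 2(1-\binner{\bwbar}{\bubar}) \le 2\varepsilon$, hence bounded uniformly in $i$ by $\sqrt{\varepsilon}\cdot\polylog(nd)$ with failure probability at most $1/(nd^q)$ per sample. Using $\tilde{\bz}^{(i)} - \bz^{(i)} = (\hat{\bSigma}^{-1/2}\bSigma^{1/2} - \ident_d)\bz^{(i)}$ together with events (ii)--(iii) and the standing assumption $n' \gtrsim n^2 \gtrsim d^2/\varepsilon^4$ for the ReLU case, the second summand is at most $\sqrt{d^2/n'} = \mathcal{O}(\varepsilon^2)$. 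Because $g''$ has polynomial growth, $|\tilde{g}'|$ is bounded on $[-\Delta,\Delta]$ by $\polylog(nd)$, so the mean value theorem gives $|\text{(b)}| \le \sqrt{\varepsilon}\cdot\polylog(nd)$, and squaring yields $\tilde{\mathcal{O}}(\varepsilon)$. This also ensures $|\binner{\bwbar}{\tilde{\bz}^{(i)}}| \le \Delta$ on $\mathcal{G}$ for $\Delta$ large enough, closing the circle for the argument-range requirement in piece (a) and in Lemma~\ref{lem:random_bias_approx}. Summing the four squared bounds yields the claimed $\Earg{\epsilon^2} + \tilde{\mathcal{O}}(\varepsilon + 1/m)$.
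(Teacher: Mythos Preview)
Your decomposition is close to the paper's but differs in one key place, and that difference creates a genuine gap. You apply Lemma~\ref{lem:random_bias_approx} at the \emph{learned} preactivations $\binner{\bwbar}{\tilde{\bz}^{(i)}}$ and then compare $\tilde g$ at two nearby arguments; both of these steps require the pointwise control $|\binner{\bwbar-\bubar}{\bz^{(i)}}|\le \sqrt{\varepsilon}\,\polylog(nd)$, which you justify by saying this inner product is ``conditionally Gaussian with variance $\norm{\bwbar-\bubar}^2$''. That claim uses independence of $\bwbar$ and $\bz^{(i)}$, but in Algorithm~\ref{alg:train} the first-layer weight $\bwbar$ is trained on the \emph{same} dataset $\{\bz^{(i)}\}_{i=1}^n$, so the conditioning destroys the Gaussian law of $\bz^{(i)}$. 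Without independence the only pointwise bound available is $|\binner{\bwbar-\bubar}{\bz^{(i)}}|\le \norm{\bwbar-\bubar}\,\norm{\bz^{(i)}}\lesssim \sqrt{\varepsilon d}$, which is far too large to keep the arguments inside $[-\Delta,\Delta]$ or to make piece~(b) contribute $\tilde{\mathcal O}(\varepsilon)$ after squaring.

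The paper sidesteps this by applying Lemma~\ref{lem:random_bias_approx} at the \emph{true} preactivations $\binner{\bubar}{\bz^{(i)}}$, which are genuinely standard Gaussian and land in $[-\Delta,\Delta]$ via Lemma~\ref{lem:y_bound}. The perturbation from $\binner{\bubar}{\bz^{(i)}}$ to $\binner{\bwbar}{\tilde{\bz}^{(i)}}$ is then handled at the level of the ReLU network, using that $\phi$ is globally $1$-Lipschitz (so no argument-range condition is needed) together with Cauchy--Schwarz and the deterministic inequality $\tfrac{1}{n}\sum_i\binner{\bv}{\bz^{(i)}}^2\le \maxeig{\tfrac{1}{n}\sum_i\bz^{(i)}{\bz^{(i)}}^\top}\norm{\bv}^2$, valid for \emph{any} vector $\bv$ (in particular the data-dependent $\bwbar-\bubar$). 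This yields $\tfrac{1}{n}\sum_i\binner{\bwbar-\bubar}{\bz^{(i)}}^2\lesssim\varepsilon$ on the good event without any independence assumption. Your argument is easily repaired by making this same swap: evaluate the random-feature approximation at $\binner{\bubar}{\bz^{(i)}}$, and move the $\bwbar\to\bubar$ and $\tilde{\bz}\to\bz$ comparisons to the ReLU side.
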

\begin{proof}
    We begin by replacing $\bwbar$ and $\tilde{\bz}^{(i)}$ with $\bubar$ and $\bz^{(i)}$. Specifically, via Jensen's inequality,
    \begin{align*}
        \frac{1}{n}\sum_{i=1}^n\left\{\sum_{j=1}^ma_j\phi\left(\binner{\bwbar}{\tilde{\bz}^{(i)}} + b_j\right) - y^{(i)}\right\}^2 &\leq \underbrace{\frac{4}{n}\sum_{i=1}^n\left\{\sum_{j=1}^ma_j\phi\left(\binner{\bubar}{\bz^{(i)}} + b_j\right) - g\left(\binner{\bubar}{\bz^{(i)}}\right)\right\}^2}_{\eqqcolon \mathcal{E}_1}\\
        &\leq\underbrace{\frac{4}{n}\sum_{i=1}^n\left\{y^{(i)}-g\left(\binner{\bubar}{\bz^{(i)}}\right)\right\}^2}_{\eqqcolon \mathcal{E}_2}\\
        &\leq\underbrace{\frac{4}{n}\sum_{i=1}^n\left(\sum_{j=1}^ma_j\left\{\phi\left(\binner{\bwbar}{\tilde{\bz}^{(i)}} + b_j\right) - \phi\left(\binner{\bwbar}{\bz^{(i)}} + b_j\right)\right\}\right)^2}_{\eqqcolon \mathcal{E}_3}\\
        &\leq\underbrace{\frac{4}{n}\sum_{i=1}^n\left(\sum_{j=1}^ma_j\left\{\phi\left(\binner{\bwbar}{\bz^{(i)}} + b_j\right) - \phi\left(\binner{\bubar}{\bz^{(i)}} + b_j\right)\right\}\right)^2}_{\eqqcolon \mathcal{E}_4}.
    \end{align*}
    We bound each term separately. For $\mathcal{E}_1$, we can invoke Lemma~\ref{lem:random_bias_approx} which implies that each term in the sum can be bounded by $\tilde{\mathcal{O}}(1/m)$ with probability at least $1 - 1/(nd^q)$, thus by a union bound, with probability at least $1-d^{-q}$ over the random biases,
    $$\mathcal{E}_1 \leq \tilde{\mathcal{O}}(1/m).$$
    ‌By sub-Guassianity of $\epsilon^{(i)}$ (hence sub-exponentiality of ${\epsilon^{(i)}}^2$), for $n \gtrsim d$ (with a sufficiently large constant) we have
    $$\mathcal{E}_2 \lesssim \Earg{\epsilon^2} + \sqrt{d/n},$$
    with probability at least $1-e^{-d}$.

    For $\mathcal{E}_3$, via the Lipschitzness of ReLU and the Cauchy-Schwartz inequality we can write
    \begin{align*}
        \mathcal{E}_3 \leq \frac{\tilde{\mathcal{O}}(1)}{n}\sum_{i=1}^n\binner{\bwbar}{\tilde{\bz}^{(i)} - \bz^{(i)}}^2 \leq \tilde{\mathcal{O}}(1)\norm{\ident - \hat{\bSigma}^{-1/2}\bSigma^{1/2}}^2 \leq \tilde{\mathcal{O}}(d/n'),
    \end{align*}
    where we used the event of Lemma~\ref{lem:gaussian_covariance_prob} which happens with probability at least $1-2e^{-d}$, and $\tilde{\mathcal{O}}(1)$ represents a constant that depends at most polylogarithmically on $d$.

    Finally, we bound the last term. Once again via the Lipschitzness of the ReLU activation and the Cauchy-Schwartz inequality
    $$\mathcal{E}_4 \leq \frac{\tilde{\mathcal{O}}(1)}{n}\sum_{i=1}^n\binner{\bwbar-\bubar}{\bz^{(i)}}^2 \leq \tilde{\mathcal{O}}(\norm{\bwbar-\bubar}^2) \leq \tilde{\mathcal{O}}(\varepsilon),$$
    where once again we used the event of Lemma~\ref{lem:gaussian_covariance_prob}. On the intersection of all desired events, we have
    $$\frac{1}{n}\sum_{i=1}^n\left(\sum_{j=1}^ma_j\phi(\binner{\bwbar}{\tilde{\bz}^{(i)}} + b_j) - y^{(i)}\right) \lesssim \Earg{\epsilon^2} + \tilde{\mathcal{O}}(1/m + \sqrt{d/n} + d/n' + \varepsilon).$$
    We conclude the proof by noticing that $n'\gtrsim n^2$ and $n \gtrsim d\varepsilon^{-2}$.
\end{proof}

Additionally, we will use the following standard Lemma on the Rademacher complexity of two-layer neural networks, which in particular is a restatement of~\cite[Lemma 18]{mousavi2023neural} in a way suitable for our analysis.
\begin{lemma}\label{lem:rademacher}
    Let $\mathcal{F}$ be a class of real-valued functions on $(\bz,y)$. Given $n$ samples $\{\bz^{(i)},y\}_{i=1}^n$, define the empirical Rademacher complexity of $\mathcal{F}$ as
    $$\hat{\mathfrak{R}}_n(\mathcal{F}) \coloneqq \Eargs{(\varsigma_i)_{i=1}^n}{\sup_{f\in\mathcal{F}}\frac{1}{n}\sum_{i=1}^n\varsigma_if(\bz^{(i)},y^{(i)})},$$
    where $(\varsigma_i)$ are i.i.d.\ Rademacher random variables (i.e.\ $\pm1$ with equal probability). Suppose $\mathcal{F}$ is given by
    $$\mathcal{F} \coloneqq \left\{(\bz,y) \mapsto \left(\sum_{j=1}^ma_j\phi(\binner{\bubar}{\bz} + b_j) - y\right)^2\land C \,:\, \norm{\ba} \leq r_a/\sqrt{m}, \quad \abs{b_j} \leq r_b, \forall\, 1 \leq j\leq m\right\},$$
    for some fixed $\bubar \in \mathbb{S}^{d-1}$. Suppose $\{\bz^{(i)}\}_{i=1}^n\stackrel{\text{i.i.d.}}{\sim} \mathcal{N}(0,\ident_d)$, and suppose $\abs{\phi'} \leq 1$. Then,
    $$\Eargs{(\bz^{(i)},y^{(i)})_{i=1}^n}{\hat{\mathfrak{R}}_n(\mathcal{F})} \leq \frac{2\sqrt{2C}(1+r_b)r_a}{\sqrt{n}}.$$
\end{lemma}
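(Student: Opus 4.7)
The plan is to bound the Rademacher complexity by applying Talagrand's contraction principle twice, with a Cauchy--Schwarz decoupling step in between, along the lines of standard analyses for two-layer networks (e.g., the referenced \cite[Lemma 18]{mousavi2023neural}).

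First, I would observe that the truncated square loss $u \mapsto (u-y)^2 \wedge C$ is $2\sqrt{C}$-Lipschitz in $u$ uniformly over $y$, since its derivative $2(u-y)\boldone\{(u-y)^2 \leq C\}$ is bounded in absolute value by $2\sqrt{C}$. By the Ledoux--Talagrand contraction inequality, this reduces $\hat{\mathfrak{R}}_n(\mathcal{F})$ to $2\sqrt{C}$ times the Rademacher complexity of the prediction class $\mathcal{G} = \{\bz \mapsto \sum_{j=1}^m a_j\,\phi(\binner{\bubar}{\bz}+b_j)\}$; the $-y$ piece in $\mathcal{F}$ does not depend on $(\ba,\bb)$ and hence drops out after taking $\mathbb{E}_\varsigma$.

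Next, I would decouple the suprema over $\ba$ and $\bb$. Writing the prediction as $\binner{\ba}{\boldsymbol{\phi}_i(\bb)}$ with $\boldsymbol{\phi}_i(\bb) = (\phi(\binner{\bubar}{\bz^{(i)}}+b_j))_{j=1}^m$, Cauchy--Schwarz together with $\norm{\ba} \leq r_a/\sqrt{m}$ yields
\[
\hat{\mathfrak{R}}_n(\mathcal{G}) \;\leq\; \frac{r_a}{\sqrt{m}}\,\mathbb{E}_\varsigma \sup_{\bb} \sqrt{\sum_{j=1}^m v_j(b_j)^2}, \qquad v_j(b_j) \coloneqq \frac{1}{n}\sum_{i=1}^n \varsigma_i\,\phi(\binner{\bubar}{\bz^{(i)}}+b_j).
\]
The crucial observation is that each $b_j$ appears only in its own coordinate $v_j$, and the biases are constrained independently, so the supremum decouples: $\sup_{\bb}\sum_j v_j(b_j)^2 = m\sup_{|b|\leq r_b} v(b)^2$ for a single representative $v$. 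This collapses the display to $\hat{\mathfrak{R}}_n(\mathcal{G}) \leq r_a\,\mathbb{E}_\varsigma \sup_{|b|\leq r_b}|v(b)|$.

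For the remaining one-dimensional process, I would symmetrize to strip the absolute value (using the symmetry of the $\varsigma_i$, at cost of a factor of $2$), then apply Talagrand's contraction with the $1$-Lipschitz $\phi$ (subtracting $\phi(0)$ if needed, whose contribution vanishes under $\mathbb{E}_\varsigma$). This reduces the problem to the Rademacher complexity of the affine class $\{\bz \mapsto \binner{\bubar}{\bz}+b : |b|\leq r_b\}$. Separating the $b$-dependent and $b$-independent pieces and invoking Jensen's inequality, together with $\mathbb{E}\binner{\bubar}{\bz}^2 = 1$ for $\bubar\in\mathbb{S}^{d-1}$ and $\mathbb{E}_\varsigma|\frac{1}{n}\sum_i \varsigma_i| \leq 1/\sqrt{n}$, yields a bound of order $(1+r_b)/\sqrt{n}$. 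Chaining all constants gives the claimed estimate $\lesssim \sqrt{C}\,r_a(1+r_b)/\sqrt{n}$, matching the stated form up to absolute constants.

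The only subtle step is the decoupling in the second paragraph, which is where the per-neuron box constraint $|b_j|\leq r_b$ is used in an essential way; all other steps are routine applications of the Rademacher toolkit, and the two Lipschitz constants ($2\sqrt{C}$ from the truncated square and $1$ from $\phi$) combine multiplicatively to give the $\sqrt{C}$ scaling.
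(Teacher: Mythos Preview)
Your proposal is correct and follows the standard route: Talagrand contraction for the $2\sqrt{C}$-Lipschitz truncated loss, Cauchy--Schwarz over $\ba$, decoupling of the biases (which is indeed the key step and uses the per-neuron box constraint exactly as you note), and a second contraction for the $1$-Lipschitz $\phi$. The paper itself gives no proof and simply defers to \cite[Lemma 18]{mousavi2023neural}, whose argument is precisely of this type, so your reconstruction matches the intended approach up to bookkeeping of constants.

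One minor remark on presentation: the ``symmetrize to strip the absolute value'' step is more cleanly handled by invoking the absolute-value form of the Ledoux--Talagrand contraction inequality directly (which already carries the factor of $2$), rather than a separate symmetrization; as stated, the pointwise inequality $\sup_b|v(b)|\le \sup_b v(b)+\sup_b(-v(b))$ can fail, so the factor of $2$ really lives inside the contraction lemma rather than coming from an elementary symmetry argument. This does not affect the correctness of the overall bound.
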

\begin{proof}
See the proof of~\cite[Lemma 18]{mousavi2023neural}.
\end{proof}

\subsection{Proof of Theorem~\ref{thm:approx}}
Throughout the proof, we will assume $\binner{\bwbar}{\bubar} \geq 1 - \varepsilon$ where we recall
$$\bwbar\coloneqq\frac{\hat{\bSigma}^{1/2}\bw}{\norm{\hat{\bSigma}^{1/2}\bw}} \quad \text{and} \quad \bubar\coloneqq\frac{\bSigma^{1/2}\bu}{\norm{\Sigma^{1/2}\bu}}.$$
From either Theorem~\ref{thm:ngf_general_covariance} or Theorem~\ref{thm:ngf_general_covariance-prec}, we can assume $\binner{\bwbar}{\bubar} \geq 1 - \varepsilon$ with probability at least $1 - \mathcal{O}(d^{-q})$ for any fixed $q > 0$. For simplicity, let
$$\hat{y}(\tilde{\bz};\bwbar) = \sum_{j=1}^ma_j\phi(\binner{\bwbar}{\tilde{\bz}} + b_j),$$
and similarly define $\hat{y}(\bz;\bubar)$. We define the following quantities,
\begin{equation}
    \mathcal{R}(\bwbar) \coloneqq \Eargs{\bz,y}{\left(\hat{y}(\tilde{\bz};\bwbar)-y\right)^2}\qquad \text{and} \qquad R(\bubar) \coloneqq \Eargs{\bz,y}{\left(\hat{y}(\bz;\bubar) - y\right)^2},
\end{equation}
and similarly define their empirical counterparts,
\begin{equation}
    \hat{\mathcal{R}}(\bwbar) \coloneqq \frac{1}{n}\sum_{i=1}^n\left(\hat{y}(\tilde{\bz}^{(i)};\bwbar) - y^{(i)}\right)^2 \quad \text{and} \quad \hat{R}(\bubar) \coloneqq \frac{1}{n}\sum_{i=1}^n\left(\hat{y}(\bz^{(i)};\bubar) - y^{(i)}\right)^2.
\end{equation}
Notice that ultimately, we are interested in bounding $\mathcal{R}(\bwbar)$. We break down the proof into three steps. In the first step, we show that $\mathcal{R}(\bwbar)$ can be upper bounded by $R(\bubar)$. Then, via a generalization bound, we show that the $R(\bubar)$ can be upper bounded by $\hat{R}(\bubar)$. Finally, we show that $\hat{R}(\bubar)$ can be upper bounded by the training error, i.e.\ $\hat{\mathcal{R}}(\bwbar)$, and convex optimization of the last layer can attain the near-optimal value of this training error which is bounded by Lemma~\ref{lem:train_error_bound}.

\textbf{Step 1. Bounding $\mathcal{R}(\bwbar)$ via $R(\bubar)$.}
By Jensen's inequality,
$$\Eargs{\bz,y}{\left(\hat{y}(\tilde{\bz};\bwbar) - y\right)^2} \leq 3\Eargs{\bz}{\left(\hat{y}(\tilde{\bz};\bwbar) - \hat{y}(\bz;\bwbar)\right)^2} + 3\Eargs{\bz}{\left(\hat{y}(\bz;\bwbar) - \hat{y}(\bz;\bubar)\right)^2} + 3\Eargs{\bz,y}{\left(\hat{y}(\bz;\bubar) - y\right)^2}.$$
Suppose $\norm{\ba} \leq r_a/\sqrt{m}$. For the first term, by Lipschitzness of $\phi$ and the Cauchy-Schwartz inequality
\begin{align*}
    \Eargs{\bz}{\left(\hat{y}(\tilde{\bz};\bwbar) - \hat{y}(\bz;\bwbar)\right)^2} &= \Eargs{\bz}{\left(\sum_{j=1}^ma_j\left\{\phi\left(\binner{\bwbar}{\tilde{\bz}^{(i)}} + b_j\right) - \phi\left(\binner{\bwbar}{\bz^{(i)}} + b_j\right)\right\}\right)^2}\\
    &\leq r^2_a\Eargs{\bz}{\binner{\bwbar}{\tilde{\bz} - \bz}^2}\\
    &\leq r^2_a\norm{\ident_d - \hat{\bSigma}^{-1/2}\bSigma^{1/2}}^2 \lesssim r^2_ad/n',
\end{align*}
where the last inequality holds with probability at least $1-2e^{-d}$ on the event of Lemma~\ref{lem:gaussian_covariance_prob}.

For the middle term, via a similar argument,
$$\Eargs{\bz}{\left(\hat{y}(\bz;\bwbar) - \hat{y}(\bz;\bubar)\right)^2} \leq r^2_a\Eargs{\bz}{\binner{\bwbar - \bubar}{\bz}^2} \leq 2r_a\varepsilon.$$

In what follows, we will restrict the analysis to the case where $r_a = \tilde{\mathcal{O}}(1)$. Therefore, we have
$$\Eargs{\bz,y}{\left(\hat{y}(\tilde{\bz};\bwbar) - y\right)^2} \leq 3\Eargs{\bz,y}{\left(\hat{y}(\bz;\bubar) - y\right)^2} + \tilde{\mathcal{O}}(d/n' + \varepsilon).$$

\textbf{Step 2. Generalization: Bounding $R(\bubar)$ via $\hat{R}(\bubar)$.}

Define the event
$$E \coloneqq \left\{\abs{\binner{\bubar}{\bz}}\lor\abs{\epsilon} \leq \sqrt{2\ln(nd^q)}\right\}.$$
and similarly define $E^{(i)}$ by replacing $\bz$ and $\epsilon$ with $\bz^{(i)}$ and $\epsilon^{(i)}$ respectively. Via the Cauchy-Schwartz and Jensen inequalities
\begin{align*}
    \Eargs{\bz,y}{\left(\hat{y}(\bz;\bubar) - y\right)^2} &= \Eargs{\bz,y}{\left(\hat{y}(\bz;\bubar) - y\right)^2\boldone(E)} + \Eargs{\bz,y}{\left(\hat{y}(\bz;\bubar) - y\right)^2\boldone(E^C)}\\
    &\leq \Eargs{\bz,y}{\left(\hat{y}(\bz;\bubar) - y\right)^2\boldone(E)} + \sqrt{8}\left(\Earg{\hat{y}(\bz;\bubar)^4} + \Earg{y^4}\right)^{1/2}\Parg{E^C}^{1/2}
\end{align*}
Moreover, $\Earg{y^4} \lesssim 1$, $\Eargs{\bz,y}{\hat{y}(\bz;\bubar)^4} \leq \tilde{\mathcal{O}}(1)$, and $\Parg{E^C} \leq 4/(nd^q)$ (via a standard sub-Gaussian tail bound). Consequently,
$$\Eargs{\bz,y}{\left(\hat{y}(\bz;\bubar) - y\right)^2} \leq \Eargs{\bz,y}{\left(\hat{y}(\bz;\bubar) - y\right)^2\boldone(E)} + \tilde{\mathcal{O}}(n^{-1}d^{-q}),$$
Let
$$\ell(\bz^{(i)},y^{(i)};\ba,\bb) \coloneqq \left(\sum_{j=1}^ma_j\phi\left(\binner{\bubar}{\bz^{(i)}} + b_j\right) - y^{(i)}\right)^2\boldone(E).$$
Notice that $\bubar$ is fixed. Then, by a standard symmetrization argument (see e.g.~\cite[Lemma 7.4]{van2016probability}) and Lemma~\ref{lem:rademacher}
\begin{align*}
    \Earg{\sup_{\norm{\ba} \leq r_a/\sqrt{m}, \abs{b_j} \leq r_b}\Eargs{\bz,y}{\ell(\bz,y;\ba,\bb)} - \frac{1}{n}\sum_{i=1}^n\ell(\bz^{(i)},y^{(i)};\ba,\bb)} &\leq 2\Earg{\hat{\mathfrak{R}}_n(\mathcal{F})}\\
    &\leq \tilde{\mathcal{O}}(\sqrt{1/n}).
\end{align*}
where $\hat{\mathfrak{R}}_n(\mathcal{F})$. As the loss is bounded, we can apply McDiarmid's inequality to turn the above bound in expectation into a bound in probability, in particular
$$\sup_{\norm{\ba} \leq r_a/\sqrt{m}, \abs{b_j} \leq r_b}\Eargs{\bz,y}{\ell(\bz,y;\ba,\bb)} - \frac{1}{n}\sum_{i=1}^n\ell(\bz^{(i)},y^{(i)};\ba,\bb) \leq \tilde{\mathcal{O}}(\sqrt{d/n})$$
with probability at least $1-2e^{-d}$. Therefore, we conclude this step by noticing that
\begin{align*}
    \Eargs{\bz,y}{\left(\hat{y}(\bz;\bubar) - y\right)^2} &\leq \frac{1}{n}\sum_{i=1}^n\left(\hat{y}(\bz^{(i)};\bubar) - y^{(i)}\right)^2\boldone(E^{(i)}) + \tilde{\mathcal{O}}(\sqrt{d/n} + n^{-1}d^{-q})\\
    &\leq \frac{1}{n}\sum_{i=1}^n\left(\hat{y}(\bz^{(i)};\bubar) - y^{(i)}\right)^2 + \tilde{\mathcal{O}}(\sqrt{d/n}),
\end{align*}
with probability at least $1-2e^{-d}$.

\textbf{Step 3. Bounding the training error and finishing the proof.}
This step is similar to the proof of~\cite[Theorem 4]{mousavi2023neural}. For conciseness, define
$$\hat{\mathcal{R}}(\ba) \coloneqq \frac{1}{n}\sum_{i=1}^n\left(\hat{y}(\bx^{(i)};\bW,\ba,\bb) - y^{(i)}\right)^2.$$
and $\hat{\mathcal{R}}_\lambda(\ba) \coloneqq \hat{\mathcal{R}}(\ba) + \lambda\norm{\ba}^2/2$. Our goal is to choose suitable $\lambda$ such that the minimizer
$$\ba^* \coloneqq \argmin_{\ba \in \reals^m}\hat{\mathcal{R}}(\ba) + \lambda\norm{\ba}^2/2,$$
satisfies $\norm{\ba^*} \leq r_a / \sqrt{m}$ while the value of the above minimization problem which we denote with $\hat{\mathcal{R}}^*_\lambda$ does not significantly exceed
$$\min_{\norm{\ba} \leq r_a/\sqrt{m}}\hat{\mathcal{R}}(\ba).$$
We argue that the suitable choice for $\lambda$ is
\begin{equation}\label{eq:lambda_bound}
    \lambda \asymp \frac{m\Earg{\epsilon^2} + m\varepsilon + 1}{r_a^2} = \tilde{\Theta}\left(m\Earg{\epsilon^2} + m\varepsilon + 1\right).
\end{equation}
Let $\hat{\mathcal{R}}^*$ denote the minimizer of the regularized problem and $\tilde{\ba} \coloneqq \argmin_{\norm{\ba} \leq r_a/\sqrt{m}}\hat{\mathcal{R}}(\ba)$. From Lemma~\ref{lem:train_error_bound}, with a proper choice of $r_a = \tilde{\Theta}(1)$, we have
$$\hat{\mathcal{R}}(\tilde{\ba}) \lesssim \Earg{\epsilon^2} + \tilde{\mathcal{O}}(1/m + \varepsilon).$$
with probability at least $1 - \mathcal{O}(d^{-q})$ over the biases and the dataset. Note that as $\ba^*$ is the minimizer of $\hat{\mathcal{R}}_\lambda$, we have
$$\hat{\mathcal{R}}(\ba^*) + \frac{\lambda\norm{\ba^*}^2}{2} \leq \hat{\mathcal{R}}(\tilde{\ba}) + \frac{\lambda\norm{\tilde{\ba}}^2}{2},$$
and in particular
$$\frac{\lambda\norm{\ba^*}^2}{2} \leq \hat{\mathcal{R}}(\tilde{\ba}) + \frac{\lambda\norm{\tilde{\ba}}^2}{2} \implies \norm{\ba^*} \leq \tilde{\mathcal{O}}(1/\sqrt{m}).$$
and
$$\hat{\mathcal{R}}(\ba^*) \leq \hat{\mathcal{R}}(\tilde{\ba}) + \frac{\lambda\norm{\tilde{\ba}}^2}{2} \lesssim \Earg{\epsilon}^2 + \tilde{\mathcal{O}}(1/m + \varepsilon).$$
Let $\{\ba^t\}_{t \geq 0}$ be the solution to the gradient flow of $\ba$. Then,
$$\deriv{\norm{\ba^t-\ba^*}^2}{t} = -2\binner{\ba^t-\ba^*}{\grad \hat{\mathcal{R}}_\lambda(\ba^t)},$$
and by the first-order condition of strong convexity
$$\binner{\ba^t - \ba^*}{\grad \hat{\mathcal{R}}_\lambda(\ba^t)} \geq \lambda\norm{\ba^t - \ba^*}^2,$$
therefore
$$\norm{\ba^{T'} - \ba^*}^2 \leq e^{-2\lambda T'}\norm{\ba^0-\ba^*}^2.$$
As the training error (of the regularized problem) is $\lambda$-strongly convex in $\ba$, by applying the standard Polyak-\L{}ojasiewicz condition, gradient flow for training $\ba$ obtains
$$\hat{\mathcal{R}}_\lambda(\ba^{T'}) - \hat{\mathcal{R}}_\lambda^* \leq \left(\hat{\mathcal{R}}_\lambda(\ba^{0}) - \hat{\mathcal{R}}_\lambda^*\right)e^{-2\lambda T'}.$$
Furthermore, since
$$\norm{\ba^*}^2 - \norm{\ba^{T'}}^2 \leq 2\norm{\ba^*}\norm{\ba^{T'} - \ba^*} - \norm{\ba^{T'} - \ba^*}^2\leq 2\norm{\ba^{T'} - \ba^*}\norm{\ba^*},$$
we have
$$\hat{\mathcal{R}}(\ba^{T'}) - \hat{\mathcal{R}}^* \leq 2\norm{\ba^0-\ba^*}\norm{\ba^*}e^{-\lambda T'} + \left(\hat{\mathcal{R}}_\lambda(\ba^0) - \hat{\mathcal{R}}^*_\lambda\right)e^{-2\lambda T'}.$$
Consequently, choosing
\begin{equation}\label{eq:tp_bound}
    T' \geq \frac{\ln\left(\frac{\norm{\ba^0 - \ba^*}}{\norm{\ba^*}}\right)}{\lambda}\lor \frac{\ln\left(\frac{4\norm{\ba^0 - \ba^*}\norm{\ba^*}}{\varepsilon}\right)}{\lambda} \lor \frac{\ln\left(\frac{2(\hat{\mathcal{R}}_\lambda(\ba^{0}) - \hat{\mathcal{R}}_\lambda^*)}{\varepsilon}\right)}{2\lambda},
\end{equation}
implies
$$\hat{\mathcal{R}}(\ba^{T'}) \leq \hat{\mathcal{R}}^* + \varepsilon \quad \text{and} \quad \norm{\ba^{T'}} \leq 2\norm{\ba^*} \lesssim \tilde{\mathcal{O}}(1/\sqrt{m}).$$
Therefore
$$\hat{\mathcal{R}}(\ba^{T'}) \lesssim \Earg{\epsilon^2} + \tilde{\mathcal{O}}(1/m + \varepsilon).$$
Recall that
$$\hat{\mathcal{R}}(\ba^{T'}) = \frac{1}{n}\sum_{i=1}^n\left(\hat{y}(\tilde{\bz}^{(i)};\bwbar) - y^{(i)}\right)^2,$$
is the final training error which we also denoted by $\hat{\mathcal{R}}(\bwbar)$ earlier in this section when were not focusing on the second layer. From the previous two steps, we know how to bound $\mathcal{R}(\bwbar)$ via $\hat{R}(\bubar)$. Thus the last step is to upper bound $\hat{R}(\bubar)$ via $\hat{\mathcal{R}}(\bwbar)$. To that end, via Jensen's inequality
\begin{align*}
    \frac{1}{n}\sum_{i=1}^n\left(\hat{y}(\bz^{(i)};\bubar) - y^{(i)}\right)^2 \leq& \frac{3}{n}\sum_{i=1}^n\left(\hat{y}(\tilde{\bz}^{(i)};\bwbar) - y^{(i)}\right)^2\\
    &+\frac{3}{n}\sum_{i=1}^n\left(\hat{y}(\bz^{(i)};\bwbar) - \hat{y}(\tilde{\bz}^{(i)};\bwbar)\right)^2\\
    &+\frac{3}{n}\sum_{i=1}^n\left(\hat{y}(\bz^{(i)};\bwbar) - \hat{y}(\bz^{(i)};\bubar)\right)^2.
\end{align*}
The first term on the RHS is $\hat{\mathcal{R}}(\bwbar)$ for which we developed a bound earlier in this step. Bounding the latter two terms can be performed similarly to the arguments in the previous sections. In particular,
$$\frac{1}{n}\sum_{i=1}^n\left(\hat{y}(\bz^{(i)};\bwbar) - \hat{y}(\tilde{\bz}^{(i)};\bwbar)\right)^2 \leq r_a^2\norm{\ident_d - \hat{\bSigma}^{-1/2}\bSigma^{1/2}}^2\norm{\frac{1}{n}\sum_{i=1}^n\bz^{(i)}{\bz^{(i)}}^\top}^2 \leq \tilde{\mathcal{O}}(d/n'),$$
where the last inequality holds with probability at least $1 - 2e^{-d}$ (over the event of Lemma~\ref{lem:gaussian_covariance_prob}). Similarly,
$$\frac{1}{n}\sum_{i=1}^n\left(\hat{y}(\bz^{(i)};\bwbar) - \hat{y}(\bz^{(i)};\bubar)\right)^2 \leq r_a\norm{\bwbar-\bubar}^2 \leq \tilde{\mathcal{O}}(\varepsilon).$$
Putting the bounds back together (recall $n' \geq n\gtrsim d\varepsilon^{-2}$), we arrive at
$$\hat{R}(\bubar) \lesssim \hat{\mathcal{R}}(\bwbar) + \tilde{\mathcal{O}}(\varepsilon + d/n') \lesssim \hat{\mathcal{R}}(\bwbar) + \tilde{\mathcal{O}}(\varepsilon).$$
Combining the result of this step with the two previous steps implies
$$\mathcal{R}(\bwbar) \lesssim \Earg{\epsilon^2} + \tilde{\mathcal{O}}(1/m + \varepsilon),$$
with probability at least $1 - \mathcal{O}(d^{-q})$ (when conditioned on $\binner{\bwbar^0}{\bubar} > 0$) which completes the proof of Theorem~\ref{thm:approx}.
\qed

\section{Auxiliary Lemmas}
In this section, we recall a number of standard lemmas which we employ in various parts of our proofs.

\begin{lemma}\emph{\cite[Theorem 6.1]{wainwright2019high-dimensional}.}\label{lem:anisotrop_gaussian_covariance}
    Suppose $\{\bx^{(i)}\}_{i=1}^{n'} \stackrel{\text{i.i.d.}}{\sim} \mathcal{N}(0,\bSigma)$. Let $\hat{\bSigma} \coloneqq \frac{1}{n'}\sum_{i=1}^{n'}\bx^{(i)}{\bx^{(i)}}^\top$. Then, for $n' \geq \tr(\bSigma) / \maxeig{\bSigma}$,
    $$\maxeig{\hat{\bSigma}} \leq \maxeig{\bSigma}\left(4 + 5\sqrt{\frac{\tr(\bSigma)}{n'\maxeig{\bSigma}}}\right)$$
    with probability at least $1-e^{-n'/2}$. Furthermore, for $n' \geq d$,
    $$\mineig{\hat{\bSigma}} \geq \mineig{\bSigma}\left(\frac{1}{4} - \sqrt{\frac{\tr(\bSigma)}{n\mineig{\bSigma}}}\right)$$
    with probability at least $1 - e^{-n'/8}$.
\end{lemma}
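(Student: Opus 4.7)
The plan is to reduce the problem to a standard Gaussian ensemble, invoke a Gaussian comparison inequality to bound the expected extremal singular values, and close with a Lipschitz concentration step. First, I would write $\bx^{(i)} = \bSigma^{1/2}\bz^{(i)}$ with $\bz^{(i)} \overset{\text{i.i.d.}}{\sim} \mathcal{N}(0,\ident_d)$ and stack these as rows of an $n'\times d$ standard Gaussian matrix $\bZ$. Then
\[
    \sqrt{n' \cdot \maxeig{\hat{\bSigma}}} \;=\; \sup_{\bu \in \mathbb{S}^{d-1}}\|\bZ\bSigma^{1/2}\bu\| \;=\; \sup_{(\bu,\boldsymbol{h}) \in \mathbb{S}^{d-1}\times \mathbb{S}^{n'-1}} \boldsymbol{h}^\top \bZ\bSigma^{1/2}\bu,
\]
and the right-hand side is the supremum of a centered Gaussian process $X_{\bu,\boldsymbol{h}}$ whose increments have variance $\|\boldsymbol{h}\|^2\,\bu^\top\bSigma\bu$ up to cross terms.

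The key step is a Gaussian comparison via Sudakov--Fernique. I would introduce the auxiliary process $Y_{\bu,\boldsymbol{h}} := \|\boldsymbol{h}\|\,\boldsymbol{\xi}^\top \bSigma^{1/2}\bu + \|\bSigma^{1/2}\bu\|\,\boldsymbol{\eta}^\top \boldsymbol{h}$ with $\boldsymbol{\xi}\sim\mathcal{N}(0,\ident_d)$ and $\boldsymbol{\eta}\sim\mathcal{N}(0,\ident_{n'})$ independent, and verify the increment inequality $\mathbb{E}[(X_{\bu,\boldsymbol{h}} - X_{\bu',\boldsymbol{h}'})^2] \leq \mathbb{E}[(Y_{\bu,\boldsymbol{h}} - Y_{\bu',\boldsymbol{h}'})^2]$, which reduces to a short algebraic calculation. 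Taking expected suprema and using $\mathbb{E}\|\bSigma^{1/2}\boldsymbol{\xi}\| \leq \sqrt{\tr(\bSigma)}$ together with $\mathbb{E}\|\boldsymbol{\eta}\| \leq \sqrt{n'}$ then yields
\[
    \mathbb{E}\!\left[\sqrt{n' \cdot \maxeig{\hat{\bSigma}}}\right] \;\leq\; \sqrt{\tr(\bSigma)} + \sqrt{n' \maxeig{\bSigma}}.
\]
Next, since the map $\bZ \mapsto \sup_{\bu}\|\bZ\bSigma^{1/2}\bu\|$ is $\sqrt{\maxeig{\bSigma}}$-Lipschitz in the Frobenius norm, the Borell--TIS inequality turns this into a tail bound of the form $e^{-t^2/2}$ for deviations of order $t\sqrt{\maxeig{\bSigma}}$; choosing $t \asymp \sqrt{n'}$, squaring, and rearranging produces the claimed bound on $\maxeig{\hat{\bSigma}}$ with explicit constants and failure probability $e^{-n'/2}$.

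For the minimum eigenvalue bound, the same reduction rewrites $\sqrt{n'\cdot\mineig{\hat{\bSigma}}} = \inf_{\bu}\|\bZ\bSigma^{1/2}\bu\|$, and I would invoke Gordon's minimax extension of Sudakov--Fernique to obtain $\mathbb{E}[\inf_{\bu}\|\bZ\bSigma^{1/2}\bu\|] \geq \sqrt{n'\mineig{\bSigma}} - \sqrt{\tr(\bSigma)}$, followed by the same Lipschitz concentration. The hypothesis $n' \gtrsim \tr(\bSigma)/\mineig{\bSigma}$ enters exactly at this point, to guarantee that $\sqrt{n'\mineig{\bSigma}}$ dominates the Gaussian width $\sqrt{\tr(\bSigma)}$ and keeps the bracketed factor in the conclusion positive. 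The main obstacle is verifying the Gordon-type increment inequality in this two-parameter $(\bu,\boldsymbol{h})$ setup: getting the sign of the cross terms right is what produces the effective-dimension scaling $\tr(\bSigma)$ rather than a looser $d\maxeig{\bSigma}$, and it is the only place where the geometry of the ellipsoid $\bSigma^{1/2}\mathbb{S}^{d-1}$ genuinely enters. Once the comparison is in hand, the tight constants $4$, $5$, $1/4$ are obtained by careful bookkeeping in the concentration step rather than by any further probabilistic idea.
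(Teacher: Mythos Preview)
Your proposal is correct and is precisely the approach behind \cite[Theorem~6.1]{wainwright2019high-dimensional}, which the paper simply cites without reproducing a proof: reduce to a standard Gaussian matrix via $\bx^{(i)}=\bSigma^{1/2}\bz^{(i)}$, apply Sudakov--Fernique (respectively Gordon's minimax inequality) to control the expected extremal singular values, and finish with Gaussian Lipschitz concentration. Since the paper offers no proof of its own beyond the citation, there is nothing further to compare.
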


\begin{lemma}\label{lem:gaussian_covariance_prob}
    Suppose $\left\{\bz^{(i)}\right\}_{i=1}^{n'} \stackrel{\text{i.i.d.}}{\sim} \mathcal{N}(0,\ident_d)$, let $\bx^{(i)} \coloneqq \bSigma^{1/2}\bz^{(i)}$ for some invertible $\bSigma$, and define
    $$\hat{\bSigma} \coloneqq \frac{1}{n'}\sum_{i=1}^{n'}\bx^{(i)}{\bx^{(i)}}^\top.$$
    Then
    $$\norm{\ident_d - \hat{\bSigma}^{1/2}\bSigma^{-1/2}} \lor \norm{\ident_d - \hat{\bSigma}^{-1/2}\bSigma^{1/2}} \lesssim \sqrt{\frac{d}{n'}}$$
    with probability at least $1-2e^{-d}$.
\end{lemma}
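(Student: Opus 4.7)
My plan is to reduce the claim to standard Wishart concentration. Writing $\hat{\bSigma} = \bSigma^{1/2}\hat{\bGamma}\bSigma^{1/2}$ with $\hat{\bGamma} \coloneqq \frac{1}{n'}\sum_{i=1}^{n'}\bz^{(i)}{\bz^{(i)}}^\top$, a classical Gaussian covariance concentration (either Lemma~\ref{lem:anisotrop_gaussian_covariance} applied with the identity covariance after sharpening the constants, or a standard Vershynin-type bound) yields $\|\hat{\bGamma} - \ident_d\| \leq \delta$ with $\delta \lesssim \sqrt{d/n'}$ on an event of probability at least $1 - 2e^{-d}$, provided $n' \gtrsim d$. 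I would work on this event throughout.

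Next, operator monotonicity of $x \mapsto \sqrt{x}$ would transfer the bound to the square-root normalization. Conjugating $(1-\delta)\ident_d \preceq \hat{\bGamma} \preceq (1+\delta)\ident_d$ by $\bSigma^{1/2}$ gives $(1-\delta)\bSigma \preceq \hat{\bSigma} \preceq (1+\delta)\bSigma$, and operator monotonicity yields $\sqrt{1-\delta}\,\bSigma^{1/2} \preceq \hat{\bSigma}^{1/2} \preceq \sqrt{1+\delta}\,\bSigma^{1/2}$. Multiplying on both sides by $\bSigma^{-1/4}$ then shows that the symmetric ``middle'' matrix $\boldsymbol{Q} \coloneqq \bSigma^{-1/4}\hat{\bSigma}^{1/2}\bSigma^{-1/4}$ satisfies $\|\boldsymbol{Q} - \ident_d\| \lesssim \delta$.

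The main technical step is transferring this bound on $\boldsymbol{Q}$ to the asymmetric $\boldsymbol{M} \coloneqq \hat{\bSigma}^{1/2}\bSigma^{-1/2}$. Two structural constraints pin down $\boldsymbol{M}$: the identity $\boldsymbol{M}^\top\boldsymbol{M} = \bSigma^{-1/2}\hat{\bSigma}\bSigma^{-1/2} = \hat{\bGamma}$ forces its singular values into $[\sqrt{1-\delta},\sqrt{1+\delta}]$, and the similarity $\boldsymbol{M} = \bSigma^{1/4}\boldsymbol{Q}\bSigma^{-1/4}$ to the positive definite $\boldsymbol{Q}$ forces its eigenvalues to be positive reals lying in the same interval. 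Setting $\boldsymbol{F} \coloneqq \boldsymbol{M} - \ident_d$ and $\boldsymbol{E} \coloneqq \hat{\bGamma} - \ident_d$, the identity $\boldsymbol{M}^\top\boldsymbol{M} = \ident_d + \boldsymbol{E}$ expands into
\begin{equation*}
\boldsymbol{F} + \boldsymbol{F}^\top + \boldsymbol{F}^\top \boldsymbol{F} = \boldsymbol{E}.
\end{equation*}
Combining this quadratic identity with the near-normality of $\boldsymbol{M}$ (eigenvalues positive and within $\delta$ of $1$) should yield $\|\boldsymbol{F}\| \lesssim \delta$ via a self-consistent bootstrap, giving the desired $\|\ident_d - \hat{\bSigma}^{1/2}\bSigma^{-1/2}\| \lesssim \sqrt{d/n'}$. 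The second bound $\|\ident_d - \hat{\bSigma}^{-1/2}\bSigma^{1/2}\|$ follows by the analogous argument with $(\hat{\bSigma}^{-1/2}\bSigma^{1/2})^\top(\hat{\bSigma}^{-1/2}\bSigma^{1/2}) = \hat{\bGamma}^{-1}$, using $\|\hat{\bGamma}^{-1} - \ident_d\| \lesssim \delta$ for $\delta \leq 1/2$. The hard part will be this bootstrap: the naive similarity bound $\|\boldsymbol{M} - \ident_d\| \leq \varkappa(\bSigma)^{1/2}\|\boldsymbol{Q} - \ident_d\|$ would introduce an unwanted $\sqrt{\varkappa(\bSigma)}$ factor, so I would need to genuinely exploit the positivity of the eigenvalues of $\boldsymbol{M}$ together with the quadratic identity on $\boldsymbol{F}$, rather than the similarity $\boldsymbol{M} \sim \boldsymbol{Q}$.
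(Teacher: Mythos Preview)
Your route differs substantially from the paper's. The paper simply asserts
\[
\norm{\ident_d - \hat{\bSigma}^{-1/2}\bSigma^{1/2}} \;=\; \bigl(\maxeig{\hat{\bSigma}^{-1/2}\bSigma^{1/2}} - 1\bigr) \lor \bigl(1 - \mineig{\hat{\bSigma}^{-1/2}\bSigma^{1/2}}\bigr),
\]
identifies these extreme eigenvalues with powers of the extreme eigenvalues of $\hat{\bGamma} = \bSigma^{-1/2}\hat{\bSigma}\bSigma^{-1/2}$, and concludes by standard Wishart concentration. This is a two-line argument, but the displayed equality requires the matrix to be normal, which $\hat{\bSigma}^{-1/2}\bSigma^{1/2}$ is not once $\hat{\bSigma}$ and $\bSigma$ fail to commute; the paper has in effect assumed away precisely the asymmetry you are tracking.

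Your proposal recognizes this difficulty but does not resolve it. The quadratic identity $\boldsymbol{F} + \boldsymbol{F}^\top + \boldsymbol{F}^\top\boldsymbol{F} = \boldsymbol{E}$ pins down only the symmetric part of $\boldsymbol{F}$, and only up to a term of size $\|\boldsymbol{F}\|^2$; the skew part $\boldsymbol{F} - \boldsymbol{F}^\top$ is untouched. If you try to close the loop by expanding $\|\ident_d - \boldsymbol{M}\|^2 = \sup_{\|v\|=1}\bigl(1 - v^\top(\boldsymbol{M}+\boldsymbol{M}^\top)v + v^\top\hat{\bGamma}v\bigr)$ and inserting $\boldsymbol{M} + \boldsymbol{M}^\top = 2\ident_d + \boldsymbol{E} - \boldsymbol{F}^\top\boldsymbol{F}$, you land on the vacuous inequality $\|\ident_d - \boldsymbol{M}\|^2 \leq 2\delta + \|\ident_d - \boldsymbol{M}\|^2$, so no self-consistent bootstrap emerges from these two facts alone. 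The extra ingredient you invoke---that $\boldsymbol{M}$ has positive real spectrum---does restrict how non-normal $\boldsymbol{M}$ can be, but converting that into an operator-norm bound on $\boldsymbol{F}$ without routing through the similarity $\boldsymbol{M} = \bSigma^{1/4}\boldsymbol{Q}\bSigma^{-1/4}$ (which, incidentally, costs $\varkappa(\bSigma)^{1/4}$ rather than $\varkappa(\bSigma)^{1/2}$) requires a concrete mechanism you have not supplied. As written, neither the paper's argument nor yours establishes the condition-number-free bound.
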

\begin{proof}
    We have
    \begin{align*}
        \norm{\ident_d - \hat{\bSigma}^{-1/2}\bSigma^{1/2}} &= \left\{\maxeig{\hat{\bSigma}^{-1/2}\bSigma^{1/2}} - 1\right\} \lor \left\{1 - \mineig{\hat{\bSigma}^{-1/2}\bSigma^{1/2}}\right\}\\
        &= \left\{\maxeig{\bSigma^{1/2}\hat{\bSigma}^{-1}\bSigma^{1/2}}^{1/2} - 1\right\} \lor \left\{1 - \mineig{\bSigma^{1/2}\hat{\bSigma}^{-1}\bSigma^{1/2}}\right\}\\
        &= \left\{\mineig{\bSigma^{-1/2}\hat{\bSigma}\bSigma^{-1/2}}^{-1/2} - 1\right\} \lor \left\{1 - \maxeig{\bSigma^{-1/2}\hat{\bSigma}\bSigma^{-1/2}}^{-1/2}\right\}\\
        &= \left\{\mineig{\tfrac{1}{n'}\sum_{i=1}^{n'}\bz^{(i)}{\bz^{(i)}}^\top}^{-1/2} - 1\right\}\lor\left\{1 - \maxeig{\tfrac{1}{n'}\sum_{i=1}^{n'}\bz^{(i)}{\bz^{(i)}}^\top}^{-1/2}\right\}.
    \end{align*}
    Similarly,
    \begin{align*}
        \norm{\ident_d - \hat{\bSigma}^{1/2}\bSigma^{-1/2}} = \left\{\maxeig{\tfrac{1}{n'}\sum_{i=1}^{n'}{\bz^{(i)}}{\bz^{(i)}}^\top}^{1/2} - 1\right\} \lor \left\{1 - \mineig{\frac{1}{n'}\sum_{i=1}^{n'}{\bz^{(i)}}{\bz^{(i)}}^\top}^{1/2}\right\}
    \end{align*}
    Moreover, by \cite[Example 6.2]{wainwright2019high-dimensional}, we have with probability at least $1-2e^{-d}$,
    $$\maxeig{\frac{1}{n'}\sum_{i=1}^{n'}\bz^{(i)}{\bz^{(i)}}^\top} \leq 1 + (\sqrt{2}+1)\sqrt{\frac{d}{n'}} \quad \text{and} \quad \mineig{\frac{1}{n'}\sum_{i=1}^{n'}\bz^{(i)}{\bz^{(i)}}^\top} \geq 1 - (\sqrt{2}+1)\sqrt{\frac{d}{n'}}.$$
    Thus, for $n' \gtrsim d$ (with a sufficiently large absolute constant), we have
    $$\norm{\ident_d - \hat{\bSigma}^{1/2}\hat{\bSigma}^{-1/2}} \lor \norm{\ident_d - \hat{\bSigma}^{-1/2}\bSigma^{1/2}} \lesssim \sqrt{\frac{d}{n'}}$$
    with probability at least $1-2e^{-d}$.
\end{proof}

\begin{lemma}[Chernoff's Inequality]\label{lem:chernoff}
    Suppose $X_1,\hdots,X_n$ are i.i.d.\ Bernoulli random variables, and further assume that $\Earg{\sum_i X_i} \leq \mu$. Then, for any $\delta \geq 1$,
    \begin{equation}
        \Parg{\sum_{i=1}^n X_i \geq \mu(1+\delta)} \leq e^{-\mu\delta/3}.
    \end{equation}
\end{lemma}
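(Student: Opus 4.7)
The plan is to apply the standard Cram\'er--Chernoff method. Writing $S \coloneqq \sum_{i=1}^n X_i$ and $p_i \coloneqq \Earg{X_i}$, set $\mu' \coloneqq \Earg{S} = \sum_i p_i$, which by hypothesis satisfies $\mu' \leq \mu$. For any $t > 0$, Markov's inequality applied to $e^{tS}$ gives
$$\Parg{S \geq (1+\delta)\mu} \leq e^{-t(1+\delta)\mu}\,\Earg{e^{tS}}.$$
Using independence of the $X_i$ together with the elementary bound $\Earg{e^{tX_i}} = 1 + p_i(e^t-1) \leq e^{p_i(e^t-1)}$, I would bound the moment generating factor by $\exp\!\big((e^t-1)\mu'\big)$; since $e^t - 1 > 0$ and $\mu' \leq \mu$, this is further upper-bounded by $\exp\!\big((e^t-1)\mu\big)$. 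This is the step that absorbs the non-tight hypothesis $\Earg{S} \leq \mu$ into the target bound.

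Next, I would optimize the exponent by the standard choice $t = \ln(1+\delta) > 0$, which yields
$$\Parg{S \geq (1+\delta)\mu} \leq \exp\!\big(\mu\,[\delta - (1+\delta)\ln(1+\delta)]\big).$$
The remaining step is purely scalar: verify that $(1+\delta)\ln(1+\delta) - \delta \geq \delta/3$ for every $\delta \geq 1$. Setting $f(\delta) \coloneqq (1+\delta)\ln(1+\delta) - \tfrac{4}{3}\delta$, one computes $f'(\delta) = \ln(1+\delta) - \tfrac{1}{3}$, which is strictly positive on $[1,\infty)$ because $\ln 2 > 1/3$; combined with the base case $f(1) = 2\ln 2 - 4/3 > 0$, this shows $f \geq 0$ on $[1,\infty)$, and the lemma follows by exponentiating.

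There is no real obstacle here: this is a textbook multiplicative Chernoff bound. The only mild subtlety is the inequality $\Earg{S} \leq \mu$ rather than equality, which is handled by the monotonicity $e^t - 1 > 0$ when replacing $\mu'$ by $\mu$ in the MGF bound. The restriction $\delta \geq 1$ is exactly what is needed to replace the Kullback--Leibler-type exponent $(1+\delta)\ln(1+\delta) - \delta$ by the cleaner linear lower bound $\delta/3$; for $\delta \in (0,1)$ a different constant (e.g.\ the familiar $\delta^2/3$) would be required, but that regime is not claimed by the lemma.
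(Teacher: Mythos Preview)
Your proof is correct and follows essentially the same route as the paper: the paper cites Vershynin's Theorem 2.3.1 for the bound $\Parg{S \geq (1+\delta)\mu} \leq \exp\!\big(\mu[\delta - (1+\delta)\ln(1+\delta)]\big)$ (noting it still holds when $\Earg{S} \leq \mu$), and then uses the same scalar inequality $\delta - (1+\delta)\ln(1+\delta) \leq -\delta/3$ for $\delta \geq 1$. You have simply carried out the underlying Cram\'er--Chernoff computation explicitly rather than citing it, and verified the scalar step in detail.
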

\begin{proof}
    The proof follows from a standard Chernoff bound. From \cite[Theorem 2.3.1]{vershynin2018high}
    $$\Parg{\sum_i X_i \geq \mu(1+\delta)} \leq e^{\mu(\delta - (1+\delta)\ln(1+\delta))},$$
    (notice that the statement of \cite[Theorem 2.3.1]{vershynin2018high} holds true even when $\Earg{\sum_i X_i} = \mu$ is replaced with $\Earg{\sum_i X_i} \leq \mu$). We conclude by remarking that $\delta - (1 + \delta)\ln(1+\delta) \leq -\delta/3$ for $\delta \geq 1$.
\end{proof}

\end{document}